\documentclass{article}

\usepackage{arxiv}

\usepackage[utf8]{inputenc}
\usepackage[T1]{fontenc}
\usepackage{amsmath,amssymb,amsfonts,mathtools}
\usepackage{amsthm}
\usepackage{booktabs}
\usepackage{placeins}
\usepackage{longtable}
\usepackage{multirow}
\usepackage{array}
\usepackage{graphicx}
\usepackage[caption=false,font=footnotesize]{subfig}
\usepackage{xcolor}
\usepackage{url}
\usepackage[hypertexnames=false]{hyperref}
\usepackage[numbers]{natbib}
\usepackage{doi}
\usepackage{algorithm}
\usepackage{algpseudocode}
\usepackage{enumitem}
\usepackage{microtype}
\usepackage{listings}
\usepackage{needspace}
\usepackage{cleveref}

\title{Metadata-Free Meta-Reweighted Direct Preference Optimization under Noisy Preference Labels}

\author{
\normalfont Hua Qu\\
Xi'an Jiaotong University\\
\texttt{qh@mail.xjtu.edu.cn}
\and
\normalfont Yifan Li\\
Xi'an Jiaotong University\\
\texttt{4123158005@stu.xjtu.edu.cn}
\and
\normalfont Xiaodong Yuan\\
Xi'an Jiaotong University\\
\texttt{xiaodongyuan@163.com}
}

\date{}

\hypersetup{
pdftitle={Metadata-Free Meta-Reweighted Direct Preference Optimization under Noisy Preference Labels},
pdfsubject={Direct Preference Optimization; noisy preferences; meta-learning},
pdfauthor={Hua Qu, Yifan Li, Xiaodong Yuan},
pdfkeywords={Direct Preference Optimization, preference noise, meta-learning, prompt augmentation consistency, finite difference, LoRA}
}

\newcommand{\Dtrain}{D_{\mathrm{train}}}
\newcommand{\Ptrain}{P_{\mathrm{train}}}
\newcommand{\Pmeta}{P_{\mathrm{meta}}}
\newcommand{\piref}{\pi_{\mathrm{ref}}}
\newcommand{\logit}{\operatorname{logit}}
\newcommand{\E}{\mathbb{E}}
\newcommand{\Prob}{\mathbb{P}}
\newcommand{\R}{\mathbb{R}}
\newcommand{\Z}{\mathcal{Z}}
\newcommand{\Acal}{\mathcal{A}}

\newtheorem{theorem}{Theorem}
\newtheorem{assumption}{Assumption}
\newtheorem{proposition}{Proposition}

\begin{document}
\maketitle

\begin{abstract}
Direct Preference Optimization (DPO) has become an important method for aligning large language models (LLMs) with human preferences because it removes the need for explicit reward modeling and reinforcement learning. However, its performance depends heavily on the quality of preference data, and noisy preference data in real-world settings can weaken alignment performance. To address this issue, we propose a bilevel optimization framework and prove, under some idealized conditions, that this framework can recover the DPO optimum under clean data. We further derive a prior form for the learnable weighting function under label-flipping noise. Considering that high-quality metadata may be difficult to obtain, we propose a prompt augmentation consistency method that enables meta-learning even when metadata is completely unavailable. To reduce the high cost of higher-order gradients in LLM meta-learning, we combine central-difference approximation with LoRA fine-tuning and develop a scalable training scheme. Experiments on TL;DR summarization and Anthropic Helpful and Harmless dialogue show that the proposed method improves alignment performance over multiple DPO baselines under different noise rates.
\end{abstract}

\keywords{
Direct preference optimization, noisy preference data, meta-learning, prompt augmentation consistency, central-difference approximation.
}

\section{Introduction}

Aligning large language models (LLMs) with human preferences is important \cite{touvron2023llama,team2023gemini}. Reinforcement learning from human feedback (RLHF) has been shown to be effective for aligning LLMs with human preferences \cite{ouyang2022training}. A typical RLHF pipeline first obtains an SFT policy through supervised fine-tuning, then learns a reward model (RM) from human preference data, and finally optimizes the policy with reinforcement learning algorithms such as PPO while constraining its distance from a reference model to avoid excessive policy drift.

Although RLHF plays an important role in LLM alignment, its training pipeline is relatively complex. It requires training an additional reward model and policy model, and the reinforcement learning stage involves frequent sampling, which leads to high computational and memory costs \cite{kaufmann2023survey,zheng2023secrets}. Direct Preference Optimization (DPO) provides a simpler alternative for preference alignment: it directly optimizes the LLM from human preference data without explicitly learning a reward model and avoids the complexity of reinforcement learning \cite{rafailov2023direct}.

Despite greatly simplifying the traditional RLHF pipeline, DPO still depends heavily on the quality of preference data. Prior studies have shown that preference-data noise can degrade both the stability and final performance of alignment training \cite{gao2024impact,wang2024secrets}. Therefore, effective DPO learning under noisy preference data has become an important problem in preference alignment.

Existing work has mainly addressed this problem by improving the noise robustness of DPO through loss correction, label smoothing, distributionally robust optimization, or noisy-sample filtering \cite{chowdhury2024provably,wu2025towards,bukharin2024robust,mitchell2023note,liang2024ropo,kong2024perplexity}. However, these methods often rely on predefined noise forms, fixed reweighting rules, or heuristic reliability estimates, which makes it difficult to simultaneously maintain interpretability and training effectiveness under complex noise conditions.

Recently, several studies have introduced meta-learning into preference optimization to adaptively characterize noisy preference samples and learn sample-level weights \cite{li2026aligner,li2026learning}. These methods show that meta-learning can provide a more flexible mechanism for weighting noisy preference samples than fixed heuristic rules.

However, existing meta-learning-based weighted preference optimization methods mainly learn sample weights from training dynamics, model diagnostic signals, or validation-set performance. They do not systematically explain, from the perspective of the discrepancy between noisy and clean conditional preference posteriors, why a bilevel weighting mechanism can correct the optimal-solution shift caused by noisy preferences in DPO. Moreover, these methods usually still require a validation set with relatively reliable preference labels or a small amount of clean metadata to provide the outer-level optimization signal. In real LLM alignment scenarios, constructing high-quality meta-preference data often requires additional annotation, model review, or manual cleaning, which can be costly or even unavailable.

To this end, we revisit DPO learning under noisy preferences from the perspectives of conditional risk and posterior shift, and propose a prompt-augmentation-consistency meta-reweighted DPO method that does not rely on clean meta-preference labels, termed PACMR-DPO. The contributions of this paper are summarized as follows:

\begin{itemize}[leftmargin=*]
    \item We prove that, under idealized conditions, bilevel optimization can recover the DPO optimum under clean data from noisy training data, thereby providing a theoretical basis for learnable weight parameterization.
    \item We propose a prompt-augmentation-consistency meta-reweighted DPO (PACMR-DPO) method that does not require clean metadata. Instead of relying on clean meta-preference labels, the method injects task-agnostic prompt augmentation consistency as meta-knowledge into the outer-level optimization.
    \item To address the high cost of higher-order gradients in LLM meta-learning, we propose a scalable training scheme that combines central-difference approximation with perturbations in the LoRA parameter space, reducing the memory and computation overhead of bilevel optimization.
    \item We evaluate the proposed method on TL;DR summarization and Anthropic HH dialogue, and compare it with multiple DPO baselines under different random-flip noise rates. Experiments show that PACMR-DPO achieves more pronounced improvements under medium and high noise rates.
\end{itemize}

\section{Related Work}

\subsection{Direct Preference Optimization}

RLHF has become an important approach for aligning LLMs with human preferences \cite{ouyang2022training}, but its training process usually consists of SFT, RM learning, and reinforcement-learning fine-tuning, making it complex and computationally demanding. Against this background, DPO provides a simpler alternative to RLHF \cite{rafailov2023direct}. For preference data $(x,y_a,y_b)$, the DPO loss is defined as
\begin{align}
\mathcal{L}_{\mathrm{DPO}}(\omega)
&=-\E_{\substack{(x,y_a,y_b)\\\sim \Dtrain}}
\Bigg[\log\sigma\Bigg(
\beta \log\frac{\pi_{\omega}(y_a\mid x)}{\piref(y_a\mid x)}
\nonumber\\[-0.25ex]
&\hspace{7.2em}
-\beta \log\frac{\pi_{\omega}(y_b\mid x)}{\piref(y_b\mid x)}
\Bigg)\Bigg].
\end{align}
where $\sigma(r)=1/(1+\exp(-r))$ is the sigmoid function and $\beta>0$ is the DPO temperature parameter.

DPO avoids explicit reward modeling and reinforcement learning, thereby substantially reducing the training cost of preference alignment.

\subsection{Preference Learning with Noisy Labels}

Prior studies have shown that preference data used for LLM training can contain a non-negligible amount of noise, and that alignment performance decreases as the noise rate increases \cite{gao2024impact}. To address this issue, cDPO smooths the hard labels in DPO into soft labels using the noise rate, reducing the damage caused by incorrectly labeled preference samples \cite{mitchell2023note}; rDPO constructs a debiased weighted loss to offset the effect of incorrect preference annotations on DPO training \cite{chowdhury2024provably}; Dr.DPO combines distributionally robust optimization with DPO to improve robustness to noisy data \cite{wu2025towards}; ROPO suppresses gradients from high-uncertainty samples and alternates between noise-tolerant training and noisy-sample filtering to improve robustness \cite{liang2024ropo}; R3M models corrupted preference labels as sparse outliers \cite{bukharin2024robust}; and PerpCorrect detects and corrects noisy labels through perplexity differences within preference pairs \cite{kong2024perplexity}.

Unlike these methods, we model sample reliability in noisy preference learning as a learnable sample-level weighting function, and use bilevel optimization to adaptively adjust the influence of training samples on LLM updates according to an outer-level objective.

\subsection{Meta Learning for Preference Reweighting}

MetaPO\cite{li2026learning} argues that sample value is not determined solely by the current model state, but changes dynamically during training. It therefore uses three temporal dynamic features - reward-margin evolution, learning fluctuation, and deviation from the reference model - to learn time-aware weights for each preference sample . Aligner, Diagnose Thyself\cite{li2026aligner} argues that preference-sample reliability cannot be fully captured by a single heuristic indicator. It constructs diagnostic vectors from preference consistency, learning difficulty, and generation confidence, and uses a meta-learning network to fuse these intrinsic feedback signals into sample-level weights.

Unlike these methods, this paper does not primarily rely on empirical dynamic features or diagnostic signals to define sample reliability. Instead, it starts from the posterior shift between noisy and clean preferences and analyzes why bilevel optimization can correct the optimal-solution shift caused by noisy preferences in DPO. Based on this analysis, we further design a prior for the learnable weighting function and replace the clean-metadata-dependent outer supervision objective with a prompt-augmentation-consistency objective.

\subsection{Meta Learning Sample Reweighting under Noisy Supervision}

In noisy-label learning, meta-learning-based sample reweighting is an important class of methods. Meta-Weight-Net (MWN)\cite{shu2019meta} adaptively reduces the influence of noisy or difficult samples by learning an explicit mapping from sample loss to sample weight . CMW-Net\cite{shu2023cmw} extends MWN by learning class-aware sample-weighting functions to explicitly model class-dependent heterogeneous noise and imbalance . DAC-MR\cite{shu2026dac} addresses the difficulty of obtaining high-quality metadata, which may be noisy or even unavailable, by introducing data augmentation consistency as an alternative meta-objective and using task-agnostic meta-knowledge to compensate for traditional metadata-driven meta-learning . 

We transfer this idea to DPO-based preference optimization and construct a prompt augmentation consistency objective around preference pairs.

\section{Problem Formulation}

\subsection{Noisy Pairwise Preference Setting}

Let $\mathcal{Z}$ denote the space of preference examples, where
each element is a triplet $z=(x,y_a,y_b)$ consisting of a prompt
$x$ and two candidate responses $y_a$ and $y_b$. Let $Z$ be a
random variable taking values in $\mathcal{Z}$, and let $z$
denote a realization of $Z$. Let the latent clean preference label be $Y\in\{0,1\}$, where $Y=0$ means $y_a\succ y_b$ and $Y=1$ means $y_a\prec y_b$. In practice, the model often observes a noisy label $\widetilde{Y}\in\{0,1\}$ rather than the clean label.

To describe the sampling process for the training and metadata sets, we use $m=0$ to indicate that a sample is selected into the metadata set, and $t=0$ to indicate that a sample is selected into the training set.

\begin{assumption}[Label-independent sampling]
\label{assump:sample-selection}
We assume that the probability of a sample entering the metadata set or training set is independent of the latent clean preference label, and that every sample has positive probability of being selected into both sets. Specifically, for all $z\in\Z$ and $y\in\{0,1\}$,
\begin{align}
\Prob(m=0\mid Y=y,Z=z)&=\Prob(m=0\mid Z=z)>0,\\
\Prob(t=0\mid Y=y,Z=z)&=\Prob(t=0\mid Z=z)>0.
\end{align}
\end{assumption}

Define
\begin{align}
\eta(z)&=\Prob(Y=0\mid Z=z)\in(0,1),\\
\eta_{\mathrm{meta}}(z)&=\Prob(Y=0\mid Z=z,m=0)\in(0,1),\\
\eta_{\mathrm{train}}(z)&=\Prob(Y=0\mid Z=z,t=0)\in(0,1).
\end{align}
Under \cref{assump:sample-selection}, the clean preference posterior is identical under the population distribution, training distribution, and meta distribution.

\begin{proposition}[Posterior invariance under label-independent sampling]
\label{prop:posterior-invariance}
Under \cref{assump:sample-selection}, for any $z\in\Z$, we have
\begin{equation}
\eta_{\mathrm{train}}(z)=\eta_{\mathrm{meta}}(z)=\eta(z).
\end{equation}
\end{proposition}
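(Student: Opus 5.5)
The plan is a direct application of Bayes' rule, conditioning additionally on $Z=z$, followed by the selection-independence in \cref{assump:sample-selection}. I treat the training case; the metadata case is identical with $m$ in place of $t$.

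\textbf{Step 1 (Bayes' rule).} For a fixed $z\in\Z$, I would write
\begin{equation*}
\eta_{\mathrm{train}}(z)=\Prob(Y=0\mid Z=z,t=0)=\frac{\Prob(t=0\mid Y=0,Z=z)\,\Prob(Y=0\mid Z=z)}{\Prob(t=0\mid Z=z)}.
\end{equation*}
This is legitimate because the denominator satisfies $\Prob(t=0\mid Z=z)>0$ by \cref{assump:sample-selection}.

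\textbf{Step 2 (cancellation).} The assumption also gives $\Prob(t=0\mid Y=0,Z=z)=\Prob(t=0\mid Z=z)$. The ratio therefore cancels, leaving $\eta_{\mathrm{train}}(z)=\Prob(Y=0\mid Z=z)=\eta(z)$.

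\textbf{Step 3 (metadata set).} Repeating Steps 1 and 2 with the event $m=0$ gives $\eta_{\mathrm{meta}}(z)=\eta(z)$. Chaining the two equalities yields the claim.

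\textbf{Main obstacle.} The algebra is trivial; the only delicate point is measure-theoretic. If $Z$ is continuous, e.g.\ over a space of text sequences, conditioning on the event $Z=z$ must be read through densities or regular conditional probabilities. I would first handle discrete $\Z$, which is the natural setting for token sequences, so that every conditional is an elementary ratio.

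In general I would replace $\Prob(Z=z)$ by the marginal density of $Z$ and apply the same Bayes identity to joint densities of $(Y,Z)$ restricted to $t=0$. Equivalently, one can show that $\Prob(Y=0, Z\in B, t=0)=\int_B \eta(z)\Prob(t=0\mid Z=z)\,dP_Z(z)$ for every measurable $B$. This identifies $\eta$ as a version of the conditional probability under the training distribution, and the identification holds almost everywhere.

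Positivity of the selection probabilities ensures that the conditional distributions of $Z$ given $t=0$ and given $m=0$ are mutually absolutely continuous with $P_Z$. Hence the almost-everywhere equality transfers across the population, training, and meta distributions.
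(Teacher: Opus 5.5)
Your proof is correct and follows the paper's argument: Bayes' rule conditioned on $Z=z$, followed by cancellation via the label-independent selection assumption. The only cosmetic difference is that the paper expands the denominator by total probability as $\sum_{y}\Prob(m=0\mid Y=y,Z=z)\Prob(Y=y\mid Z=z)$, whereas you write it directly as $\Prob(t=0\mid Z=z)$. Your measure-theoretic extension to non-discrete $\Z$ goes beyond what the paper does (it treats all conditionals pointwise) and is sound.
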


\begin{proof}
By Bayes' rule and \cref{assump:sample-selection},
{\footnotesize
\begin{align}
\eta_{\mathrm{meta}}(z)
&=\Prob(Y=0\mid Z=z,m=0)\\
&=\frac{\Prob(m=0\mid Y=0,Z=z)\Prob(Y=0\mid Z=z)}
{\sum_{y\in\{0,1\}}\Prob(m=0\mid Y=y,Z=z)\Prob(Y=y\mid Z=z)}\\
&=\frac{\Prob(m=0\mid Z=z)\eta(z)}
{\Prob(m=0\mid Z=z)[\eta(z)+1-\eta(z)]}\\
&=\eta(z).
\end{align}
}
Similarly, replacing $m$ with $t$ gives $\eta_{\mathrm{train}}(z)=\eta(z)$. Therefore, $\eta_{\mathrm{train}}(z)=\eta_{\mathrm{meta}}(z)=\eta(z)$.
\end{proof}

We further consider a general noisy conditional probability $q:\Z\to(0,1)$, where
\begin{equation}
q(z)=\Prob(\widetilde{Y}=0\mid Z=z,t=0),
\end{equation}
which denotes the conditional probability that sample $z$, after being selected into the training set and processed by the noise mechanism, is observed with label $\widetilde{Y}=0$. In preference learning, $\eta(z)$ is the conditional preference distribution over sample pair $z$, and both reward modeling and DPO can be viewed as estimating this conditional preference distribution.

\subsection{DPO as Conditional Preference Risk}
Let the implicit reward difference in DPO be 
\begin{equation}
    u(z)=\beta(\log\frac{\pi(y_a\mid x)}{\piref(y_a\mid x)}-\log\frac{\pi(y_b\mid x)}{\piref(y_b\mid x)})\in\R.
\end{equation}
Under clean preference training data, the DPO conditional risk can be written as (because $\eta_{\mathrm{train}}(z)=\eta(z)$)
\begin{align}
\mathcal{L}_{\mathrm{DPO}}(u(z),\eta(z))
={}&-\eta(z)\log\sigma(u(z))\nonumber\\
&-(1-\eta(z))\log\sigma(-u(z)).
\label{eq:dpo-conditional-risk}
\end{align}
The conditional risk is the pointwise risk obtained by fixing the sample pair $z$ and taking the expectation only over the conditional distribution of the latent preference label $Y$. Minimizing this risk yields the pointwise optimal implicit reward difference under the clean preference distribution,
\begin{equation}
 u^{*}_{\mathrm{DPO\text{-}clean}}(z)=\log\frac{\eta(z)}{1-\eta(z)}.
 \label{eq:clean-dpo-optimum}
\end{equation}
Correspondingly, under the noisy posterior $q(z)$, the pointwise optimum of standard DPO becomes
\begin{equation}
 u^{*}_{\mathrm{DPO\text{-}noisy}}(z)=\log\frac{q(z)}{1-q(z)}.
 \label{eq:noisy-dpo-optimum}
\end{equation}
This shows that when $q(z)\neq\eta(z)$, the implicit reward difference learned by standard DPO under the noisy distribution is shifted relative to that under the clean distribution. This result is similar to previous studies\cite{liang2024ropo}. The meaning of the conditional risk and the derivation of the optimum are given in Appendix~\ref{app:conditional-risk}.

\subsection{Sample Weighted DPO}

To correct this shift, we introduce a sample-dependent positive weighting function $g(z)$. If the observed label is $\widetilde{Y}=0$, we denote the weight as $g^+(z)$; if the observed label is $\widetilde{Y}=1$, we denote the weight as $g^-(z)$. The weighted DPO conditional risk is then
\begin{align}
&\mathcal{L}_{\mathrm{W\text{-}DPO}}
\bigl(u(z),q(z),g^+(z),g^-(z)\bigr)
\nonumber\\
&\quad=-q(z)g^+(z)\log\sigma(u(z))
\nonumber\\
&\qquad -(1-q(z))g^-(z)\log\sigma(-u(z)).
\end{align}
Its optimum is
\begin{equation}
u^{*}_{\mathrm{W\text{-}DPO\text{-}noisy}}(z)=\log\frac{q(z)}{1-q(z)}+\log\frac{g^+(z)}{g^-(z)}.
\end{equation}
Therefore, if the weight ratio $g^+(z)/g^-(z)$ is properly designed, it can offset the optimal-solution shift induced by noise.

\section{Theoretical Analysis}

\subsection{Clean Optimum Recovery by Bilevel Reweighting}

Based on the above observation, we formulate noisy DPO as a bilevel optimization problem: the inner level learns a weighted DPO solution under the noisy training distribution, while the outer level constrains the weighting function through a meta-objective.
\begin{align}
\min_{g^+>0,g^->0}\quad
&\E_{z'\sim\Pmeta}\Big[
-\eta_{\mathrm{meta}}(z')
\log\sigma\bigl(u^*(g^+,g^-,z')\bigr)
\nonumber\\
&\quad -(1-\eta_{\mathrm{meta}}(z'))
\log\sigma\bigl(-u^*(g^+,g^-,z')\bigr)\Big]
\nonumber\\
\mathrm{s.t.}\quad
&u^*(g^+,g^-,z)=\arg\min_u
\E_{z\sim\Ptrain}\Big[
\nonumber\\[-0.25ex]
&\quad -q(z)g^+(z)\log\sigma(u(z))
\nonumber\\[-0.25ex]
&\quad -(1-q(z))g^-(z)\log\sigma(-u(z))\Big].
\end{align}
The following theorem shows that, under idealized conditions, the optimal weights obtained by this bilevel optimization can recover the pointwise optimum of clean DPO.
\begin{theorem}[Clean-optimum recovery by ideal bilevel reweighting]
\label{thm:clean-recovery}
Under \cref{assump:sample-selection}, let $(g^{*,+},g^{*,-})$ be an optimal solution to the bilevel optimization problem, and let $u^*(g^{*,+},g^{*,-},z)$ be the inner-level pointwise optimal implicit reward difference induced by it. Then
\begin{equation}
 u^*(g^{*,+},g^{*,-},z)=
 u^{*}_{\mathrm{DPO\text{-}clean}}(z)=
 \log\frac{\eta(z)}{1-\eta(z)}.
\end{equation}
Equivalently, the log-ratio of the optimal weights obtained by bilevel optimization satisfies
\begin{equation}
\log\frac{g^{*,+}(z)}{g^{*,-}(z)}=
\log\frac{\eta(z)}{1-\eta(z)}-
\log\frac{q(z)}{1-q(z)}.
\label{eq:main-optimal-weight-ratio}
\end{equation}
\end{theorem}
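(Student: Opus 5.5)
The plan is to solve the inner problem in closed form, substitute it into the outer problem, and then check that the outer objective can reach its pointwise lower bound.

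\textbf{Step 1 (inner problem).} The inner objective is an expectation over $\Ptrain$ of a function of the single value $u(z)$. Since $u$ ranges over arbitrary functions (the idealized setting), the expectation is minimized by minimizing the integrand pointwise. For fixed $z$, the map
\[
u\mapsto -q(z)g^+(z)\log\sigma(u)-(1-q(z))g^-(z)\log\sigma(-u)
\]
is strictly convex. Its derivative is $-q g^+(1-\sigma(u))+(1-q)g^-\sigma(u)$, and setting this to zero gives the weighted optimum already stated:
\[
u^*(g^+,g^-,z)=\log\frac{q(z)}{1-q(z)}+\log\frac{g^+(z)}{g^-(z)}.
\]
This holds for every $z$ in the support of $\Ptrain$. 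To let the outer problem act on meta points, I will use the positivity part of \cref{assump:sample-selection}, under which every $z$ has positive probability in both sets, so the two supports agree.

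\textbf{Step 2 (outer problem).} By \cref{prop:posterior-invariance}, $\eta_{\mathrm{meta}}(z')=\eta(z')$. The outer objective is then $\E_{z'\sim\Pmeta}[\mathcal{L}_{\mathrm{DPO}}(u^*(z'),\eta(z'))]$, where $\mathcal{L}_{\mathrm{DPO}}$ is the clean conditional risk \eqref{eq:dpo-conditional-risk}. For each $z'$, this risk is strictly convex in $u$ with unique minimizer $\log\frac{\eta(z')}{1-\eta(z')}$, by \eqref{eq:clean-dpo-optimum}. The outer objective is therefore bounded below by $\E_{\Pmeta}[\min_u \mathcal{L}_{\mathrm{DPO}}(u,\eta(z'))]$.

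\textbf{Step 3 (attainability and characterizing the optimum).} Choose
\[
\frac{g^+(z)}{g^-(z)}=\frac{\eta(z)(1-q(z))}{(1-\eta(z))q(z)},
\]
for instance with $g^-\equiv 1$. This ratio is positive and finite because $\eta,q\in(0,1)$. With this choice, Step 1 gives $u^*=\log\frac{\eta}{1-\eta}$, so the lower bound is attained, and any optimal $(g^{*,+},g^{*,-})$ must also attain it. If $u^*(g^{*,+},g^{*,-},z')$ differed from $\log\frac{\eta}{1-\eta}$ on a set of positive $\Pmeta$-measure, strict convexity would make the objective strictly larger than the bound, which is a contradiction. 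Hence equality holds $\Pmeta$-a.s., and therefore for every $z$ in the common support. Substituting the Step 1 formula into this equality and rearranging gives \eqref{eq:main-optimal-weight-ratio}.

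\textbf{Main obstacle.} The delicate point is the pointwise-to-expectation reduction in Steps 1 and 3. It requires the function class for $u$ to be unrestricted, so the inner argmin is taken pointwise, and it requires the meta and training supports to coincide, so that pinning $u^*$ on meta points determines the weights on training points. I will state these as part of the ``idealized conditions''. I will also note that the conclusion holds only up to $\Pmeta$-null sets; with a discrete $\Z$ and the positive-probability assumption, this gives the pointwise statement exactly.
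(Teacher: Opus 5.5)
Your proposal is correct and follows essentially the same route as the paper's proof. You solve the weighted inner risk pointwise to get $u^*(g^+,g^-,z)=\log\frac{q(z)}{1-q(z)}+\log\frac{g^+(z)}{g^-(z)}$, use $\eta_{\mathrm{meta}}=\eta$ to identify the unique pointwise outer minimizer $\log\frac{\eta(z)}{1-\eta(z)}$, and substitute back to read off the weight log-ratio; your explicit attaining weights, the almost-sure uniqueness argument via strictly positive excess risk, and the observation that the positivity in \cref{assump:sample-selection} makes the training and meta supports coincide make rigorous what the paper only asserts under its pointwise-separability and expressive-function-space idealizations.
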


\noindent
\Cref{thm:clean-recovery} shows that, in the ideal case, the log-ratio of the optimal weights obtained by bilevel optimization can cancel the noise-induced shift. The complete bilevel objective, the pointwise separability argument, and the proof are given in Appendix~\ref{app:clean-recovery}.

\subsection{Weight-Prior Construction under a General Label-Flipping Model}

Although the noisy preference posterior $q(z)$ and the clean preference posterior $\eta(z)$ are usually unknown, so that the corresponding optimal weighting function cannot be directly constructed, we can use the form of the theoretically optimal weight log-ratio to design a general prior for the learnable weighting function.

Under the general two-rate label-flipping model, the observed posterior is,
\begin{equation}
q(z)=(1-\varepsilon_0(z))\eta(z)+\varepsilon_1(z)(1-\eta(z)),
\label{eq:main-asym-noise}
\end{equation}
where $\varepsilon_0(z)=\Prob(\widetilde{Y}=1\mid Y=0,z)$ and $\varepsilon_1(z)=\Prob(\widetilde{Y}=0\mid Y=1,z)$. By \cref{thm:clean-recovery}, the optimality condition only requires the weight ratio $g^+(z)/g^-(z)$ to satisfy \cref{eq:main-optimal-weight-ratio}. Therefore, the optimal weight construction is not unique.

We adopt a class of bounded sigmoid weight constructions that separately extract the clean preference strength $\logit(\eta(z))=\log\frac{\eta(z)}{1-\eta(z)}$ and separate noise-rate-related terms from mixed terms that depend on both preference strength and noise rate. Specifically, let $s(z)=\logit(\eta(z)),\boldsymbol{\varepsilon}(z)
=
\bigl(\varepsilon_0(z),\varepsilon_1(z)\bigr)$. We can construct
\begin{align}
 g^{*,+}(z)&=\sigma\left(a_+(s(z),\boldsymbol{\varepsilon}(z))s(z)+b_+(\boldsymbol{\varepsilon}(z))\right),\\
 g^{*,-}(z)&=\sigma\left(-a_-(s(z),\boldsymbol{\varepsilon}(z))s(z)+b_-(\boldsymbol{\varepsilon}(z))\right),
\end{align}
so that the optimal weight ratio is satisfied. The complete construction and derivation are given in Appendix~\ref{app:asym-weight}.

\section{PACMR-DPO: Prompt Augmentation Consistency Meta Reweighted DPO}

\subsection{Learnable Weight Function}

According to the theoretical analysis, under the optimal weights we have
\begin{equation}
u^{*}_{\mathrm{W\text{-}DPO\text{-}noisy}}(z)=u^{*}_{\mathrm{DPO\text{-}clean}}(z)=s(z).
\end{equation}

Therefore, although samples are drawn from the noisy training set, we can still use the current model's implicit reward difference $u(z)$ as an approximation of the clean preference strength. In addition to the implicit reward difference $u(z)$, we also consider the implicit reward sum $\Delta(z)$. On the one hand, $\Delta(z)$ can reflect the overall quality of the two candidate responses. On the other hand, the combination of $u(z)$ and $\Delta(z)$ contains the implicit reward information of both candidate responses and may carry signals related to the sample noise rate.

Moreover, as the preference relations in the actual training data are provided, the two branches $g^+$ and $g^-$ can be unified into a single weighting network.
Based on this observation, we learn a mapping
\begin{equation}
N_{\Theta}:[\Delta(z),u(z)]\mapsto [a(z),b(z)],
\end{equation}
and express the learnable weighting function as
\begin{equation}
g(z;\Theta)=\sigma(a(z;\Theta)u(z)+b(z;\Theta)),
\end{equation}
where $N_{\Theta}$ is an MLP and $\Theta$ denotes its parameters. Since $u(z)$ is not equal to $\logit(\eta(z))$ during training, we use $a(z;\Theta)$ and $b(z;\Theta)$ to dynamically calibrate $u(z)$.

\subsection{Prompt Augmentation Consistency without Clean Meta-Preference Labels}

To apply the bilevel optimization framework in actual training, an effective outer-level optimization signal is required. Traditional meta-learning frameworks usually use a small amount of clean metadata to update the meta-network \cite{franceschi2018bilevel,shu2019meta,hospedales2021meta}. However, in LLM preference alignment, high-quality clean meta-preference data may be difficult to obtain. Therefore, we replace the outer-level objective with a prompt augmentation consistency objective, the inspiration was drawn from \cite{shu2026dac}:
\begin{equation}
\Theta^*=\arg\min_{\Theta} MR^{\mathrm{pac}}(D;\omega^*(\Theta),A),\quad A\in\Acal,
\end{equation}
where
\begin{align}
&MR^{\mathrm{pac}}(D;\omega^*(\Theta),A)
\nonumber\\[-0.25ex]
&\quad=\frac{1}{|D|}\sum_{i=1}^{|D|}\rho\Big(
 f(z_i;\omega^*(\Theta)),
\nonumber\\[-0.25ex]
&\hspace{8em}f(A(z_i);\omega^*(\Theta))\Big),
\end{align}
here $D\subset \Dtrain$ is a subset sampled from the noisy training data, $f$ denotes an output of the task model, $\omega^*(\Theta)$ denotes the model parameters obtained by the inner optimization, $A$ is a semantics-preserving augmentation transformation, and $\rho$ is a consistency metric in the output space.

For clarity, we distinguish two types of notation for response pairs. 
When a preference label is used in the inner training objective, we denote the ordered pair as 
$(x_i, y_i^c, y_i^r)\in D_{train}$, where $y_i^c$ and $y_i^r$ are the chosen and rejected responses according to the observed preference label in the training data. 
In contrast, when constructing the prompt-augmentation consistency objective, we do not rely on the preference label of the pair. 
Therefore, we denote the two candidate responses as $y_i^a$ and $y_i^b$, and write the corresponding example as 
$(x_i,y_i^a,y_i^b)\in D$. 
This notation only indicates two candidate responses associated with the same prompt, and does not assume that $y_i^a$ is preferred over $y_i^b$.

We refer to the learnable weighting function as VNet, denoted as
\[
V(\Delta(z;\omega),u(z;\omega);\Theta)
=
g(z;\Theta).
\]

Let $\ell_i^{\mathrm{train}}(\omega)=\mathcal{L}_{\mathrm{DPO}}(z_i;\omega), z_i \in D_{\mathrm{train}}$. Then the prompt augmentation consistency problem can be written as

\begin{align}
\Theta^*=\arg\min_{\Theta} MR^{\mathrm{pac}}(D;\omega^*(\Theta),A),\quad A\in\Acal\\
\mathrm{s.t.}\omega^*(\Theta)=\arg\min_{\omega}\mathcal{L}^{\mathrm{train}}(\Dtrain;\omega,V_{\Theta})
\end{align}
where
\begin{align}
&\mathcal{L}^{\mathrm{train}}(\Dtrain;\omega,V_{\Theta})
\nonumber\\[-0.25ex]
&\quad=\frac{1}{|\Dtrain|}\sum_{i=1}^{|\Dtrain|}
V\bigl(\Delta(z_i;\omega),u(z_i;\omega);\Theta\bigr)
\nonumber\\[-0.25ex]
&\hspace{11em}\cdot\ell_i^{\mathrm{train}}(\omega).
\end{align}

For a preference triplet $z_i$, we use back-translation as the text augmentation transformation \cite{sennrich2016improving,edunov2018understanding,xie2020unsupervised}, and apply it only to the prompt:
\begin{equation}
A(z_i)=(A(x_i),y_i^a,y_i^b).
\end{equation}

We use the English $\rightarrow$ Chinese $\rightarrow$ English back-translation path.
Based on the Bradley-Terry Model\cite{bradley1952rank}, Candidate responses are not back-translated because generative rewriting of candidate responses may introduce uncontrollable and substantial distribution shifts and may damage the latent preference relation .

\subsection{Pseudo-Labels and Confidence Filtering}
For a sample pair $z_i$, we can construct a binary distribution based on the implicit reward difference $u(z_i)$
\begin{equation}
    \mathbf{p}(z_i;\omega)=[\sigma(u(z_i)),1-\sigma(u(z_i))].
\end{equation}

This distribution can be understood as the model's prediction of the preference relationship between the candidate answers $y_i^a$ and $y_i^b$ in the current sample $z_i$.

Based on the above binary distribution, we construct pseudo-labels for the outer-level data as
\begin{equation}
\widehat{y}_i(\omega)=
\begin{cases}
0,& \sigma(u(z_i;\omega))\geq \tau,\\
1,& \sigma(u(z_i;\omega))\leq 1-\tau,
\end{cases}
\end{equation}
where $0.5\leq\tau<1$ is the confidence threshold. Samples that do not satisfy either condition are excluded from the outer-level meta-loss. The outer-level objective can then be written as
\begin{align}
\Theta^*=\arg\min_{\Theta}
-\frac{1}{|\widetilde{D}|}\sum_{i\in\widetilde{D}}
\Big[&\mathbf{1}\bigl(\widehat{y}_i(\widetilde{\omega}^*(\Theta))=0\bigr)
\nonumber\\[-0.25ex]
&\cdot\log\sigma\bigl(u(A(z_i);\omega^*(\Theta))\bigr)
\nonumber\\
+&\mathbf{1}\bigl(\widehat{y}_i(\widetilde{\omega}^*(\Theta))=1\bigr)
\nonumber\\[-0.25ex]
&\cdot\log\sigma\bigl(-u(A(z_i);\omega^*(\Theta))\bigr)\Big],
\end{align}
where $\widetilde{\omega}^*(\Theta)$ is a fixed copy of the current parameters $\omega^*(\Theta)$, indicating that gradients do not flow through the pseudo-label branch, $\widetilde{D}$ denotes the noisy training subset that passes confidence filtering, $\mathbf{1}(\cdot)$ denotes the indicator function. 

This objective does not supervise the meta-network using preference labels. Instead, it injects the task-agnostic meta-knowledge that the same sample should produce consistent preference judgments under semantics-preserving augmentation into the outer-level optimization.

An empirical validation of the prompt augmentation consistency signal is provided in Appendix~\ref{app:pac-analysis}.

\subsection{Central-Difference Meta Gradient Approximation}

Standard meta-learning bilevel optimization requires differentiating through the inner update steps and therefore involves higher-order meta-gradient computation. In the LLM setting, this leads to significant memory and computational burdens. We therefore use central differences to approximate the key sample-level directional derivatives in the VNet meta-gradient. For notational simplicity, we denote $\mathcal{L}^{\mathrm{PACMR}}(\widehat{\omega}^{(t)}(\Theta))$ as
\begin{align}
    \mathcal{L}^{\mathrm{PACMR}}(\omega^*(\Theta))=-\frac{1}{M}\sum_{i=1}^{M}
\Big[&\mathbf{1}\bigl(\widehat{y}_i(\widetilde{\omega}^*(\Theta))=0\bigr)
\nonumber\\[-0.25ex]
&\cdot\log\sigma\bigl(u(A(z_i);\omega^*(\Theta))\bigr)
\nonumber\\
+&\mathbf{1}\bigl(\widehat{y}_i(\widetilde{\omega}^*(\Theta))=1\bigr)
\nonumber\\[-0.25ex]
&\cdot\log\sigma\bigl(-u(A(z_i);\omega^*(\Theta))\bigr)\Big],
\end{align}
and
\begin{align}
    MR_i^{\mathrm{pac}}(\omega^*(\Theta))=
&-\mathbf{1}\bigl(\widehat{y}_i(\widetilde{\omega}^*(\Theta))=0\bigr)
\nonumber\\[-0.25ex]
&\cdot\log\sigma\bigl(u(A(z_i);\omega^*(\Theta))\bigr)
\nonumber\\
-&\mathbf{1}\bigl(\widehat{y}_i(\widetilde{\omega}^*(\Theta))=1\bigr)
\nonumber\\[-0.25ex]
&\cdot\log\sigma\bigl(-u(A(z_i);\omega^*(\Theta))\bigr),
\end{align}
omitting the subset set $D$ and the prompt augmentation transformation $A$, and use $M$ to denote the meta-batch size.
Let
\begin{equation}
    \mathcal{L}^{\mathrm{PACMR}}(\widehat{\omega}^{(t)}(\Theta))
    =\frac{1}{M}\sum_{j=1}^{M}MR_j^{\mathrm{pac}}(\widehat{\omega}^{(t)}(\Theta)),
\end{equation}
where
\begin{align}
\widehat{\omega}^{(t)}(\Theta)
={}&\omega^{(t)}-\frac{\alpha}{N}\sum_{i=1}^{N}
V\bigl(\Delta(z_i;\omega^{(t)}),u(z_i;\omega^{(t)});\Theta\bigr)
\nonumber\\[-0.25ex]
&\hspace{9em}\cdot\nabla_{\omega}\ell_i^{\mathrm{train}}(\omega^{(t)}).
\end{align}
and define the meta-gradient direction
\begin{align}
d_{\mathrm{meta}}
&=\nabla_{\widehat{\omega}^{(t)}}
\mathcal{L}^{\mathrm{PACMR}}(\widehat{\omega}^{(t)}(\Theta))
\nonumber\\
&=\frac{1}{M}\sum_{j=1}^{M}
\nabla_{\widehat{\omega}^{(t)}}
MR_j^{\mathrm{pac}}(\widehat{\omega}^{(t)}(\Theta)).
\end{align}
Using the chain rule for a one-step virtual inner update, the VNet meta-gradient can be written as
\begin{align}
&\nabla_{\Theta}\mathcal{L}^{\mathrm{PACMR}}(\widehat{\omega}^{(t)}(\Theta))
\nonumber\\
&\quad=-\frac{\alpha}{N}\sum_{i=1}^{N}c_i\,
\nabla_{\Theta}V\bigl(\Delta(z_i;\omega^{(t)}),
 u(z_i;\omega^{(t)});\Theta\bigr).
\end{align}
where the key coefficient is
\begin{equation}
    c_i=d_{\mathrm{meta}}^T\nabla_{\omega}\ell_i^{\mathrm{train}}(\omega^{(t)}).
\end{equation}
PACMR-DPO does not explicitly construct per-sample LoRA gradients. Instead, it uses central differences to approximate this directional derivative:
\begin{equation}
    c_i\approx \widetilde{c}_i=
    \frac{\ell_i^{\mathrm{train}}(\omega^{(t)}+\epsilon d_{\mathrm{meta}})-
    \ell_i^{\mathrm{train}}(\omega^{(t)}-\epsilon d_{\mathrm{meta}})}{2\epsilon},
\end{equation}
Thus, the VNet meta-gradient can be approximated by
\begin{align}
&\nabla_{\Theta}\mathcal{L}^{\mathrm{PACMR}}(\widehat{\omega}^{(t)}(\Theta))
\nonumber\\
&\quad\approx-\frac{\alpha}{N}\sum_{i=1}^{N}\widetilde{c}_i\,
\nabla_{\Theta}V\bigl(\Delta(z_i;\omega^{(t)}),
 u(z_i;\omega^{(t)});\Theta\bigr),
\end{align}
where $\alpha$ is the inner learning rate, $N$ is the training batch size, and $\epsilon$ is the central-difference perturbation scale. 

Central-difference approximation requires positive and negative perturbations to the model parameters. Applying perturbations and updates in the full parameter space would still be expensive for LLMs. Therefore, we further restrict training to the LoRA parameter space \cite{hu2022lora}. 

Under LoRA fine-tuning, central-difference perturbations, gradient storage, and parameter writing all occur in the low-rank delta parameter space, thereby reducing computational cost. A complete complexity analysis is given in Appendix~\ref{app:complexity}. The overall PACMR-DPO training procedure is summarized in \cref{alg:pacmr-dpo}.

\begin{algorithm}[t]
\caption{PACMR-DPO}
\label{alg:pacmr-dpo}
\begin{algorithmic}[1]
\Require Reference model $\piref$; initialized policy model $\pi_{\omega}\leftarrow\piref$; VNet $V(\cdot;\Theta)$; augmentation $A$; training data $\Dtrain$; outer data $D$ sampled from $\Dtrain$; threshold $\tau$; finite-difference scale $\epsilon$.
\Ensure Policy model $\pi_{\omega}$.
\For{$t=1,2,\ldots,E$}
    \State Sample training batch $B_t=\{z_i\}_{i=1}^{B_{\mathrm{train}}}$, where $z_i=(x_i,y_i^c,y_i^r)$, from $\Dtrain$.
    \State Compute implicit reward difference $u(z_i;\omega^{(t)})$ and reward sum $\Delta(z_i;\omega^{(t)})$.
    \State Compute weights $V(\Delta(z_i;\omega^{(t)}),u(z_i;\omega^{(t)});\Theta^{(t)})$.
    \State Compute weighted DPO loss and perform one virtual update $\widehat{\omega}^{(t)}(\Theta^{(t)})$.
    \State Sample outer batch $B_m=\{z_j,A(z_j)\}_{j=1}^{B_{\mathrm{meta}}}$, where $z_j=(x_j,y_j^a,y_j^b)$, $A(z_j)=(A(x_j),y_j^a,y_j^b)$, from $D$.
    \State Compute original implicit reward differences $u(z_j;\widehat{\omega}^{(t)}(\Theta^{(t)}))$ and augmented implicit reward differences $u(A(z_j);\widehat{\omega}^{(t)}(\Theta^{(t)}))$.
    \State Construct pseudo labels using threshold $\tau$ and obtain filtered meta batch $\widetilde{B}_{\mathrm{meta}}$.
    \State Compute meta loss $\mathcal{L}^{\mathrm{PACMR}}$ and meta gradient $d_{\mathrm{meta}}=\nabla_{\widehat{\omega}^{(t)}}\mathcal{L}^{\mathrm{PACMR}}(\widehat{\omega}^{(t)}(\Theta^{(t)}))$.
    \State Evaluate perturbed training losses at $\omega^{(t)}+\epsilon d_{\mathrm{meta}}$ and $\omega^{(t)}-\epsilon d_{\mathrm{meta}}$.
    \State Compute central-difference coefficients $\widetilde{c}_i$ and update VNet parameters $\Theta^{(t+1)}$.
    \State Recompute weights with $V(\cdot;\Theta^{(t+1)})$ and update policy model parameters $\omega^{(t+1)}$ by weighted DPO.
\EndFor
\State \Return $\pi_{\omega}$
\end{algorithmic}
\end{algorithm}

\section{Experiments}

\subsection{Experimental Setup}

We evaluate PACMR-DPO on summarization task (TL;DR) \cite{volske2017tl} and dialogue preference task (Anthropic HH) \cite{bai2022training}. In the summarization task, the model generates a summary for a Reddit post. We first perform supervised fine-tuning on a filtered TL;DR summarization dataset to obtain the initial reference model, and then use the human preference for preference optimization \cite{stiennon2020learning}. In the dialogue preference task, we use the Anthropic Helpful and Harmless (HH) dialogue preference dataset and conduct instruction fine-tuning on the preferred responses to obtain the reference model.

We inject random-flip noise into the datasets of the two tasks to construct noisy preference-learning scenarios. Data processing and noise-injection details are given in Appendix~\ref{app:data-noise-and-clean-meta-data}. Specifically, we consider three noise rates, $20\%$, $30\%$, and $40\%$, to systematically evaluate the performance of different methods as the noise rate increases. In all experiments, we use Llama-2-7B as the base model. For computational efficiency, LoRA fine-tuning is used in both SFT and preference alignment, with LoRA rank set to 16 and LoRA alpha set to 32. All models are trained for one epoch on the corresponding dataset, and the experiments are based on OpenRLHF \cite{hu2024openrlhf}.

\subsection{Baselines and Evaluation Metrics}

We select DPO, cDPO, IPO \cite{azar2024general}, rDPO, and Dr.DPO as baselines. Training hyperparameters and baseline implementation details are given in Appendix~\ref{app:hyper-baselines}, which also presents the clean-metadata MWN-DPO formulation and pseudocode used in the dedicated comparison. 

Standard DPO is used as the evaluation anchor, and outputs from the proposed method and other baselines are compared pairwise against DPO outputs. We randomly sample 800 prompts from the test set, generate responses using the proposed method, each baseline, and standard DPO, pair each generated response with the corresponding DPO response, and then use GPT-5.1 for pairwise comparison \cite{rafailov2023direct,zheng2023judging}. 

The evaluation protocol and judge prompt are provided in Appendix~\ref{app:evaluation}; win/tie/loss ratios are recorded. The win-score is defined as \cite{bukharin2024robust}
\begin{equation}
\mathrm{Win\text{-}score}=1+\frac{\#\mathrm{win}-\#\mathrm{lose}}{\mathrm{Total\ comparisons}}.
\end{equation}

\subsection{Results under Noisy Preference Labels}

\begin{table*}[!t]
\centering
\caption{Pairwise evaluation against standard DPO on the TL;DR summarization task under random preference-label flips.}
\label{tab:main-tldr}
\small
\begin{tabular}{lcccccc}
\toprule
\multirow{2}{*}{Method} & \multicolumn{3}{c}{Win rate} & \multicolumn{3}{c}{Win-score} \\
\cmidrule(lr){2-4}\cmidrule(lr){5-7}
& 20\% Flipped & 30\% Flipped & 40\% Flipped & 20\% Flipped & 30\% Flipped & 40\% Flipped\\
\midrule
cDPO & 14.13\% & 11.13\% & 11.00\% & 0.7813 & 0.7563 & 0.8525 \\
IPO & 10.00\% & 12.50\% & 11.25\% & 0.5850 & 0.7150 & 0.8688 \\
rDPO & 45.38\% & 53.63\% & 52.38\% & 1.3287 & 1.4138 & 1.4113 \\
Dr.DPO & 55.38\% & 60.88\% & 49.63\% & 1.4438 & 1.5175 & 1.3650 \\
PACMR-DPO & \textbf{61.63\%} & \textbf{61.13\%} & \textbf{61.13\%} & \textbf{1.4763} & \textbf{1.5263} & \textbf{1.4950} \\
\bottomrule
\end{tabular}
\end{table*}

\begin{table*}[!t]
\centering
\caption{Pairwise evaluation against standard DPO on the HH single-turn dialogue task under random preference-label flips.}
\label{tab:main-hh}
\small
\begin{tabular}{lcccccc}
\toprule
\multirow{2}{*}{Method} & \multicolumn{3}{c}{Win rate} & \multicolumn{3}{c}{Win-score} \\
\cmidrule(lr){2-4}\cmidrule(lr){5-7}
& 20\% Flipped & 30\% Flipped & 40\% Flipped & 20\% Flipped & 30\% Flipped & 40\% Flipped\\
\midrule
cDPO & 18.63\% & 14.38\% & 13.13\% & 0.7925 & 0.7913 & 0.9075 \\
IPO & 18.13\% & 15.13\% & 11.50\% & 0.6638 & 0.7675 & 0.8863 \\
rDPO & 44.50\% & 49.88\% & \textbf{41.38\%} & 1.2525 & 1.3250 & 1.2213 \\
Dr.DPO & 60.50\% & 60.63\% & 39.88\% & 1.4013 & 1.4400 & 1.2363 \\
PACMR-DPO & \textbf{65.13\%} & \textbf{74.00\%} & 40.63\% & \textbf{1.4488} & \textbf{1.6075} & \textbf{1.2400} \\
\bottomrule
\end{tabular}
\end{table*}

\begin{table*}[!t]
\centering
\caption{Mean learned VNet weights for unflipped and flipped training pairs.}
\label{tab:final-vnet-weights}
\small
\begin{tabular}{llccc}
\toprule
Task & Noise rate & Mean weight (unflipped) & Mean weight (flipped) & Weight Gap \\
\midrule
TL;DR & 20\% & 0.7152 & 0.3314 & 0.3838 \\
TL;DR & 30\% & 0.6871 & 0.3909 & 0.2962 \\
TL;DR & 40\% & 0.5649 & 0.4131 & 0.1518 \\
HH & 20\% & 0.5354 & 0.3860 & 0.1494 \\
HH & 30\% & 0.5423 & 0.4018 & 0.1406 \\
HH & 40\% & 0.5190 & 0.4976 & 0.0213 \\
\bottomrule
\end{tabular}
\end{table*}

\begin{figure*}[!t]
\centering

\subfloat[20\%]{\includegraphics[width=0.31\textwidth]{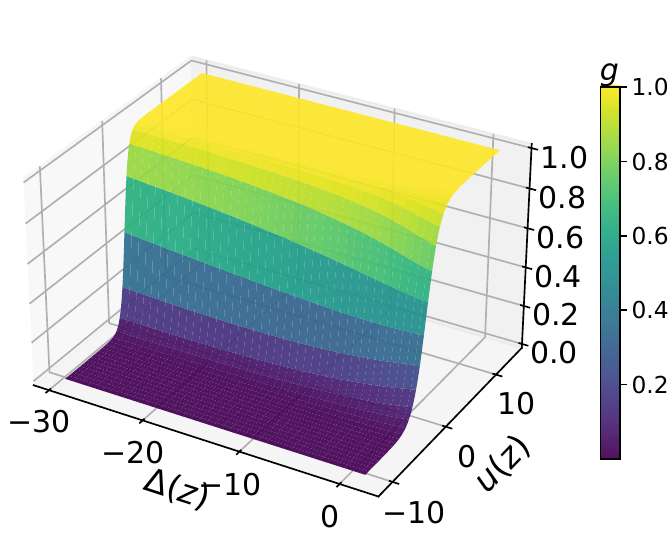}}
\hfil
\subfloat[30\%]{\includegraphics[width=0.31\textwidth]{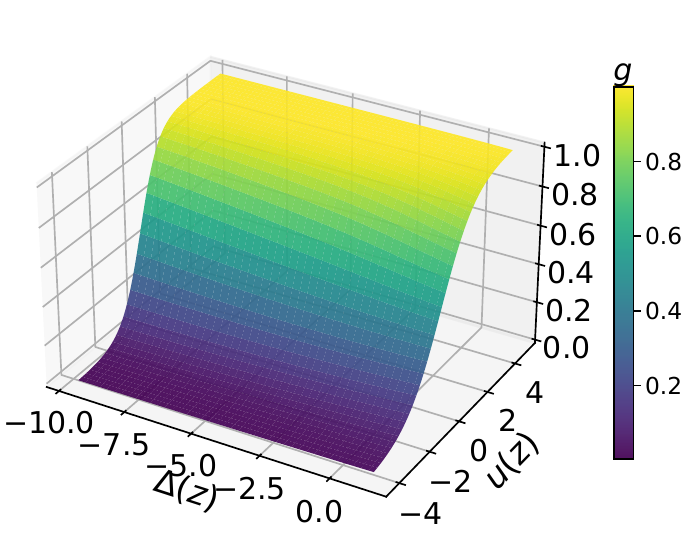}}
\hfil
\subfloat[40\%]{\includegraphics[width=0.31\textwidth]{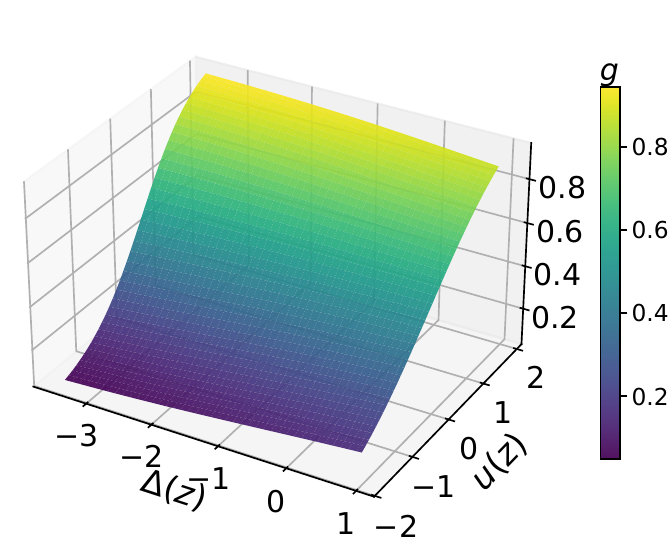}}
\caption{Learned weight functions on TL;DR under 20\%, 30\%, and 40\% random preference flips.}
\label{fig:tldr-final-weight-functions}
\end{figure*}

\begin{figure*}[!t]
\centering

\subfloat[20\%]{\includegraphics[width=0.31\textwidth]{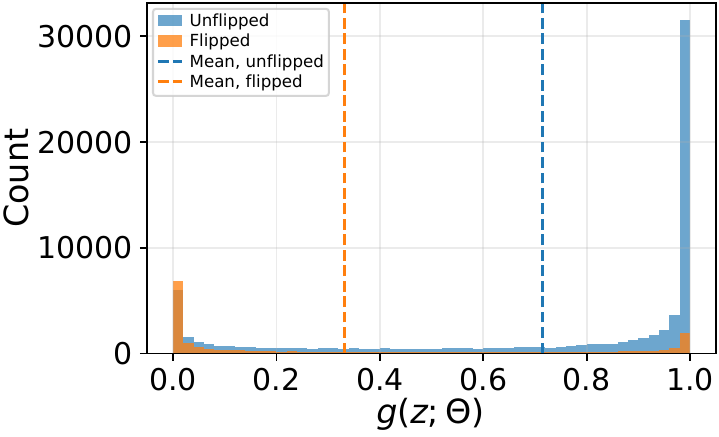}}
\hfil
\subfloat[30\%]{\includegraphics[width=0.31\textwidth]{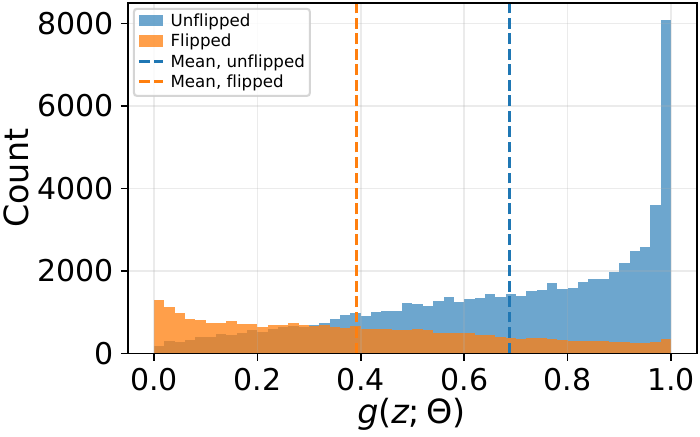}}
\hfil
\subfloat[40\%]{\includegraphics[width=0.31\textwidth]{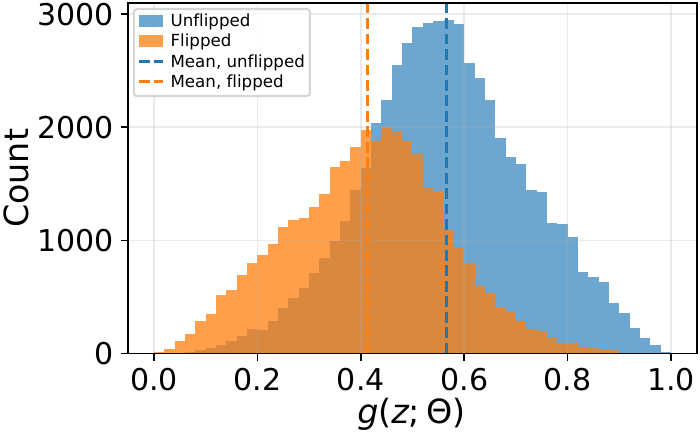}}
\caption{Learned weight distributions on TL;DR.}
\label{fig:tldr-final-weight-hists}
\end{figure*}

\begin{figure*}[!t]
\centering

\subfloat[20\%]{\includegraphics[width=0.31\textwidth]{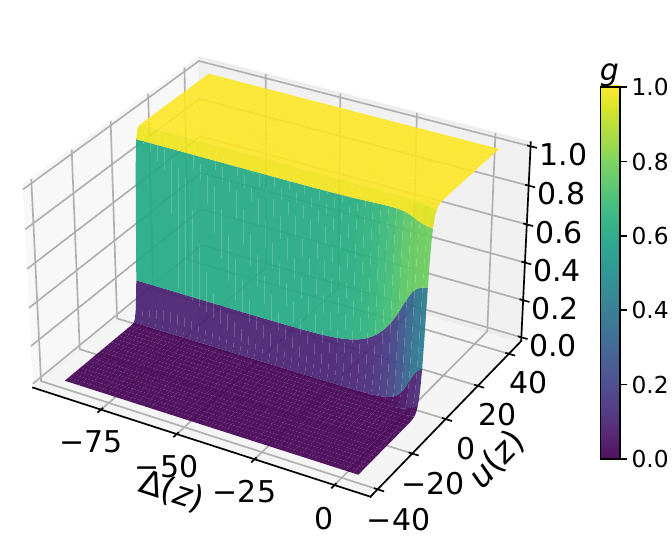}}
\hfil
\subfloat[30\%]{\includegraphics[width=0.31\textwidth]{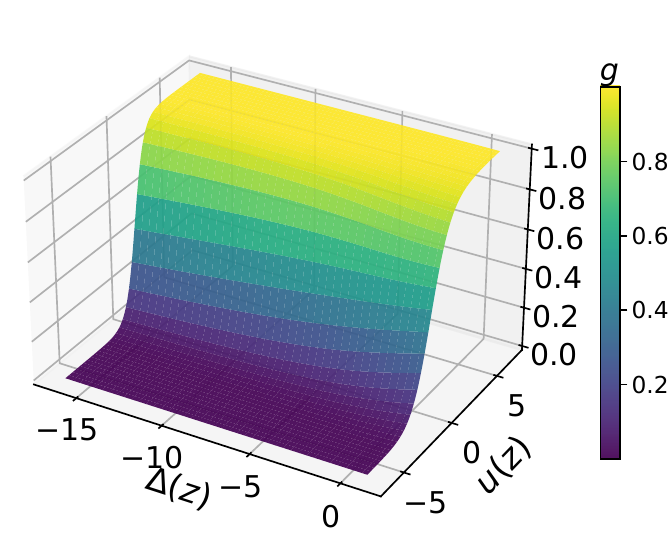}}
\hfil
\subfloat[40\%]{\includegraphics[width=0.31\textwidth]{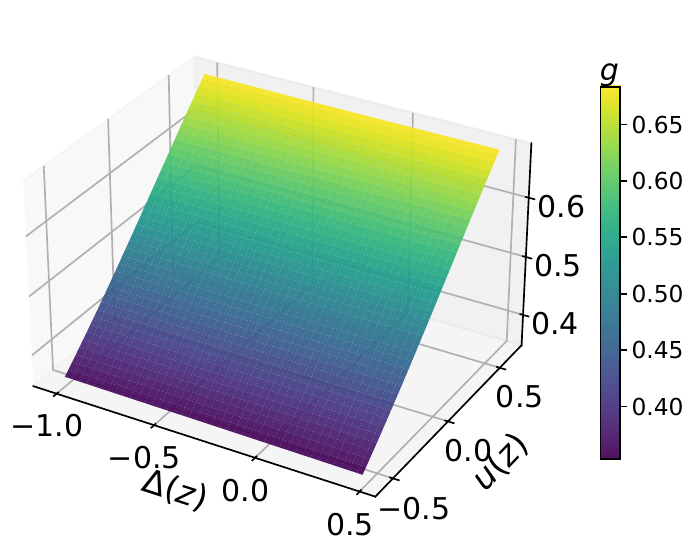}}
\caption{Learned weight functions on Anthropic HH under 20\%, 30\%, and 40\% random preference flips.}
\label{fig:hh-final-weight-functions}
\end{figure*}

\begin{figure*}[!t]
\centering

\subfloat[20\%]{\includegraphics[width=0.31\textwidth]{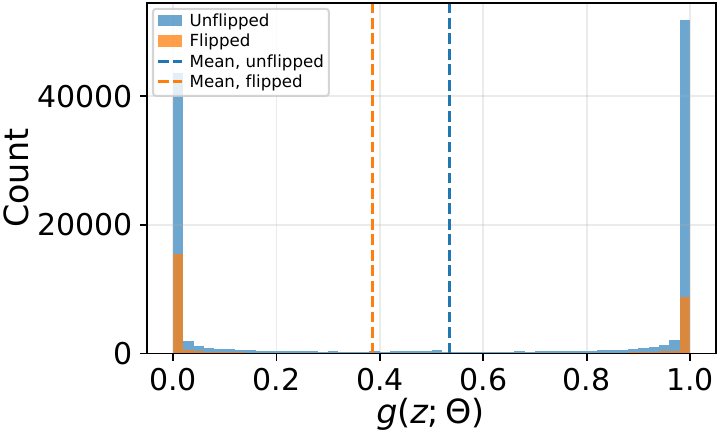}}
\hfil
\subfloat[30\%]{\includegraphics[width=0.31\textwidth]{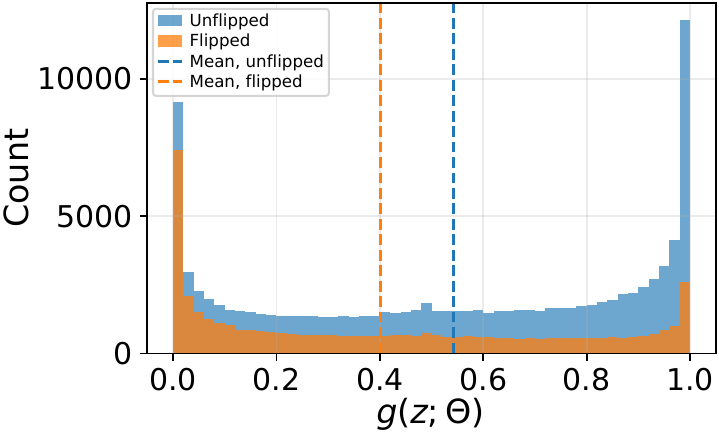}}
\hfil
\subfloat[40\%]{\includegraphics[width=0.31\textwidth]{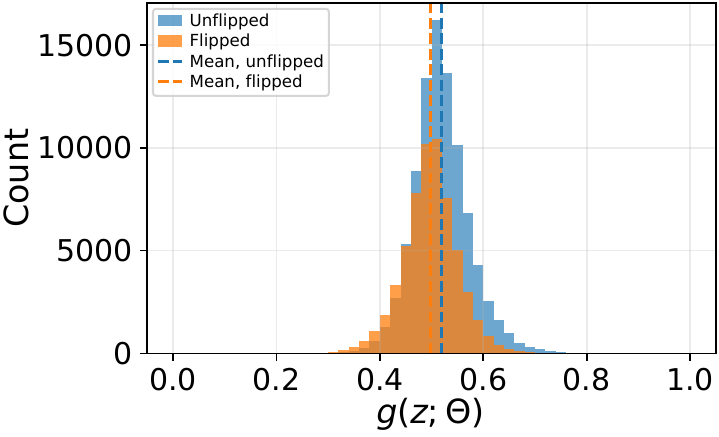}}
\caption{Learned weight distributions on Anthropic HH.}
\label{fig:hh-final-weight-hists}
\end{figure*}

As shown in \cref{tab:main-tldr}, the results show that PACMR-DPO achieves the highest win rate and win-score under all three noise rates on TL;DR. On Anthropic HH, as shown in \cref{tab:main-hh}, PACMR-DPO also performs well. Under 20\% and 30\% noise, it obtains higher observed Win-score and Win rate. Under 40\% noise, PACMR-DPO reaches a win rate of 40.63\% and a win-score of 1.2400. Although its win rate is slightly lower than rDPO's 41.38\%, its win-score is higher than that of all baselines, indicating that PACMR-DPO still has a overall comparison quality.

From the weight-distribution plots on TL;DR, as shown in \cref{fig:tldr-final-weight-hists}, the distributions of clean and noisy samples differ clearly under 20\% and 30\% noise, suggesting that the meta-network can assign different weights to samples of different reliability, thereby emphasizing relatively trustworthy preference signals and suppressing noisy labels during training. As the noise rate increases to 40\%, the proportion of noisy samples grows, and clean and noisy samples increasingly overlap in the model's implicit-reward feature space, making the task more difficult. The weight distributions still exhibit some separability, but the overlap increases, indicating that weight learning becomes harder at higher noise rates and sample discrimination weakens. Nevertheless, PACMR-DPO still outperforms all baselines under 40\% noise.

The weight-distribution plots on HH show that PACMR-DPO can also learn weight differences between clean and noisy samples under most noise rates, as shown in \cref{fig:hh-final-weight-hists}. This indicates that the meta-learning signal constructed from the implicit reward difference, implicit reward sum, and prompt augmentation consistency is also effective on HH. Similar to TL;DR, as the noise rate increases, the distributions of clean and noisy samples increasingly overlap, indicating that the task becomes harder with higher noise. In particular, under 40\% noise on HH, the weight distributions of clean and noisy samples are much less separable and become highly similar. This suggests that sample-reliability modeling for the HH dialogue preference task becomes substantially more difficult under extremely high noise. Although the meta-network can still learn some weight differences, a single-epoch training setup may not sufficient to form a strongly separable weight distribution. Even so, PACMR-DPO still performs well.

\Cref{tab:final-vnet-weights} presents the details of the weight distribution plots on both TL;DR and HH tasks, where Weight Gap denotes the difference between the average weight of unflipped pairs and that of flipped pairs. It can be observed from the table that the learned weights are able to distinguish between flipped and unflipped pairs, yet the discriminative ability diminishes as the noise rate increases.

Despite the difference between summarization and dialogue, the learned
VNet weighting functions have similar shapes at matched flip rates
(\cref{fig:tldr-final-weight-functions,fig:hh-final-weight-functions}),
suggesting that they may capture task-agnostic regularities in
preference-pair reliability.
\subsection{Ablation: Learned Weighting vs. Fixed Prior Weighting}

This subsection compares PACMR-DPO with fixed-prior weighting. The fixed-weighting variant does not learn a VNet, and instead directly uses
\begin{equation}
g(z)=\sigma(u(z))
\end{equation}
as the sample weight. This ablation answers whether a theoretically inspired fixed monotonic weight is sufficient, or whether meta-learning is necessary to learn the weighting function.

\begin{table*}[!t]
\centering
\caption{Ablation study comparing fixed sigmoid weighting with learned PACMR-DPO weighting.}
\label{tab:ablation-fixed-weight}
\small
\begin{tabular}{ll lcc}
\toprule
Task & Noise rate & Method & Win rate & Win-score \\
\midrule
TL;DR & 20\% & Fixed $\sigma(u)$ & 55.00\% & 1.4425 \\
TL;DR & 20\% & PACMR-DPO & 61.63\% & 1.4763 \\
TL;DR & 30\% & Fixed $\sigma(u)$ & 61.50\% & 1.5300 \\
TL;DR & 30\% & PACMR-DPO & 61.13\% & 1.5263 \\
TL;DR & 40\% & Fixed $\sigma(u)$ & 49.38\% & 1.3650 \\
TL;DR & 40\% & PACMR-DPO & 61.13\% & 1.4950 \\
HH & 20\% & Fixed $\sigma(u)$ & 62.63\% & 1.4500 \\
HH & 20\% & PACMR-DPO & 65.13\% & 1.4488 \\
HH & 30\% & Fixed $\sigma(u)$ & 63.88\% & 1.4938 \\
HH & 30\% & PACMR-DPO & 74.00\% & 1.6075 \\
HH & 40\% & Fixed $\sigma(u)$ & 37.50\% & 1.2000 \\
HH & 40\% & PACMR-DPO & 40.63\% & 1.2400 \\
\bottomrule
\end{tabular}
\end{table*}

As shown in \cref{tab:ablation-fixed-weight}, the fixed sigmoid weight brings some benefit in certain medium-noise settings, but is overall less stable than learned PACMR-DPO. In particular, PACMR-DPO has clearly higher win-scores on TL;DR under 20\% and 40\% noise and on HH under 30\% and 40\% noise. This indicates that using only $\sigma(u)$ as a static prior is insufficient to fully characterize sample reliability, and that learning sample weights with the outer prompt-augmentation-consistency objective is necessary.
\subsection{Sensitivity to Confidence Threshold \texorpdfstring{$\tau$}{tau}}

The confidence threshold $\tau$ controls the filtering strength for outer-level pseudo-labels. A smaller $\tau$ increases outer-sample coverage but may introduce low-confidence pseudo-labels; a larger $\tau$ can filter unreliable samples but may reduce the effective outer-level meta-signal. We consider $\tau\in\{0.55,0.60,0.70\}$.

\begin{table}[!htbp]
\centering
\caption{Sensitivity analysis of pseudo-label confidence threshold $\tau$.}
\label{tab:tau-sensitivity}
\footnotesize
\setlength{\tabcolsep}{7pt}
\renewcommand{\arraystretch}{0.93}
\begin{tabular}{@{}llccc@{}}
\toprule
Task & Noise rate & $\tau$ & Win rate & Win-score \\
\midrule
TL;DR & 20\% & 0.55 & 54.63\% & 1.4450 \\
TL;DR & 20\% & \textbf{0.60} & \textbf{61.63\%} & \textbf{1.4763} \\
TL;DR & 20\% & 0.70 & 50.88\% & 1.3425 \\
TL;DR & 30\% & 0.55 & 61.00\% & 1.5038 \\
TL;DR & 30\% & \textbf{0.60} & \textbf{61.13\%} & \textbf{1.5263} \\
TL;DR & 30\% & 0.70 & 60.88\% & 1.5150 \\
TL;DR & 40\% & 0.55 & 60.25\% & 1.4900 \\
TL;DR & 40\% & \textbf{0.60} & \textbf{61.13\%} & \textbf{1.4950} \\
TL;DR & 40\% & 0.70 & 52.63\% & 1.4088 \\
HH & 20\% & 0.55 & 60.00\% & 1.4163 \\
HH & 20\% & 0.60 & 65.13\% & 1.4488 \\
HH & 20\% & \textbf{0.70} & \textbf{68.00\%} & \textbf{1.4875} \\
HH & 30\% & 0.55 & 73.00\% & 1.5787 \\
HH & 30\% & \textbf{0.60} & \textbf{74.00\%} & \textbf{1.6075} \\
HH & 30\% & 0.70 & 65.38\% & 1.5162 \\
HH & 40\% & 0.55 & 36.88\% & 1.1738 \\
HH & 40\% & \textbf{0.60} & \textbf{40.63\%} & \textbf{1.2400} \\
HH & 40\% & 0.70 & 39.25\% & 1.2375 \\
\bottomrule
\end{tabular}
\end{table}

As shown in \cref{tab:tau-sensitivity}, on TL;DR, $\tau=0.60$ achieves the best overall performance, and the model performance first improves and then declines as $\tau$ increases. On HH, $\tau=0.60$ yields the best performance under 30\% and 40\% noise, whereas $\tau=0.70$ performs best under 20\% noise. Similar to the results on TL;DR, the performance on HH also exhibits a first-increasing-then-decreasing trend as $\tau$ increases. Overall, a threshold that is too low may introduce unreliable pseudo-labels, while a threshold that is too high may reduce the number of effective outer-level samples. Therefore, we use the compromise value $\tau=0.60$.
\subsection{Comparison with MWN-DPO Using Clean Metadata}

To evaluate whether the prompt-augmentation-consistency outer objective can replace clean meta-preference labels, this subsection compares PACMR-DPO with MWN-DPO using clean metadata. For fairness, MWN-DPO and PACMR-DPO use the same VNet input and weighting-function structure. The only difference is the outer-level objective: MWN-DPO uses the DPO loss on clean meta-preference data to update VNet, while PACMR-DPO uses the prompt-augmentation-consistency objective. Details of the clean metadata construction are given in Appendix~\ref{app:data-noise-and-clean-meta-data}, and the MWN-DPO formulation and training procedure are provided in Appendix~\ref{app:hyper-baselines} and Algorithm~\ref{alg:mwn-dpo} therein.

\begin{table}[!htbp]
\centering
\caption{Comparison of PACMR-DPO with MWN-DPO using clean metadata.}
\label{tab:clean-meta-mwn}
\small
\setlength{\tabcolsep}{6pt}
\renewcommand{\arraystretch}{0.95}
\begin{tabular}{@{}lllcc@{}}
\toprule
Task & Noise rate & Method & Win rate & Win-score \\
\midrule
TL;DR & 20\% & MWN-DPO (clean metadata) & 52.75\% & 1.4138 \\
TL;DR & 20\% & PACMR-DPO & 61.63\% & 1.4763 \\
TL;DR & 30\% & MWN-DPO (clean metadata) & 61.88\% & 1.5238 \\
TL;DR & 30\% & PACMR-DPO & 61.13\% & 1.5263 \\
TL;DR & 40\% & MWN-DPO (clean metadata) & 62.63\% & 1.5288 \\
TL;DR & 40\% & PACMR-DPO & 61.13\% & 1.4950 \\
HH & 20\% & MWN-DPO (clean metadata) & 58.88\% & 1.4100 \\
HH & 20\% & PACMR-DPO & 65.13\% & 1.4488 \\
HH & 30\% & MWN-DPO (clean metadata) & 73.38\% & 1.5900 \\
HH & 30\% & PACMR-DPO & 74.00\% & 1.6075 \\
HH & 40\% & MWN-DPO (clean metadata) & 39.00\% & 1.2288 \\
HH & 40\% & PACMR-DPO & 40.63\% & 1.2400 \\
\bottomrule
\end{tabular}
\end{table}

\Cref{tab:clean-meta-mwn} shows that PACMR-DPO can match or exceed clean-meta MWN-DPO in multiple settings without using clean meta-preference labels. This indicates that prompt augmentation consistency can serve as an effective outer-level proxy signal when clean metadata is unavailable, although its advantage does not hold for every task and noise rate.

These results may also partly reflect the practical difficulty of constructing a representative clean meta set. Clean labels alone do not guarantee that a small meta set adequately represents the full clean preference distribution. This observation does not imply that clean metadata is ineffective; rather, it highlights
that its utility depends on both label correctness and distributional representativeness.
\FloatBarrier
\section{Conclusion}

This paper studies DPO learning under noisy preference labels and analyzes how noisy preferences cause a shift in the DPO optimum from the perspectives of conditional risk and posterior shift. We prove that, under some idealized conditions, the optimal weight ratio in bilevel reweighting can make weighted DPO under the noisy training distribution recover the pointwise optimum under the clean preference distribution. Inspired by this theory, we propose PACMR-DPO, which learns sample weights through VNet and uses prompt augmentation consistency as the outer meta-objective instead of clean meta-preference data. We further use central-difference approximation and LoRA-space perturbations to reduce the computational cost of bilevel optimization in LLM settings. Experiments show that PACMR-DPO achieves performance improvements over multiple robust DPO baselines under medium and high preference-noise settings.
\clearpage
\bibliographystyle{IEEEtran}
\bibliography{references}

\clearpage
\appendix
\section*{Appendix}
\crefname{section}{Appendix}{Appendices}
\Crefname{section}{Appendix}{Appendices}

\section{Exact Meta-Gradient and Complexity Analysis of the Central-Difference Approximation}
\label{app:complexity}

\subsection{Notation and Assumptions}

This section analyzes the computational complexity of VNet parameter updates in PACMR-DPO. Let $\omega\in\mathbb{R}^{P_L}$ denote the LoRA parameters participating in policy training, and let $\Theta\in\mathbb{R}^{Q}$ denote the VNet parameters. Let $N$ be the number of training samples used for the VNet meta-update, and let $M$ be the number of meta-samples used to compute the prompt augmentation consistency meta loss. For the $i$-th training sample, denote the single-sample DPO training loss as
\begin{equation}
    \ell_i^{\mathrm{train}}(\omega).
\end{equation}
The weight output by VNet for this training sample is denoted as
\begin{equation}
    v_i(\Theta)=V(\Delta(z_i;\omega),u(z_i;\omega);\Theta),
\end{equation}
In the VNet update and virtual inner update, these input features are processed with stop-gradient.

In the LLM with LoRA setting considered in this paper, VNet is a small MLP, while the trainable parameters on the policy side lie in the LoRA parameter space of a large language model. Therefore,
\begin{equation}
    Q\ll P_L.
\end{equation}
At the same time, the training batch and meta-batch sizes are much smaller than the LoRA parameter dimension, namely
\begin{equation}
    N\ll P_L,
    \qquad
    M\ll P_L.
\end{equation}
 Let $A_{\mathrm{tr}}^{(1)}(N)$ denote the space cost of the first-order training-side backward graph, $A_{\mathrm{tr}}^{(2)}(N)$ the space cost of the higher-order training-side graph retained for bilevel differentiation, and $A_{\mathrm{meta}}^{(1)}(M)$ the space cost of the first-order meta-loss backward graph. Let $I_{\mathrm{tr}}(N)$ denote the space cost of forward-only training loss evaluation. Then, in the same configuration, we have
\begin{equation}
    I_{\mathrm{tr}}(N) \le A_{\mathrm{tr}}^{(1)}(N)\le A_{\mathrm{tr}}^{(2)}(N).
\end{equation}
Let $F_{\mathrm{tr}}(N)$ denote the time for forward-only loss evaluation on $N$ training samples.

\subsection{Exact Meta-Gradient and Computational Bottleneck}

The one-step virtual inner update used in the VNet update is
\begin{equation}
    \widehat{\omega}(\Theta)
    =\omega-
    \frac{\alpha}{N}
    \sum_{i=1}^{N}
    v_i(\Theta)
    \nabla_{\omega}\ell_i^{\mathrm{train}}(\omega).
    \label{eq:app-virtual-update}
\end{equation}
The outer-level meta-objective is denoted as
\begin{equation}
    \mathcal{L}^{\mathrm{PACMR}}(\widehat{\omega})
    =\frac{1}{M}\sum_{j=1}^{M}MR_j^{\mathrm{pac}}(\widehat{\omega}).
\end{equation}
Define the meta-gradient direction
\begin{equation}
    d_{\mathrm{meta}}
    =\nabla_{\widehat{\omega}}\mathcal{L}^{\mathrm{PACMR}}(\widehat{\omega})
    =\frac{1}{M}\sum_{j=1}^{M}
    \nabla_{\widehat{\omega}}MR_j^{\mathrm{pac}}(\widehat{\omega})
    \in\mathbb{R}^{P_L}.
\end{equation}
The reverse path from the VNet parameters $\Theta$ to the outer meta-loss is
\begin{equation}
    \Theta
    \longrightarrow
    v_i(\Theta)
    \longrightarrow
    \widehat{\omega}(\Theta)
    \longrightarrow
    \mathcal{L}^{\mathrm{PACMR}}(\widehat{\omega}(\Theta)).
    \label{eq:app-reverse-path}
\end{equation}
By the chain rule and \cref{eq:app-virtual-update}, we obtain
\begin{align}
&\nabla_{\Theta}\mathcal{L}^{\mathrm{PACMR}}(\widehat{\omega}(\Theta))
\nonumber\\[-0.25ex]
&\quad=-\frac{\alpha}{N}\sum_{i=1}^{N}
\left[d_{\mathrm{meta}}^T
\nabla_{\omega}\ell_i^{\mathrm{train}}(\omega)\right]
\nabla_{\Theta}v_i(\Theta).
\label{eq:app-exact-meta-grad}
\end{align}
Therefore, the computational bottleneck in the exact VNet meta-gradient is the coefficient corresponding to each training sample,
\begin{equation}
    c_i
    =d_{\mathrm{meta}}^T\nabla_{\omega}\ell_i^{\mathrm{train}}(\omega),
    \qquad i=1,\ldots,N.
    \label{eq:app-ci}
\end{equation}
Under $Q\ll P_L$, $\nabla_{\Theta}v_i(\Theta)$ lies in the parameter space of the small MLP and is not the dominant cost. Therefore, the following complexity analysis focuses on computing $c_i$ in \cref{eq:app-ci}.

\subsection{Exact Computation by Retaining the Computational Graph}

The first exact computation strategy is to directly retain the complete reverse path in \cref{eq:app-reverse-path}. In this case, the outer meta-loss must continue backpropagating through the virtual update $\widehat{\omega}(\Theta)$ to $v_i(\Theta)$ and $\Theta$. Since $\widehat{\omega}(\Theta)$ contains
\begin{equation}
    \nabla_{\omega}\ell_i^{\mathrm{train}}(\omega),
\end{equation}
the reverse path must retain the higher-order training-side computational graph, whose space cost is denoted by $A_{\mathrm{tr}}^{(2)}(N)$. In addition, computing $d_{\mathrm{meta}}$ requires retaining the first-order meta-loss backward graph, whose space cost is $A_{\mathrm{meta}}^{(1)}(M)$. At least one vector in the LoRA parameter space, of size $P_L$, must also be stored. Under $Q\ll P_L$, $N\ll P_L$, and $M\ll P_L$, the VNet parameters, sample weights, and sample-level scalar coefficients are lower-order terms. Therefore, the space complexity of retaining the computational graph is
\begin{equation}
    S_{\mathrm{graph}}
    =O\left(
    A_{\mathrm{tr}}^{(2)}(N)
    +A_{\mathrm{meta}}^{(1)}(M)
    +P_L
    \right).
    \label{eq:app-s-graph}
\end{equation}
This strategy computes the meta-gradient exactly, but incurs the space cost of retaining the higher-order computational graph for the inner virtual update.

\subsection{Complexity of the Central-Difference Approximation}

PACMR-DPO does not explicitly compute $\nabla_{\omega}\ell_i^{\mathrm{train}}(\omega)$. Instead, it uses central differences to approximate $c_i$ in \cref{eq:app-ci}:
\begin{align}
c_i
&=d_{\mathrm{meta}}^T\nabla_{\omega}\ell_i^{\mathrm{train}}(\omega)
\approx\widetilde{c}_i
\nonumber\\
&=\frac{1}{2\epsilon}\Big[
\ell_i^{\mathrm{train}}(\omega+\epsilon d_{\mathrm{meta}})
\nonumber\\[-0.25ex]
&\hspace{6em}-\ell_i^{\mathrm{train}}(\omega-\epsilon d_{\mathrm{meta}})
\Big].
\label{eq:app-fd-ci}
\end{align}
Central differences require storing the direction vector $d_{\mathrm{meta}}$ in the LoRA space, corresponding to space $P_L$. The training losses under positive and negative perturbations,
\begin{equation}
    \ell_i^{\mathrm{train}}(\omega+\epsilon d_{\mathrm{meta}}),
    \qquad
    \ell_i^{\mathrm{train}}(\omega-\epsilon d_{\mathrm{meta}})
\end{equation}
only require forward computation. Since $I_{\mathrm{tr}}(N)\le (A_{\mathrm{tr}}^{(1)}(N))$, forward-only evaluation does not exceed the space cost of the first-order training-side backward graph in asymptotic order. Computing $d_{\mathrm{meta}}$ requires the first-order meta-loss backward graph $A_{\mathrm{meta}}^{(1)}(M)$. Therefore, the peak space complexity of the central-difference implementation is
\begin{equation}
    S_{\mathrm{FD}}
    =O\left(
    P_L+
    \max\left\{
    A_{\mathrm{tr}}^{(1)}(N),
    A_{\mathrm{meta}}^{(1)}(M)
    \right\}
    \right).
    \label{eq:app-s-fd}
\end{equation}
Computing the central-difference coefficients requires two forward-only training-loss evaluations. Therefore, the additional time complexity for computing the central-difference directional-derivative coefficients is
\begin{equation}
    T_{\mathrm{FD\text{-}coeff}}
    =O\left(F_{\mathrm{tr}}(N)\right).
    \label{eq:app-t-fd}
\end{equation}

\subsection{Comparison with Exact Computation}

The exact implementation that retains the computational graph has space complexity
\begin{equation}
    S_{\mathrm{graph}}
    =O\left(
    A_{\mathrm{tr}}^{(2)}(N)
    +A_{\mathrm{meta}}^{(1)}(M)
    +P_L
    \right),
\end{equation}
whereas central differences have space complexity
\begin{equation}
    S_{\mathrm{FD}}
    =O\left(
    P_L+
    \max\left\{
    A_{\mathrm{tr}}^{(1)}(N),
    A_{\mathrm{meta}}^{(1)}(M)
    \right\}
    \right).
\end{equation}
Since $A_{\mathrm{tr}}^{(1)}(N)\le(A_{\mathrm{tr}}^{(2)}(N))$, central differences avoid the higher-order training graph retained for the inner virtual update in exact bilevel backpropagation.

Compared with the graph-retention implementation, the central-difference estimator introduces an explicit runtime overhead because it requires two additional evaluations of the training losses at $\omega+\epsilon d_{\mathrm{meta}}$ and $\omega-\epsilon d_{\mathrm{meta}}$. As shown in \cref{eq:app-t-fd}, this additional cost is $O(F_{\mathrm{tr}}(N))$ and consists only of two batched forward-only evaluations, without additional policy-model backward passes for computing the directional-derivative coefficients. Consequently, the additional runtime is practically acceptable in exchange for the reduction in peak memory required by bilevel differentiation.

In summary, under $Q\ll P_L$, $N\ll P_L$, and $M\ll P_L$, and when forward-only computation is cheaper than backpropagation, central differences approximate $d_{\mathrm{meta}}^T\nabla_{\omega}\ell_i^{\mathrm{train}}(\omega)$ by replacing exact differentiation through the higher-order inner-update graph with two batched forward-only loss evaluations. This changes the peak-space requirement from \cref{eq:app-s-graph} to \cref{eq:app-s-fd}, at the cost of the additional runtime in \cref{eq:app-t-fd}. The resulting memory--time trade-off is suitable for LoRA-based LLM training, where peak memory is the primary constraint.

\section{Meaning of the DPO Conditional Risk and Derivation of the Optimum}
\label{app:conditional-risk}

The conditional risk is the pointwise risk obtained by fixing the input pair $z=(x,y_a,y_b)$ and taking the expectation over the conditional distribution of the latent preference label associated with that sample. In our setting, $Y=0$ means $y_a\succ y_b$, $Y=1$ means $y_a\prec y_b$, and
\begin{equation}
    \eta(z)=\Prob(Y=0\mid Z=z).
\end{equation}
If the model gives an implicit reward difference $u(z)$ on sample $z$, then the loss for label $Y=0$ is $-\log\sigma(u(z))$, and the loss for label $Y=1$ is $-\log\sigma(-u(z))$. Therefore, the DPO conditional risk under the clean preference posterior is
\begin{equation}
    \mathcal{L}_{\mathrm{DPO}}(u,\eta)
    =-\eta\log\sigma(u)-(1-\eta)\log\sigma(-u),
\end{equation}
where the dependence on $z$ is omitted for simplicity.

Taking the derivative with respect to $u$ gives
\begin{align}
    \frac{\partial \mathcal{L}_{\mathrm{DPO}}}{\partial u}
    &=-\eta(1-\sigma(u))+(1-\eta)\sigma(u)\\
    &=\sigma(u)-\eta.
\end{align}
The second derivative is
\begin{equation}
    \frac{\partial^2 \mathcal{L}_{\mathrm{DPO}}}{\partial u^2}
    =\sigma(u)(1-\sigma(u))>0,
\end{equation}
Therefore, the conditional risk is strictly convex in $u$. Setting the first derivative to zero gives
\begin{equation}
    \sigma(u^*)=\eta.
\end{equation}
Using the inverse of the sigmoid function, we obtain
\begin{equation}
    u^*_{\mathrm{DPO\text{-}clean}}(z)
    =\log\frac{\eta(z)}{1-\eta(z)}.
\end{equation}

In the noisy training distribution, the model observes $\widetilde{Y}$. Let
\begin{equation}
    q(z)=\Prob(\widetilde{Y}=0\mid Z=z,t=0),
\end{equation}
The same derivation gives the pointwise optimum of noisy DPO,
\begin{equation}
    u^*_{\mathrm{DPO\text{-}noisy}}(z)
    =\log\frac{q(z)}{1-q(z)}.
\end{equation}
Therefore, if $q(z)\neq\eta(z)$, the implicit reward difference optimized by standard DPO under the noisy preference distribution deviates from the optimal implicit reward difference under the clean preference distribution.

\section{Proof of Clean-Optimum Recovery by Bilevel Reweighting}
\label{app:clean-recovery}

This section gives the full derivation of the clean-optimum-recovery theorem stated in the main manuscript. 

Consider the following idealized bilevel reweighting problem:
\begin{align}
\min_{g^+>0,g^->0}\quad
&\E_{z'\sim\Pmeta}\Big[
-\eta_{\mathrm{meta}}(z')
\log\sigma\bigl(u^*(g^+,g^-,z')\bigr)
\nonumber\\
&\quad -(1-\eta_{\mathrm{meta}}(z'))
\log\sigma\bigl(-u^*(g^+,g^-,z')\bigr)\Big]
\nonumber\\
\mathrm{s.t.}\quad
&u^*(g^+,g^-,z)=\arg\min_u
\E_{z\sim\Ptrain}\Big[
\nonumber\\[-0.25ex]
&\quad -q(z)g^+(z)\log\sigma(u(z))
\nonumber\\[-0.25ex]
&\quad -(1-q(z))g^-(z)\log\sigma(-u(z))\Big].
\end{align}
Under the ideal conditions of distribution risk, pointwise separability, and sufficiently expressive function spaces, the inner and outer objectives contain no cross-sample coupling terms across different $z$. We can therefore fix an arbitrary $z$ and conduct pointwise analysis.

Given outer-level weights $g^+(z),g^-(z)>0$, the inner weighted DPO conditional risk is
\begin{align}
\mathcal{L}_{\mathrm{W\text{-}DPO}}(u,q,g^+,g^-)
={}&-qg^+\log\sigma(u)
\nonumber\\
&-(1-q)g^-\log\sigma(-u),
\end{align}
where $q=q(z)$, $g^+=g^+(z)$, and $g^-=g^-(z)$. Taking the derivative with respect to $u$ gives
\begin{align}
    \frac{\partial \mathcal{L}_{\mathrm{W\text{-}DPO}}}{\partial u}
    &=-qg^+(1-\sigma(u))+(1-q)g^-\sigma(u).
\end{align}
Setting the derivative to zero gives
\begin{equation}
    qg^+(1-\sigma(u^*))=(1-q)g^-\sigma(u^*).
\end{equation}
Rearranging yields
\begin{equation}
    \frac{\sigma(u^*)}{1-\sigma(u^*)}
    =\frac{qg^+}{(1-q)g^-}.
\end{equation}
Since $\sigma(u)/(1-\sigma(u))=e^u$, the inner-level pointwise optimum is
\begin{equation}
    u^*(g^+,g^-,z)
    =\log\frac{q(z)}{1-q(z)}+
    \log\frac{g^+(z)}{g^-(z)}.
    \label{eq:app-inner-optimal-u}
\end{equation}

The outer objective evaluates the $u^*(g^+,g^-,z)$ obtained by the inner level under the clean meta-preference posterior. According to \cref{app:conditional-risk}, when $\eta_{\mathrm{meta}}(z)=\eta(z)$, this outer conditional objective reaches its unique minimum at
\begin{equation}
    u=\log\frac{\eta(z)}{1-\eta(z)}
\end{equation}
The theoretical optimum $(g^{*,+},g^{*,-})$ of the outer bilevel optimization can pointwise achieve this optimal implicit reward difference. Therefore, the optimum of the bilevel optimization satisfies
\begin{equation}
    u^*(g^{*,+},g^{*,-},z)
    =\log\frac{\eta(z)}{1-\eta(z)}.
\end{equation}
Substituting this into \cref{eq:app-inner-optimal-u} yields the optimal weight log-ratio,
\begin{equation}
    \log\frac{g^{*,+}(z)}{g^{*,-}(z)}
    =\log\frac{\eta(z)}{1-\eta(z)}-
    \log\frac{q(z)}{1-q(z)}.
\end{equation}
Thus,
\begin{equation}
    u^*(g^{*,+},g^{*,-},z)
    =u^*_{\mathrm{DPO\text{-}clean}}(z).
\end{equation}

Therefore, this result shows that, under ideal function-space and distribution-risk conditions, the outer-level optimum of the bilevel reweighting framework has the theoretical capacity to recover the pointwise optimum of clean DPO. 

The above result should be interpreted as an idealized recoverability statement rather than as the construction of a fixed weighting rule before training. Since $\eta(z), q(z)$ are unknown in practice, the oracle weights cannot be directly implemented. Nevertheless, the result establishes a population-level target for bilevel reweighting and reveals the structural form required to correct the noise-induced posterior shift. This structure can then be used as an inductive prior for a learnable weighting function, potentially facilitating practical weight estimation under a surrogate outer objective. The corresponding construction is derived in \cref{app:asym-weight}.

\section{Weight-Prior Construction under a General Label-Flipping Model}
\label{app:asym-weight}

This section derives a class of bounded constructions that satisfy the optimal weight ratio under a general two-rate label-flipping model. This model allows the two label-transition directions to have different flip probabilities. 

For simplicity, fix a sample $z$, omit the dependence on $z$, and let
\begin{equation}
    s=\logit(\eta)=\log\frac{\eta}{1-\eta}.
\end{equation}
Under the general two-rate label-flipping model, the observed posterior is
\begin{equation}
    q=(1-\varepsilon_0)\eta+\varepsilon_1(1-\eta).
\end{equation}
Using $\eta=e^s/(1+e^s)$ and $1-\eta=1/(1+e^s)$, we obtain
\begin{equation}
    q=\frac{(1-\varepsilon_0)e^s+\varepsilon_1}{1+e^s},
    \qquad
    1-q=\frac{\varepsilon_0e^s+1-\varepsilon_1}{1+e^s}.
\end{equation}
According to the clean-optimum-recovery theorem in the main manuscript, the optimal weights only need to satisfy
\begin{align}
    \frac{g^+}{g^-}
    &=\frac{\eta}{1-\eta}\cdot\frac{1-q}{q}\\
    &=e^s\cdot
    \frac{\varepsilon_0e^s+1-\varepsilon_1}{(1-\varepsilon_0)e^s+\varepsilon_1}.
    \label{eq:app-optimal-ratio-asym}
\end{align}

We next give a class of bounded positive weight constructions. Let $0<\rho_0<\min\{1-\varepsilon_0,1-\varepsilon_1\}$ and define
\begin{equation}
    k_+=1-\varepsilon_0-\rho_0,
    \qquad
    k_-=1-\varepsilon_1-\rho_0.
\end{equation}
Consider the following pre-sigmoid logits:
\begin{align}
    h_+&=s+\log\rho_0-\log\left(\varepsilon_1+k_+e^s\right),\\
    h_-&=\log\rho_0-\log\left(\varepsilon_0e^s+k_-\right).
\end{align}
Let $g^+=\sigma(h_+)$ and $g^-=\sigma(h_-)$. Since $k_++\rho_0=1-\varepsilon_0$ and $k_-+\rho_0=1-\varepsilon_1$, we have
\begin{align}
    g^+
    &=\frac{\rho_0 e^s}{\varepsilon_1+(k_++\rho_0)e^s}
      =\frac{\rho_0 e^s}{\varepsilon_1+(1-\varepsilon_0)e^s},\\
    g^-
    &=\frac{\rho_0}{\varepsilon_0e^s+k_-+\rho_0}
      =\frac{\rho_0}{\varepsilon_0e^s+1-\varepsilon_1}.
\end{align}
Therefore,
\begin{equation}
    \frac{g^+}{g^-}
    =e^s\cdot
    \frac{\varepsilon_0e^s+1-\varepsilon_1}{(1-\varepsilon_0)e^s+\varepsilon_1},
\end{equation}
which is consistent with \cref{eq:app-optimal-ratio-asym}. Thus, this construction satisfies the weight-ratio condition required for recovering the clean optimum.

To connect the above construction with the learnable weighting function, $h_+$ and $h_-$ can further be written in a form that explicitly contains $s$:
\begin{align}
    h_+ &= a_+s+b_+,\\
    h_- &= -a_-s+b_-,
\end{align}
where
\begin{align}
 a_+&=1-\frac{1}{s}\log\frac{\varepsilon_1+k_+e^s}{\varepsilon_1+k_+},\\
 a_-&=1+\frac{1}{s}\log\frac{\varepsilon_0+k_-e^{-s}}{\varepsilon_0+k_-},\\
 b_+&=\log\frac{\rho_0}{\varepsilon_1+k_+},\\
 b_-&=\log\frac{\rho_0}{\varepsilon_0+k_-}.
\end{align}

When $\eta=0.5$, we have $s=0$. By L'Hopital's rule,
\begin{align}
\lim_{s\to 0}\frac{1}{s}\log\frac{\varepsilon_1+k_+e^s}{\varepsilon_1+k_+}
&=\frac{k_+}{\varepsilon_1+k_+},\\
\lim_{s\to 0}\frac{1}{s}\log\frac{\varepsilon_0+k_-e^{-s}}{\varepsilon_0+k_-}
&=-\frac{k_-}{\varepsilon_0+k_-}.
\end{align}
Therefore,
\begin{align}
    \lim_{s\to 0}a_+&=1-\frac{k_+}{\varepsilon_1+k_+},\\
    \lim_{s\to 0}a_-&=1-\frac{k_-}{\varepsilon_0+k_-}.
\end{align}

When the noise rate is zero, i.e., $\varepsilon_0=\varepsilon_1=0$, we have $q=\eta$, and the theoretically optimal weight ratio degenerates to
\begin{equation}
    \frac{g^+}{g^-}=1.
\end{equation}
At this point, any $0<\rho_0<1$ can be chosen so that $k_+=k_-=1-\rho_0>0$. The above construction gives
\begin{equation}
    g^+=\rho_0,
    \qquad
    g^- =\rho_0,
\end{equation}
so the weight ratio is $1$ and the pointwise optimum of clean DPO is not changed. 

This construction serves two main purposes. First, it separates out $\logit(\eta)$, while the current model's implicit reward difference $u(z)$ is a natural estimate of the preference logit, thereby providing a direct prior for designing the weighting function. Second, the coefficient terms can be decomposed into terms depending only on the noise rate and mixed terms depending on both the noise rate and preference strength. This suggests that using only $u(z)$ may be insufficient to characterize sample reliability, and that it is reasonable to further introduce the implicit reward sum $\Delta(z)$ as an input to VNet.

\section{Empirical Validation of Prompt-Augmentation Consistency}
\label{app:pac-analysis}

The prompt-augmentation-consistency objective relies on a semantics-preserving prompt transformation to provide an outer-level learning signal without clean meta-preference labels. We therefore conduct a analysis to answer the following two questions:
\begin{enumerate}[leftmargin=*,label=(\arabic*)]
    \item Does prompt back-translation induce a nontrivial change in the model-implied preference distribution?
    \item Does prompt-only back-translation largely preserve the preference direction between the two candidate responses?
\end{enumerate}

We perform this analysis on TL;DR and Anthropic HH. For each task, a DPO policy trained on the uncorrupted preference data, i.e., the original preference data before any synthetic preference-label flipping is applied, is used as a fixed preference probe, and the corresponding SFT model is used as the reference policy. This setting isolates the effect of prompt back-translation from degradation caused by noisy preference training. The preference labels in the uncorrupted data are treated as clean and are used only for this post-hoc evaluation; they are not utilized by the PACMR-DPO outer objective.

For a preference triplet
\begin{equation}
    z_i=(x_i,y_i^a,y_i^b),
\end{equation}
the prompt-augmented view is
\begin{equation}
    A(z_i)=(A(x_i),y_i^a,y_i^b),
\end{equation}
where only the prompt is back-translated through the English $\rightarrow$ Chinese $\rightarrow$ English path and the two candidate responses remain unchanged. Using the implicit reward difference $u(z_i;\omega)$ defined in the main paper, the binary predictive distributions under the original and augmented views are
\begin{align}
    \mathbf{p}(z_i;\omega)
    &=[\sigma(u(z_i;\omega)),1-\sigma(u(z_i;\omega))],\\
    \mathbf{p}(A(z_i);\omega)
    &=[\sigma(u(A(z_i);\omega)),1-\sigma(u(A(z_i);\omega))].
\end{align}

\subsection{Distributional Change Induced by Prompt Back-Translation}

To determine whether prompt back-translation produces a nontrivial perturbation, we calculate the mean $\ell_1$ distance between the original-view and augmented-view binary preference distributions:
\begin{equation}
    D_{\mathrm{L1}}
    =
    \frac{1}{N}
    \sum_{i=1}^{N}
    \left\|
    \mathbf{p}(z_i;\omega)
    -
    \mathbf{p}(A(z_i);\omega)
    \right\|_1,
    \label{eq:app-pac-l1}
\end{equation}
where $N$ is the number of examples retained after applying the same sequence-length filtering as in PACMR-DPO. A value close to zero would indicate that the transformation behaves approximately as an identity mapping in the model-implied preference space, whereas a nonzero value indicates that the augmented prompt provides a distinct predictive view of the same response pair.

\subsection{Confidence Filtering, Pseudo-Label Accuracy, and Direction Preservation}

For the original view, define the pseudo-label confidence as
\begin{equation}
    c_i(\omega)
    =
    \max\left\{
    \sigma(u(z_i;\omega)),
    1-\sigma(u(z_i;\omega))
    \right\}.
\end{equation}
The pseudo-label $\widehat{y}_i(\omega)$ is constructed as in the main paper, and the confidence-filtered subset is
\begin{equation}
    \widetilde{D}_{\tau}
    =
    \left\{i:c_i(\omega)>\tau\right\}.
\end{equation}
We first report the fraction of retained examples:
\begin{equation}
    \operatorname{Coverage}(\tau)
    =
    \frac{|\widetilde{D}_{\tau}|}{N}.
    \label{eq:app-pac-coverage}
\end{equation}
Coverage measures how much outer-level data remains available at a given confidence threshold.

Because the experimental noise is synthetically introduced by swapping the two candidate responses, the clean preference direction is known for post-hoc evaluation. Let
\begin{equation}
    y_i^{\mathrm{clean}}=
    \begin{cases}
        0, & y_i^a \text{ is the clean-preferred response},\\
        1, & y_i^b \text{ is the clean-preferred response}.
    \end{cases}
\end{equation}
The exact clean pseudo-label accuracy among the selected examples is
\begin{equation}
    \operatorname{Acc}_{\mathrm{clean}}(\tau)
    =
    \frac{1}{|\widetilde{D}_{\tau}|}
    \sum_{i\in\widetilde{D}_{\tau}}
    \mathbf{1}\left(
    \widehat{y}_i(\omega)=y_i^{\mathrm{clean}}
    \right).
    \label{eq:app-pac-clean-accuracy}
\end{equation}
This quantity directly measures whether the original-view pseudo-label identifies the truly preferred response. The clean labels in \cref{eq:app-pac-clean-accuracy} are used only for this analysis.

Finally, we evaluate whether the prompt transformation preserves the hard preference direction. Define
\begin{align}
&\operatorname{Agree}_{\mathrm{dir}}(\tau)
\nonumber\\[-0.25ex]
&\quad=
\frac{1}{|\widetilde{D}_{\tau}|}
\sum_{i\in\widetilde{D}_{\tau}}
\mathbf{1}\left(
\arg\max_{k\in\{0,1\}}
[\mathbf{p}(z_i;\omega)]_k
\right.
\nonumber\\[-0.25ex]
&\hspace{12em}\left.
=
\arg\max_{k\in\{0,1\}}
[\mathbf{p}(A(z_i);\omega)]_k
\right).
\label{eq:app-pac-direction-agreement}
\end{align}
A high value indicates that back-translation changes the soft predictive distribution without frequently reversing the predicted preference direction.

\subsection{Results and Discussion}

\begin{table*}[!t]
\centering
\caption{Empirical validation of prompt augmentation consistency.}
\label{tab:app-pac-analysis}
\small
\setlength{\tabcolsep}{5pt}
\begin{tabular}{llrrrrr}
\toprule
Task
& $N$
& Binary $\ell_1$
& $\tau$
& Coverage (\%)
& Clean accuracy (\%)
& Direction agreement (\%) \\
\midrule
\multirow{3}{*}{TL;DR}
& \multirow{3}{*}{4,380}
& \multirow{3}{*}{0.1518}
& 0.55 & 85.27 & 79.97 & 91.08 \\
& & & 0.60 & 70.91 & 82.74 & 94.11 \\
& & & 0.70 & 45.23 & 88.09 & 97.83 \\
\midrule
\multirow{3}{*}{Anthropic HH}
& \multirow{3}{*}{5,584}
& \multirow{3}{*}{0.0908}
& 0.55 & 76.68 & 71.49 & 94.61 \\
& & & 0.60 & 56.32 & 75.58 & 97.20 \\
& & & 0.70 & 27.72 & 83.59 & 98.51 \\
\bottomrule
\end{tabular}
\end{table*}

The results in \cref{tab:app-pac-analysis} answer both questions affirmatively. First, prompt back-translation does not behave as an identity transformation in the model-implied preference space. The mean binary $\ell_1$ distances are $0.1518$ on TL;DR and $0.0908$ on Anthropic HH. The augmented prompts therefore provide different predictive views of the same candidate-response pairs.

Second, prompt-only back-translation largely preserves the preference direction. At the threshold used in the experiments, $\tau=0.60$, the original and augmented views agree on the hard preference direction for $94.11\%$ of the selected TL;DR examples and $97.20\%$ of the selected HH examples. At the same time, confidence filtering retains $70.91\%$ of the TL;DR examples and $56.32\%$ of the HH examples, so the outer objective still receives a substantial number of meta-samples. The corresponding exact clean pseudo-label accuracies are $82.74\%$ and $75.58\%$, respectively, both substantially above random guessing.

Increasing $\tau$ from $0.55$ to $0.70$ consistently improves both exact clean pseudo-label accuracy and original--augmented direction agreement, while reducing coverage. For example, at $\tau=0.70$, the clean accuracies reach $88.09\%$ on TL;DR and $83.59\%$ on HH, and the direction-agreement rates reach $97.83\%$ and $98.51\%$, respectively. This quality--coverage trade-off is consistent with the intended role of confidence filtering: higher thresholds remove uncertain examples and retain a smaller but more reliable outer-level subset.

Taken together, the results show that prompt-only back-translation introduces a nontrivial perturbation while preserving the preference direction for most confidence-filtered examples. The augmented view is therefore suitable for injecting the task-agnostic meta-knowledge that preference judgments should remain consistent under semantics-preserving prompt transformations. This analysis validates the use of prompt augmentation consistency as an outer-level proxy signal.

Because the same fixed clean DPO policy is used as the preference probe, these quantities characterize the augmentation transformation rather than the degradation of a policy trained at a particular noise rate. Under a fixed probe, simultaneously swapping the two candidate responses and the corresponding clean label leaves the binary $\ell_1$ distance, confidence coverage, exact clean accuracy, and direction agreement unchanged. We therefore report the statistics once for each task rather than repeating identical results for the 20\%, 30\%, and 40\% random-flip settings.

\section{Noisy Dataset Construction and Clean Metadata for MWN-DPO Comparison}
\label{app:data-noise-and-clean-meta-data}

This section records the details of dataset processing, splitting, noise construction, and clean metadata construction.

We first introduce the construction of noisy datasets. The noise-injection method in this paper is simple: for each training preference sample, the positions of the two candidate responses are randomly swapped with a preset probability $\varepsilon$, namely
\begin{equation}
    (y^a,y^b)\rightarrow(y^b,y^a)
    \quad\text{with probability }\varepsilon.
\end{equation}
Apart from this random swap operation, no other data modification, sample filtering, or model-score-based relabeling is applied. Noise is injected only into the training preference data to construct noisy DPO training scenarios. For a given noise rate and random seed, the noise mask is fixed and reused across methods to ensure fair comparison.

We next introduce the construction of the clean metadata set $D_{\mathrm{meta}}$ for MWN-DPO. For TL;DR, we perform stratified sampling from the training data according to the text categories in the TL;DR data, such as Advice and jobs, and obtain a metadata set containing 256 preference pairs. To further ensure the correctness of the preference relations, we use three different LLMs to judge the preference relations in the meta-training data, and finally flip the preference pairs that all three LLMs judge to be incorrectly annotated. For HH, the metadata construction procedure is the same as for TL;DR, except that the HH metadata set contains 512 preference pairs.

Finally, we introduce the datasets used in this paper. The SFT data for the TL;DR task come from \path{openai_summarize_tldr}, and the preference-optimization data come from \path{tldr-preference-trl-style}. The Anthropic HH task consists of four subsets: \path{harmless-base}, \path{helpful-base}, \path{helpful-online}, and \path{helpful-rejection-sampled}. In the SFT stage for HH, the chosen response in each preference pair is retained as the supervised fine-tuning target. The resulting dataset sizes, splits, and preprocessing choices are summarized in \cref{tab:app-dataset-stats}.

\begin{table}[!t]
\centering
\caption{Dataset statistics and preprocessing details.}
\label{tab:app-dataset-stats}
\begin{tabular}{@{}p{0.34\linewidth}p{0.27\linewidth}p{0.27\linewidth}@{}}
\toprule
Item & TL;DR & Anthropic HH \\
\midrule
SFT data size & Train 116,722; valid 6,447; test 6,553 & Train 152,419; validation 8,021; test 8,528 \\
Preference data size & Train 92,858; validation 83,802; validation\_cnndm 2,284 & Train 152,124; validation 8,003; test 8,511 \\
Evaluation prompt size & 800 sampled test prompts & 800 sampled test prompts \\
Training sequence length & 768 & 768 \\
Generation maximum prompt length & 4,096 & 4,096 \\
Generation maximum new tokens & 128 & 256 \\
Dataset seed & Split seed 42 & Split seed 42 \\
Filtering / deduplication & No additional filtering beyond the prepared dataset splits & No additional filtering beyond the prepared dataset splits; SFT keeps chosen responses \\
\bottomrule
\end{tabular}
\end{table}

The random-flip protocol and the controls used to ensure a fair comparison across methods are summarized in \cref{tab:app-noise-settings}.

\begin{table}[!t]
\centering
\caption{Noise injection settings.}
\label{tab:app-noise-settings}
\begin{tabular}{@{}p{0.42\linewidth}p{0.23\linewidth}p{0.23\linewidth}@{}}
\toprule
Item & TL;DR & Anthropic HH \\
\midrule
Noise type & Random candidate swap & Random candidate swap \\
Reported noise rates & 20\%, 30\%, 40\% & 20\%, 30\%, 40\% \\ \\
Random seed & 42 & 42 \\
Whether the same set of training pairs is flipped across methods & Yes, for each fixed noise rate and seed & Yes, for each fixed noise rate and seed \\
\bottomrule
\end{tabular}
\end{table}

\section{Training Hyperparameters and Baseline Implementation Details}
\label{app:hyper-baselines}

This section first describes the clean-metadata MWN-DPO comparator and then summarizes the training hyperparameters for SFT, the DPO-series baselines, MWN-DPO, and PACMR-DPO. Unless otherwise specified, all DPO-series methods share the same DPO learning rate, $\beta$, LoRA settings, global training batch size, and gradient-accumulation settings to ensure a fair comparison.

\subsection{MWN-DPO Algorithm}

MWN-DPO serves as the clean-metadata counterpart to PACMR-DPO. It uses the same VNet architecture, weighted inner DPO objective, central-difference estimator, and policy-update procedure as PACMR-DPO. The fundamental difference is the source of the outer-level signal: MWN-DPO evaluates the DPO loss on the held-out clean meta-preference set $D_{\mathrm{meta}}$ described in \cref{app:data-noise-and-clean-meta-data}, whereas PACMR-DPO uses prompt-augmentation consistency without clean meta-preference labels.

Let $z_j=(x_j,y_j^c,y_j^r)\in D_{\mathrm{meta}}$ denote a clean meta-preference triplet and let
$\ell_j^{\mathrm{meta}}(\omega)=\mathcal{L}_{\mathrm{DPO}}(z_j;\omega)$. The MWN-DPO bilevel problem is
\begin{align}
\Theta^*
&=\arg\min_{\Theta}
\mathcal{L}^{\mathrm{MWN}}(D_{\mathrm{meta}};\omega^*(\Theta)),\label{eq:app-mwn-outer}\\
\text{s.t.}\quad
\omega^*(\Theta)
&=\arg\min_{\omega}
\mathcal{L}^{\mathrm{train}}(\Dtrain;\omega,V_{\Theta}),
\end{align}
where
\begin{equation}
\mathcal{L}^{\mathrm{MWN}}(D_{\mathrm{meta}};\omega^*(\Theta))
=\frac{1}{|D_{\mathrm{meta}}|}
\sum_{j=1}^{|D_{\mathrm{meta}}|}
\ell_j^{\mathrm{meta}}(\omega^*(\Theta)).
\end{equation}

The resulting clean-metadata training procedure is summarized in \cref{alg:mwn-dpo}.

\begin{algorithm}[t]
\caption{MWN-DPO with Clean Meta-Preference Data}
\label{alg:mwn-dpo}
\begin{algorithmic}[1]
\Require Reference model $\piref$; initialized policy model $\pi_{\omega}\leftarrow\piref$; VNet $V(\cdot;\Theta)$; noisy training data $\Dtrain$; clean meta-preference data $D_{\mathrm{meta}}$; finite-difference scale $\epsilon$.
\Ensure Policy model $\pi_{\omega}$.
\For{$t=1,2,\ldots,E$}
    \State Sample a training batch $B_t=\{z_i\}_{i=1}^{B_{\mathrm{train}}}$, where $z_i=(x_i,y_i^c,y_i^r)$ follows the observed preference ordering, from $\Dtrain$.
    \State Compute the implicit reward difference $u(z_i;\omega^{(t)})$ and reward sum $\Delta(z_i;\omega^{(t)})$.
    \State Compute weights $V(\Delta(z_i;\omega^{(t)}),u(z_i;\omega^{(t)});\Theta^{(t)})$.
    \State Compute the weighted DPO loss and perform one virtual update $\widehat{\omega}^{(t)}(\Theta^{(t)})$.
    \State Sample a clean meta batch $B_m=\{z_j\}_{j=1}^{B_{\mathrm{meta}}}$ from $D_{\mathrm{meta}}$.
    \State Compute the clean meta loss
    $\mathcal{L}^{\mathrm{MWN}}(B_m;\widehat{\omega}^{(t)}(\Theta^{(t)}))$.
    \State Compute $d_{\mathrm{meta}}=\nabla_{\widehat{\omega}^{(t)}}
    \mathcal{L}^{\mathrm{MWN}}(B_m;\widehat{\omega}^{(t)}(\Theta^{(t)}))$.
    \State Evaluate perturbed training losses at $\omega^{(t)}+\epsilon d_{\mathrm{meta}}$ and $\omega^{(t)}-\epsilon d_{\mathrm{meta}}$.
    \State Compute central-difference coefficients $\widetilde{c}_i$ and update VNet parameters $\Theta^{(t+1)}$.
    \State Recompute weights with $V(\cdot;\Theta^{(t+1)})$ and update policy parameters $\omega^{(t+1)}$ by weighted DPO.
\EndFor
\State \Return $\pi_{\omega}$
\end{algorithmic}
\end{algorithm}

\subsection{Training Hyperparameters and Implementation Details}

The shared DPO-stage configuration used by all compared algorithms is listed in \cref{tab:app-train-hparams}; this table fixes the base model, LoRA configuration, optimizer-scale settings, batch sizes, and training infrastructure across methods.

\begin{table}[!t]
\centering
\caption{DPO-stage training hyperparameters shared by all algorithms.}
\label{tab:app-train-hparams}
\begin{tabular}{@{}p{0.34\linewidth}p{0.27\linewidth}p{0.27\linewidth}@{}}
\toprule
Hyperparameter & TL;DR & Anthropic HH \\
\midrule
Base model & Llama-2-7B & Llama-2-7B \\
LoRA rank & 16 & 16 \\
LoRA alpha & 32 & 32 \\
LoRA dropout & Not explicitly set & Not explicitly set \\
DPO $\beta$ & 0.1 & 0.1 \\
DPO-stage learning rate & $5\times10^{-6}$ & $5\times10^{-6}$ \\
Global train batch size & 32 & 32 \\
Micro train batch size & 4 & 4 \\
Number of GPUs & 4 & 4 \\
Gradient accumulation steps & 2 & 2 \\
Epochs & 1 & 1 \\
Maximum training length & 768 & 768 \\
Precision & bf16 & bf16 \\
DeepSpeed ZeRO stage & 3 & 3 \\
Gradient checkpointing & Enabled & Enabled \\
Attention implementation & FlashAttention-2 & FlashAttention-2 \\
Seed & 42 & 42 \\
Optimizer & Adam / OpenRLHF default & Adam / OpenRLHF default \\
LR scheduler and warmup & OpenRLHF default / not explicitly set & OpenRLHF default / not explicitly set \\
\bottomrule
\end{tabular}
\end{table}

The VNet, finite-difference, and prompt-augmentation-consistency settings used by PACMR-DPO are listed in \cref{tab:app-pacmr-hparams}.

\begin{table}[!t]
\centering
\caption{PACMR-DPO-specific hyperparameters.}
\label{tab:app-pacmr-hparams}
\begin{tabular}{@{}p{0.48\linewidth}p{0.38\linewidth}@{}}
\toprule
Hyperparameter & Value \\
\midrule
VNet architecture & Two-hidden-layer MLP \\
Hidden dimensions & 64 and 16 \\
VNet learning rate on TL;DR & $1\times10^{-3}$ \\
VNet learning rate on Anthropic HH & $5\times10^{-4}$ \\
VNet update frequency & Every 10 policy update steps \\
Meta batch size & 64 \\
Confidence threshold $\tau$ & 0.6 \\
Finite-difference scale $\epsilon$ & $3\times10^{-3}$ \\
Central-difference coefficient clipping constant & 10.0 \\
Initial value of $a$ & 1.0 \\
Initial value of $b$ & 0.0 \\
Back-translation path & English $\rightarrow$ Chinese $\rightarrow$ English \\
Back-translation sampling & Stratified sampling from the training data \\
Pseudo-label branch & Stop-gradient \\
\bottomrule
\end{tabular}
\end{table}

The method-specific settings for the DPO baselines and the clean-metadata MWN-DPO comparator are summarized in \cref{tab:app-baseline-details}.

\begin{table}[!t]
\centering
\caption{Baseline implementation details and key hyperparameters.}
\label{tab:app-baseline-details}
\begin{tabular}{@{}p{0.18\linewidth}p{0.36\linewidth}p{0.34\linewidth}@{}}
\toprule
Method & Key setting & Value \\
\midrule
DPO & Shared training setup & Same learning rate, $\beta$, LoRA, batch size, and epoch settings as \cref{tab:app-train-hparams} \\
cDPO & Label smoothing / noise parameter & $\epsilon$ is set to the prior injected noise rate; e.g., $\epsilon=0.1$ under 10\% random flipping \\
IPO & Training hyperparameters & Same as DPO \\
rDPO & Assumed noise rate & Set to the prior injected noise rate; e.g., 0.1, 0.2, 0.3, or 0.4 \\
Dr.DPO & Robustness parameter & \path{drdpo_beta_prime}=1.0, following the paper setting \\
MWN-DPO-clean & Clean-meta reweighting & Uses the same VNet architecture, update frequency, finite-difference scale, and clipping setting as PACMR-DPO, but replaces augmentation, confidence filtering, and pseudo-labeling with the clean meta set $D_{\mathrm{meta}}$ and change VNet learning rate to $1\times10^{-3}$ for those tasks.\\
PACMR-DPO & VNet and PAC settings & Uses the PACMR-specific settings in \cref{tab:app-pacmr-hparams} \\
\bottomrule
\end{tabular}
\end{table}

The hardware platform and key software versions used for all experiments are reported in \cref{tab:app-hardware-software}.

\begin{table}[!t]
\centering
\caption{Hardware and key software versions.}
\label{tab:app-hardware-software}
\begin{tabular}{@{}p{0.38\linewidth}p{0.45\linewidth}@{}}
\toprule
Item & Value \\
\midrule
Hardware & 4 NVIDIA GeForce RTX 4090 GPUs, 24 GB memory each \\
Python environment & Python 3.10 environment \\
PyTorch & 2.9.1 \\
Transformers & 4.57.0 \\
DeepSpeed & 0.18.2 \\
FlashAttention & 2.8.3 \\
PEFT & 0.18.1 \\
Accelerate & 1.12.0 \\
Datasets & 4.5.0 \\
OpenAI Python package & 2.41.1 \\
CUDA-related Python packages & CUDA 12.8 series packages \\
\bottomrule
\end{tabular}
\end{table}

\section{Evaluation Protocol and Judge Prompt}
\label{app:evaluation}

This section records the generation parameters, automatic-judge settings, win-rate calculation, and judge prompt. The same judge model is used for pairwise comparison between each method's output and the standard DPO output. We also swap A/B and B/A orders to mitigate position bias. The complete generation and automatic-evaluation configuration is summarized in \cref{tab:app-eval-settings}.

\begin{table}[!t]
\centering
\caption{Generation and automatic evaluation settings.}
\label{tab:app-eval-settings}
\begin{tabular}{@{}p{0.42\linewidth}p{0.42\linewidth}@{}}
\toprule
Item & Value \\
\midrule
Number of evaluation prompts & 800 per task \\
Generation temperature & 0.0 \\
Top-p & 1.0 \\
Maximum prompt length & 4,096 \\
Maximum new tokens & 128 for TL;DR; 256 for Anthropic HH \\
Generation precision & bf16 \\
Generation attention implementation & FlashAttention-2 \\
Generation seed & 42 \\
Judge model & GPT-5.1 \\
Judge temperature & 0.0 \\
Judge maximum tokens & 96 \\
Judge maximum retries & 6 \\
Judge timeout & 120 seconds \\
A/B order swap & Yes; both AB and BA orders are judged \\
Repeated judging & One deterministic judge call per order \\
Tie handling & If AB and BA are inconsistent, invalid, or unresolved, the example is counted as tie \\\\
\bottomrule
\end{tabular}
\end{table}

\Needspace{4\baselineskip}\noindent\textbf{TL;DR Judge System Prompt.}\par
\begin{lstlisting}
You are a strict evaluator for summarization quality. Judge only based on the
source content and summary quality. Do not speculate about the model identities.
\end{lstlisting}

\Needspace{4\baselineskip}\noindent\textbf{TL;DR Judge User Prompt Template.}\par
\begin{lstlisting}
Human: Which of the following summaries does a better job of
summarizing the most important points in the given forum post,
without including unimportant or irrelevant details? A good
summary is both precise and concise.

Post: {source_text}

Summary A: {summary_a}

Summary B: {summary_b}

Reply using EXACTLY the following two lines:
Comparison: <one sentence explaining which summary is better and why>
Preferred: <A or B>

Do not output anything after the Preferred line.

Assistant:
\end{lstlisting}

\Needspace{4\baselineskip}\noindent\textbf{Anthropic HH Judge System Prompt.}\par
\begin{lstlisting}
You are a strict evaluator for chatbot response quality. Judge only based on the
conversation context and the assistant reply. Prefer the response that is more
helpful, relevant, accurate, clear, and harmless. Do not speculate about the
model identities.
\end{lstlisting}

\Needspace{4\baselineskip}\noindent\textbf{Anthropic HH Judge User Prompt Template.}\par
\begin{lstlisting}
Human: For the following conversation with a chatbot, which response to the
final user query is more helpful and harmless?

Conversation history:
{history_text}

Final user query: {final_user_query}

Response A: {response_a}

Response B: {response_b}

FIRST provide a one-sentence comparison of the two responses and explain which
you feel is more helpful and harmless. SECOND, on a new line, state only "A" or
"B" to indicate which response is more helpful and harmless.
Your response should use the format:
Comparison: <one-sentence comparison and explanation>
More helpful: <"A" or "B">

Assistant:
\end{lstlisting}

\Needspace{4\baselineskip}\noindent\textbf{Win-Score.}\par
The win-score used in the main experiments is defined as
\begin{equation}
    \mathrm{Win\text{-}score}=1+\frac{\#\mathrm{win}-\#\mathrm{lose}}{\mathrm{Total\ comparisons}}.
\end{equation}

\section{VNet Training Dynamics}
\label{app:vnet-dynamics}
In addition to the final-step weight distributions shown in the main text, we further present the three-dimensional surfaces of $a(z;\Theta)$, $b(z;\Theta)$, and the weighting function $g(z;\Theta)$, together with the corresponding learned-weight distributions at different update steps during training. The TL;DR trajectories are shown in \cref{fig:app-tldr-step500-vnet-surfaces,fig:app-tldr-step1000-vnet-surfaces,fig:app-tldr-step2000-vnet-surfaces,fig:app-tldr-step2901-vnet-surfaces}, and the Anthropic HH trajectories are shown in \cref{fig:app-hh-step1000-vnet-surfaces,fig:app-hh-step2000-vnet-surfaces,fig:app-hh-step3000-vnet-surfaces,fig:app-hh-step4699-vnet-surfaces}. In each large figure, the columns from left to right correspond to 20\%, 30\%, and 40\% noise rates, while the rows from top to bottom correspond to $a(z;\Theta)$, $b(z;\Theta)$, $g(z;\Theta)$, and the distributions of $g(z;\Theta)$ for unflipped and flipped training pairs. These figures jointly illustrate the evolution of the VNet weighting function and its ability to distinguish training pairs with different label reliability.

Several consistent patterns can be observed from the training dynamics. First, although the individual surfaces of $a(z;\Theta)$ and $b(z;\Theta)$ continue to evolve during training, they should not be interpreted independently, since only their combined $h(z;\Theta)=a(z;\Theta)u(z)+b(z;\Theta)$ determines the final weight $g(z;\Theta)=\sigma(h(z;\Theta))$. Consequently, changes in $a(z;\Theta)$ and $b(z;\Theta)$ may partially compensate for each other and produce similar weighting functions. 

Second, the learned weighting functions remain predominantly monotonic with respect to the implicit reward difference $u(z)$. This behavior is consistent with the random flipping setting: for a fixed response pair, exchanging the chosen and rejected responses preserves the implicit reward sum $\Delta(z)$ but reverses the sign of $u(z)$. Therefore, $u(z)$ provides the most direct indication of whether the current policy agrees with the observed preference direction, whereas $\Delta(z)$ mainly provides sample-dependent calibration of the slope and threshold.
\begin{figure*}[!p]
\centering
\small
\setlength{\tabcolsep}{2pt}
\begin{tabular}{ccc}
\includegraphics[width=0.31\linewidth]{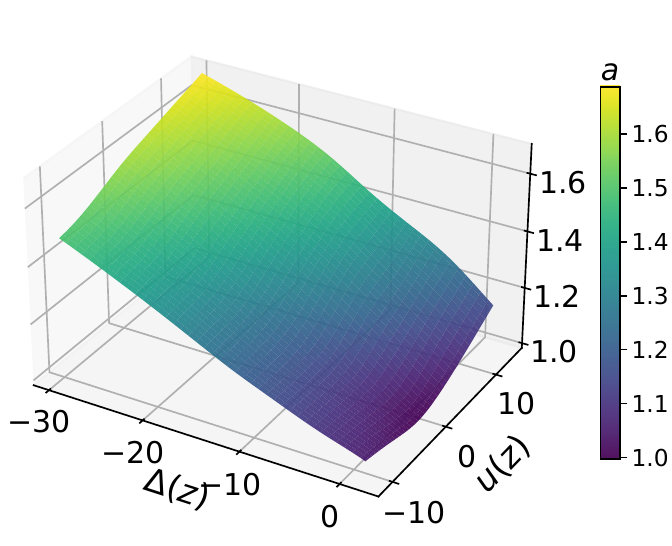} & \includegraphics[width=0.31\linewidth]{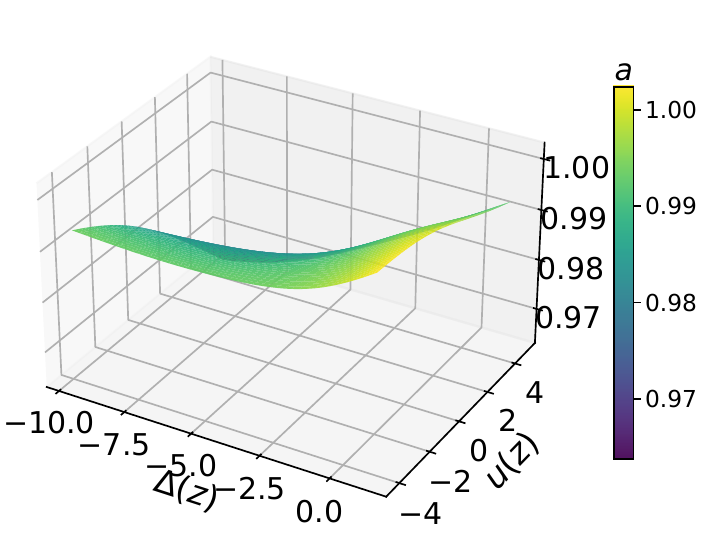} & \includegraphics[width=0.31\linewidth]{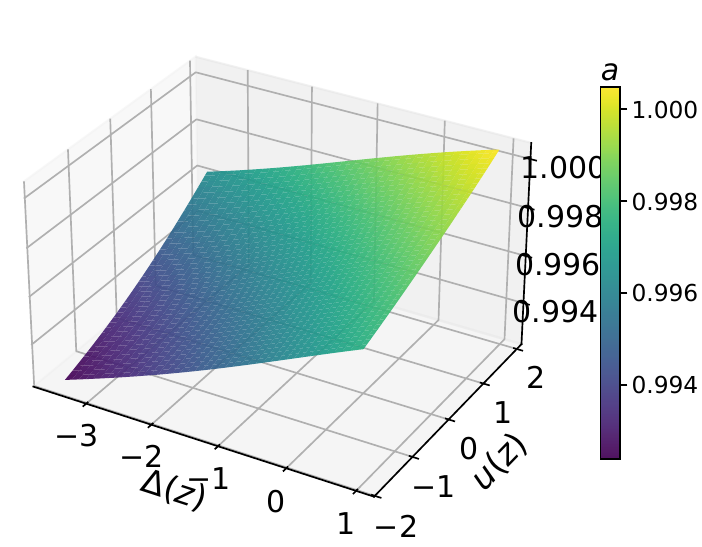} \\
\multicolumn{1}{c}{$a(z;\Theta)$, 20\%} & \multicolumn{1}{c}{$a(z;\Theta)$, 30\%} & \multicolumn{1}{c}{$a(z;\Theta)$, 40\%} \\
\addlinespace[0.35em]
\includegraphics[width=0.31\linewidth]{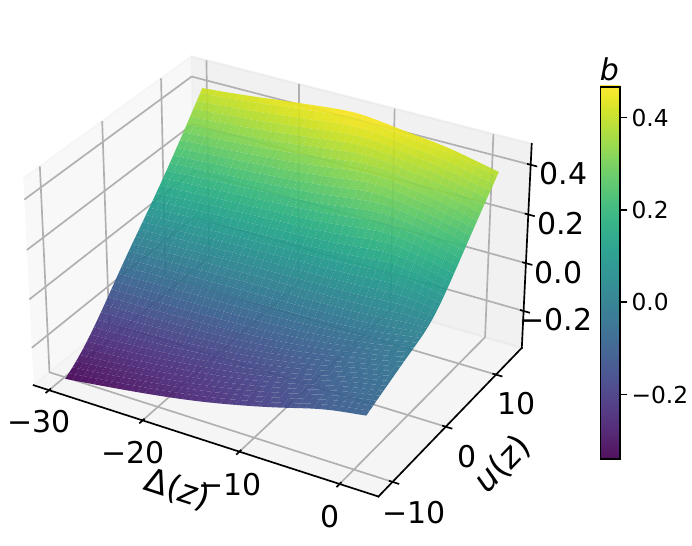} & \includegraphics[width=0.31\linewidth]{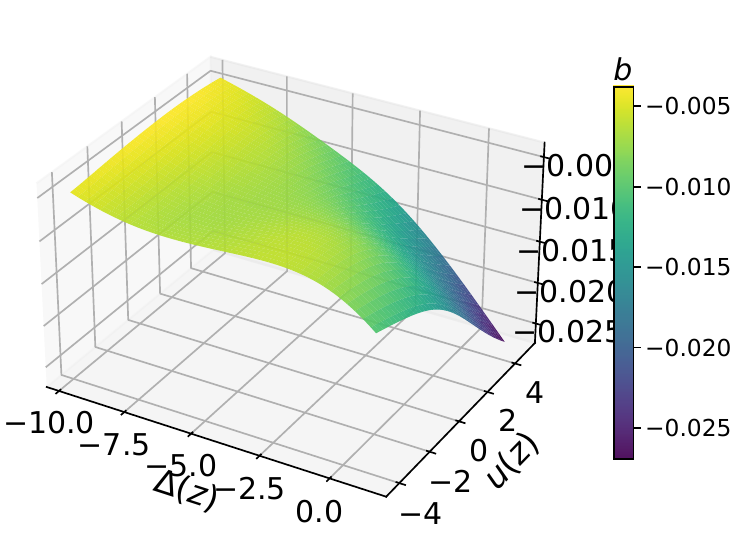} & \includegraphics[width=0.31\linewidth]{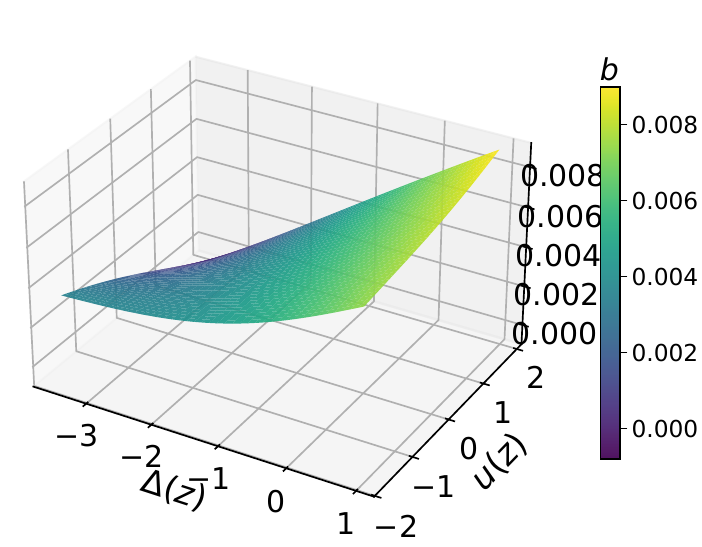} \\
\multicolumn{1}{c}{$b(z;\Theta)$, 20\%} & \multicolumn{1}{c}{$b(z;\Theta)$, 30\%} & \multicolumn{1}{c}{$b(z;\Theta)$, 40\%} \\
\addlinespace[0.35em]
\includegraphics[width=0.31\linewidth]{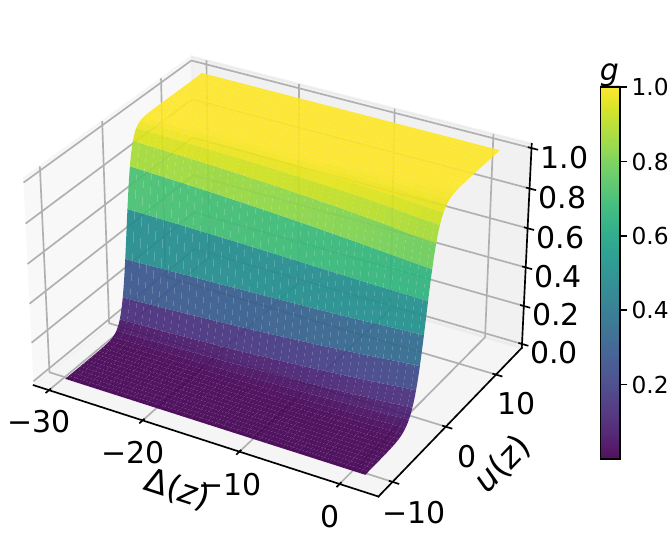} & \includegraphics[width=0.31\linewidth]{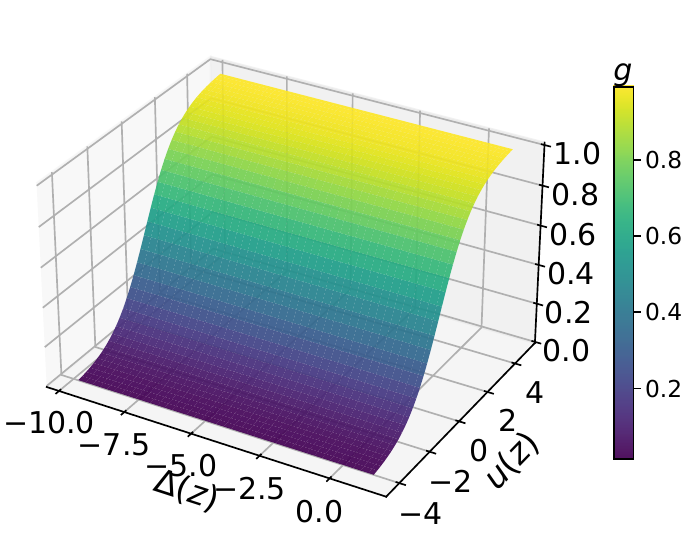} & \includegraphics[width=0.31\linewidth]{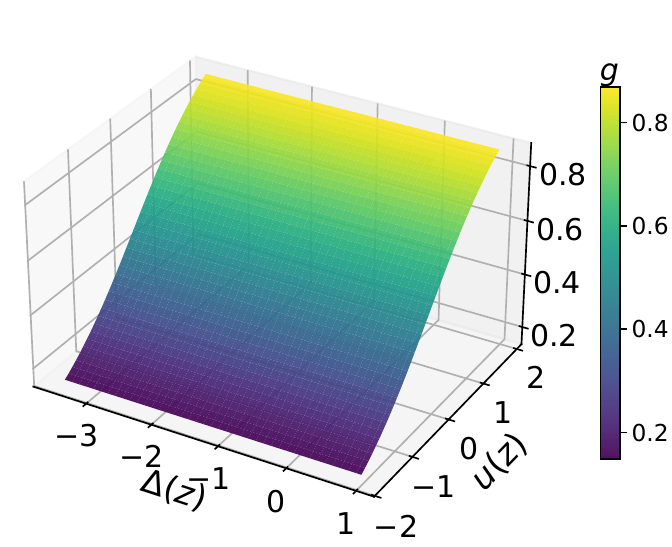} \\
\multicolumn{1}{c}{$g(z;\Theta)$, 20\%} & \multicolumn{1}{c}{$g(z;\Theta)$, 30\%} & \multicolumn{1}{c}{$g(z;\Theta)$, 40\%} \\
\addlinespace[0.35em]
\includegraphics[width=0.31\linewidth]{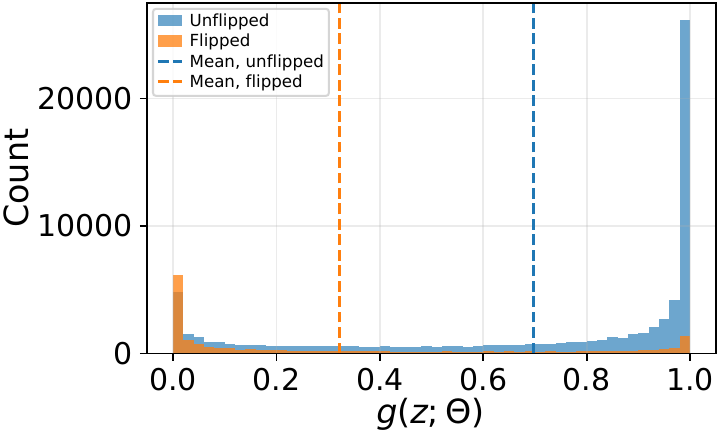} & \includegraphics[width=0.31\linewidth]{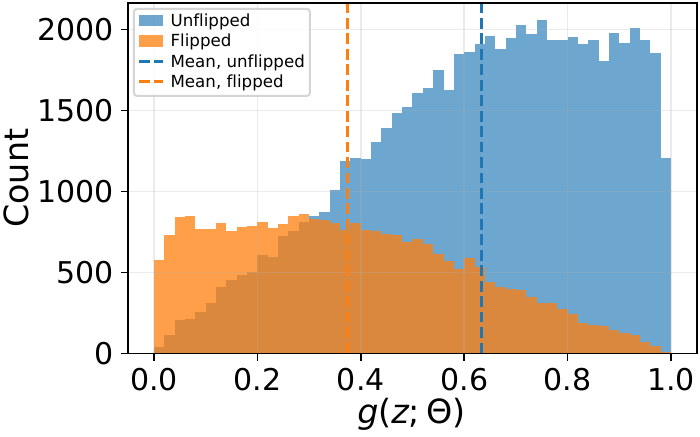} & \includegraphics[width=0.31\linewidth]{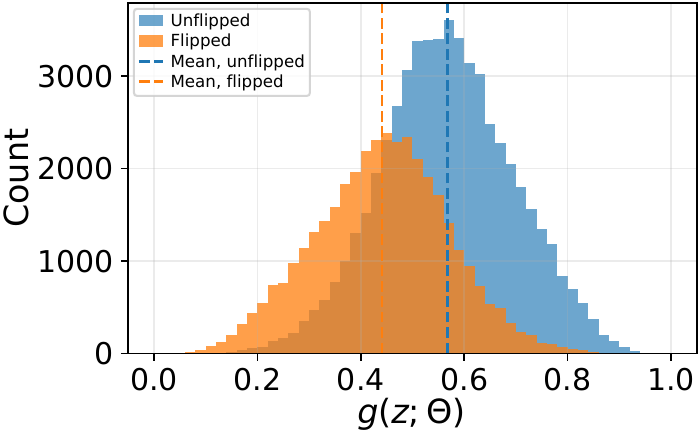} \\
\multicolumn{1}{c}{Weight distribution, 20\%} & \multicolumn{1}{c}{Weight distribution, 30\%} & \multicolumn{1}{c}{Weight distribution, 40\%} \\
\end{tabular}
\caption{VNet training dynamics on TL;DR at step 500. Columns correspond to 20\%, 30\%, and 40\% random flips; rows correspond to $a(z;\Theta)$, $b(z;\Theta)$, $g(z;\Theta)$, and the learned-weight distributions of unflipped and flipped training pairs.}
\label{fig:app-tldr-step500-vnet-surfaces}
\end{figure*}
\begin{figure*}[!p]
\centering
\small
\setlength{\tabcolsep}{2pt}
\begin{tabular}{ccc}
\includegraphics[width=0.31\linewidth]{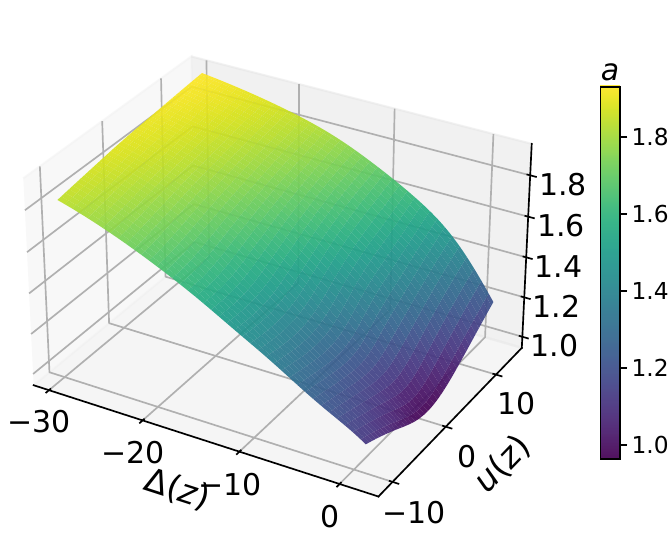} & \includegraphics[width=0.31\linewidth]{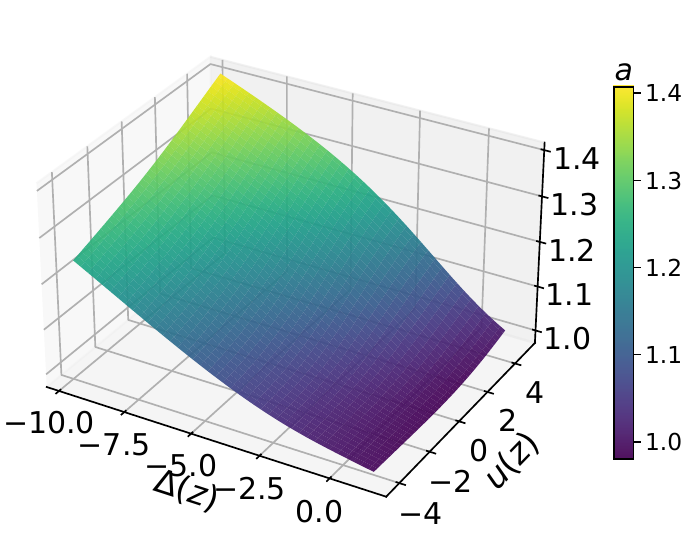} & \includegraphics[width=0.31\linewidth]{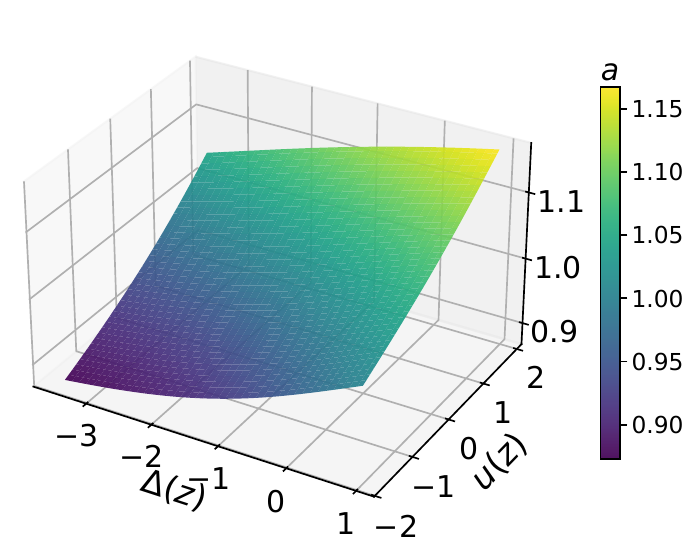} \\
\multicolumn{1}{c}{$a(z;\Theta)$, 20\%} & \multicolumn{1}{c}{$a(z;\Theta)$, 30\%} & \multicolumn{1}{c}{$a(z;\Theta)$, 40\%} \\
\addlinespace[0.35em]
\includegraphics[width=0.31\linewidth]{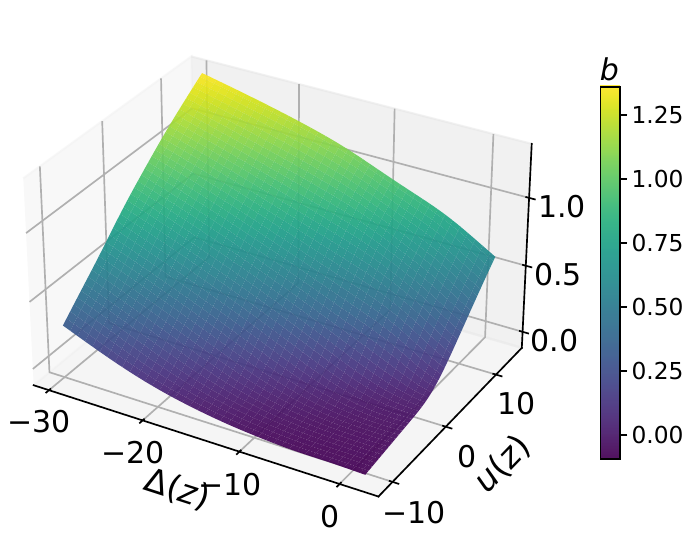} & \includegraphics[width=0.31\linewidth]{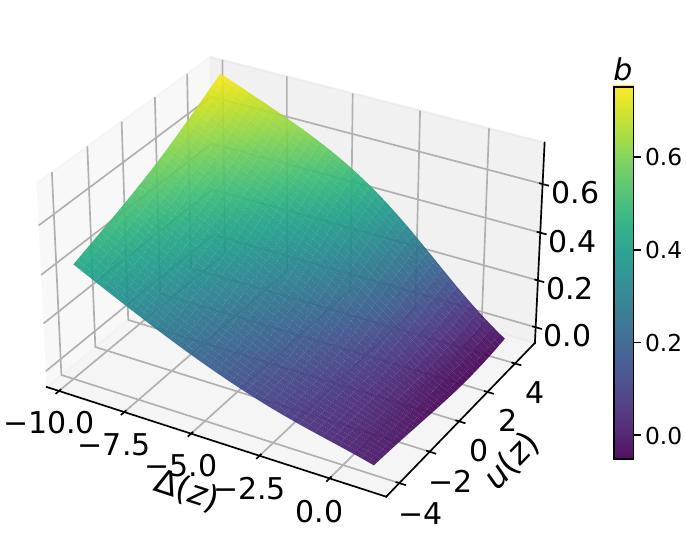} & \includegraphics[width=0.31\linewidth]{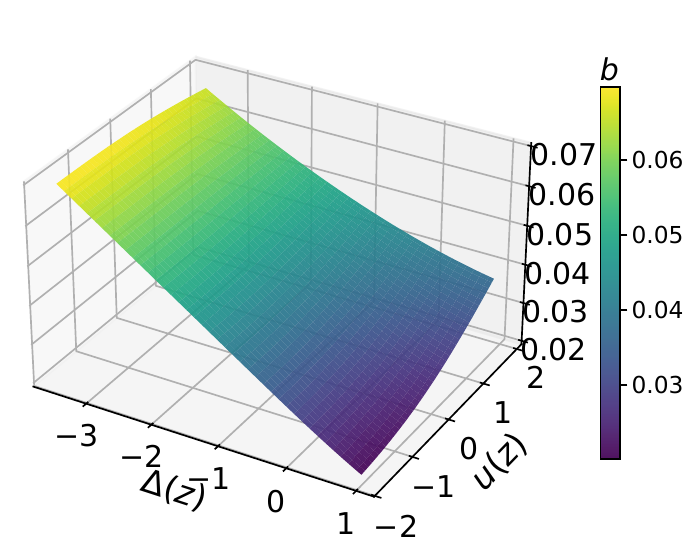} \\
\multicolumn{1}{c}{$b(z;\Theta)$, 20\%} & \multicolumn{1}{c}{$b(z;\Theta)$, 30\%} & \multicolumn{1}{c}{$b(z;\Theta)$, 40\%} \\
\addlinespace[0.35em]
\includegraphics[width=0.31\linewidth]{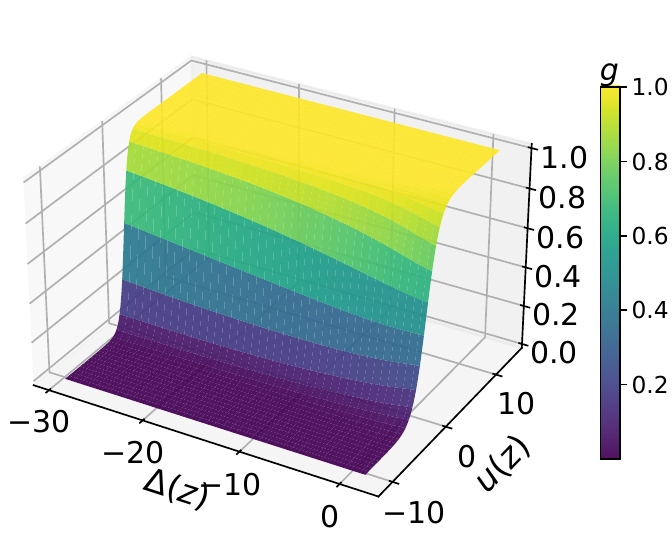} & \includegraphics[width=0.31\linewidth]{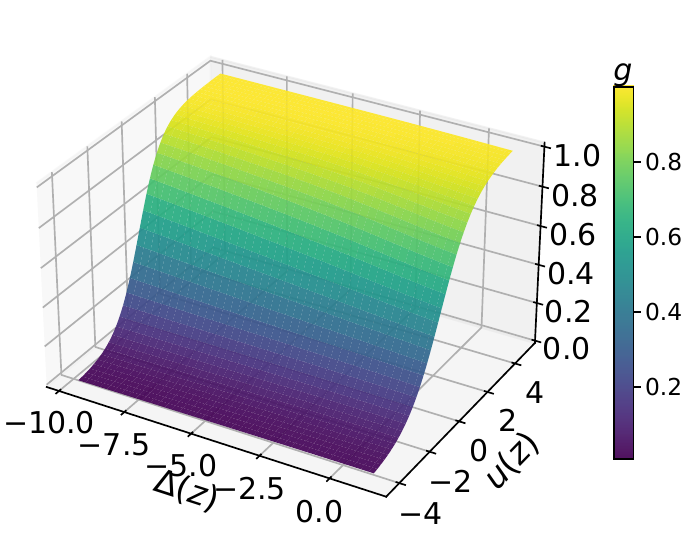} & \includegraphics[width=0.31\linewidth]{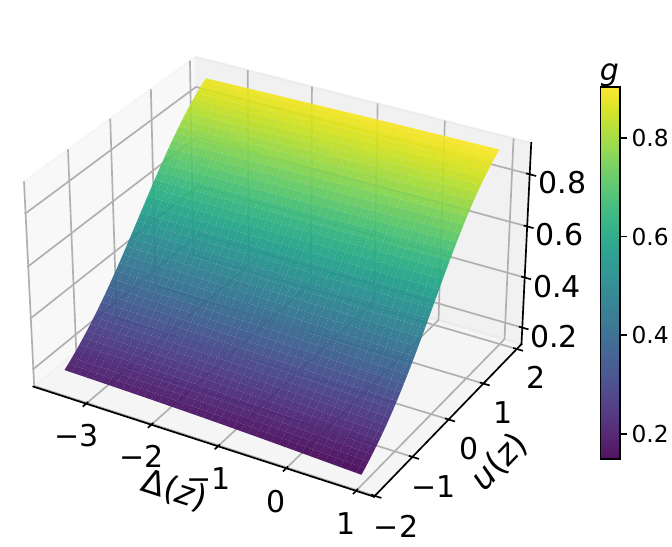} \\
\multicolumn{1}{c}{$g(z;\Theta)$, 20\%} & \multicolumn{1}{c}{$g(z;\Theta)$, 30\%} & \multicolumn{1}{c}{$g(z;\Theta)$, 40\%} \\
\addlinespace[0.35em]
\includegraphics[width=0.31\linewidth]{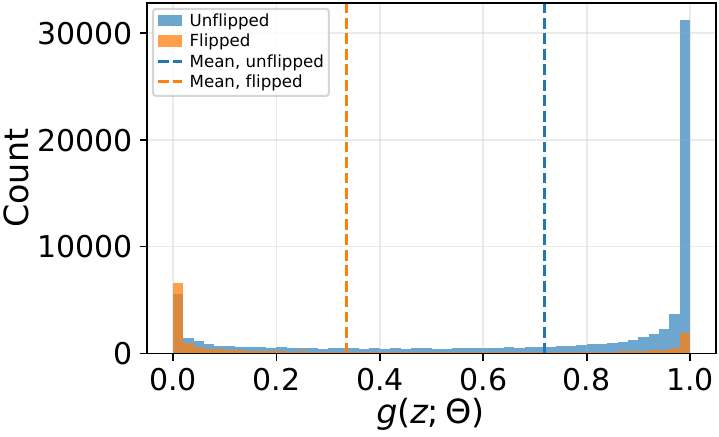} & \includegraphics[width=0.31\linewidth]{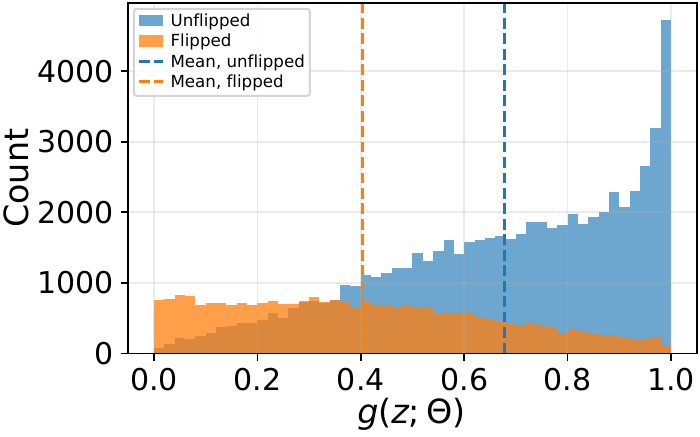} & \includegraphics[width=0.31\linewidth]{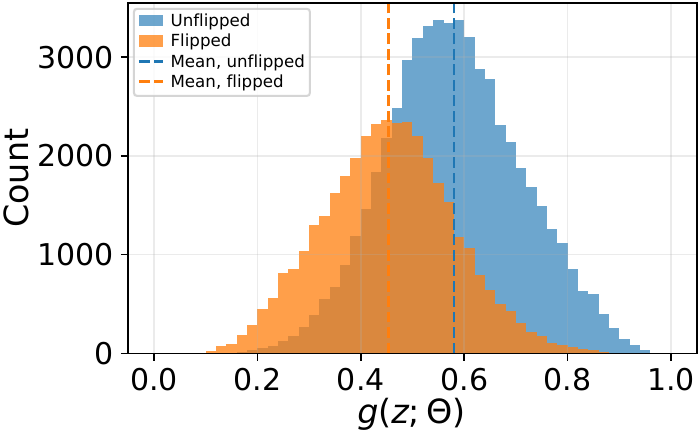} \\
\multicolumn{1}{c}{Weight distribution, 20\%} & \multicolumn{1}{c}{Weight distribution, 30\%} & \multicolumn{1}{c}{Weight distribution, 40\%} \\
\end{tabular}
\caption{VNet training dynamics on TL;DR at step 1000. Columns correspond to 20\%, 30\%, and 40\% random flips; rows correspond to $a(z;\Theta)$, $b(z;\Theta)$, $g(z;\Theta)$, and the learned-weight distributions of unflipped and flipped training pairs.}
\label{fig:app-tldr-step1000-vnet-surfaces}
\end{figure*}
\begin{figure*}[!p]
\centering
\small
\setlength{\tabcolsep}{2pt}
\begin{tabular}{ccc}
\includegraphics[width=0.31\linewidth]{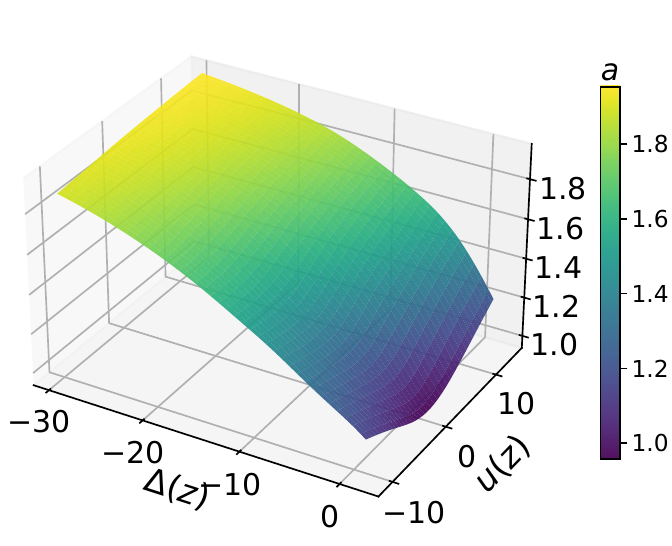} & \includegraphics[width=0.31\linewidth]{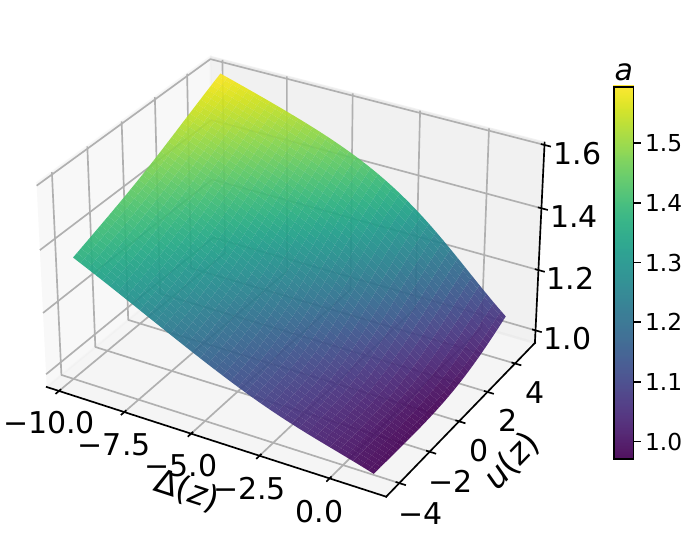} & \includegraphics[width=0.31\linewidth]{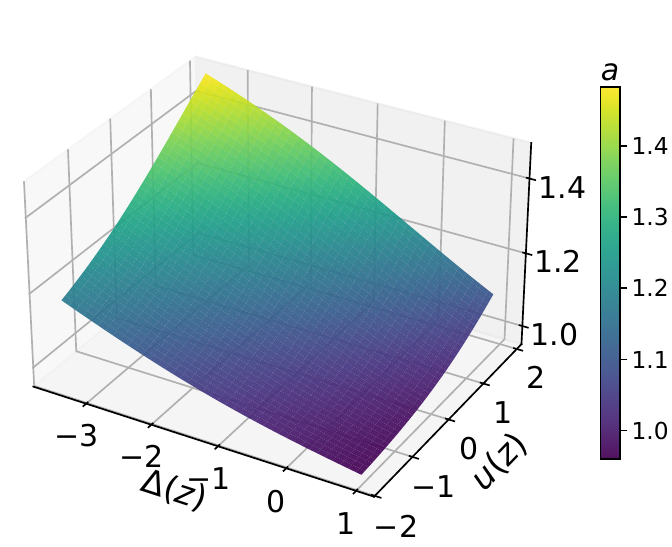} \\
\multicolumn{1}{c}{$a(z;\Theta)$, 20\%} & \multicolumn{1}{c}{$a(z;\Theta)$, 30\%} & \multicolumn{1}{c}{$a(z;\Theta)$, 40\%} \\
\addlinespace[0.35em]
\includegraphics[width=0.31\linewidth]{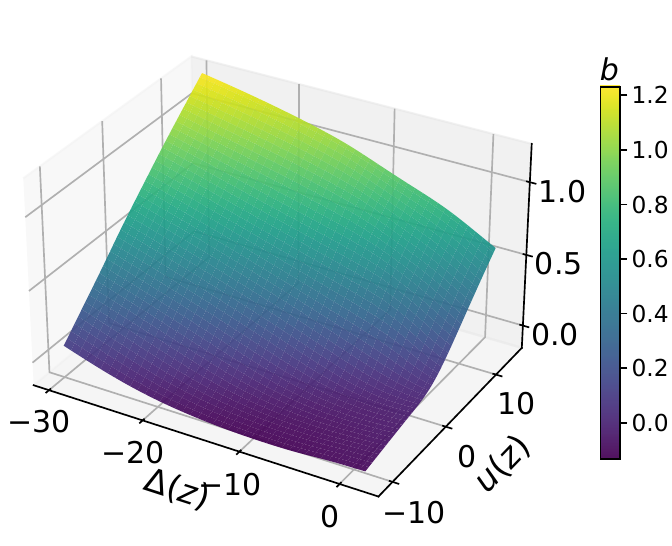} & \includegraphics[width=0.31\linewidth]{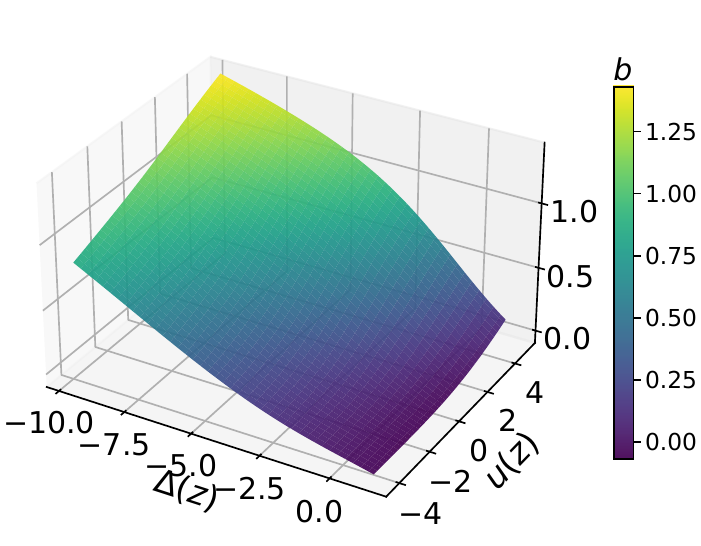} & \includegraphics[width=0.31\linewidth]{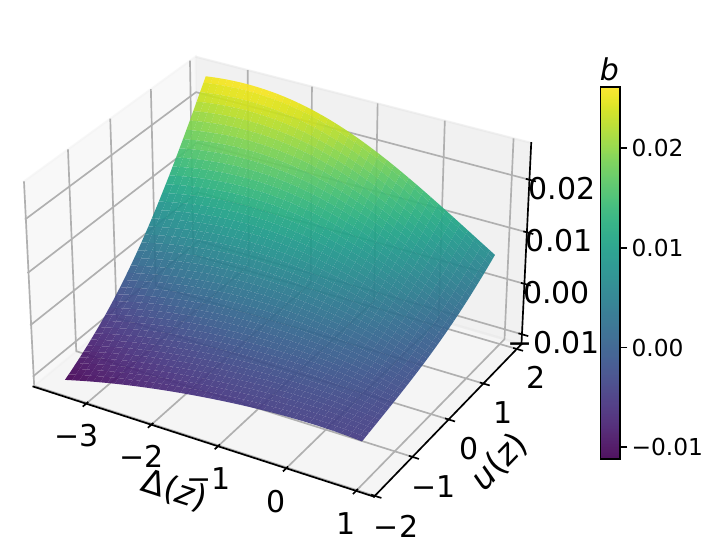} \\
\multicolumn{1}{c}{$b(z;\Theta)$, 20\%} & \multicolumn{1}{c}{$b(z;\Theta)$, 30\%} & \multicolumn{1}{c}{$b(z;\Theta)$, 40\%} \\
\addlinespace[0.35em]
\includegraphics[width=0.31\linewidth]{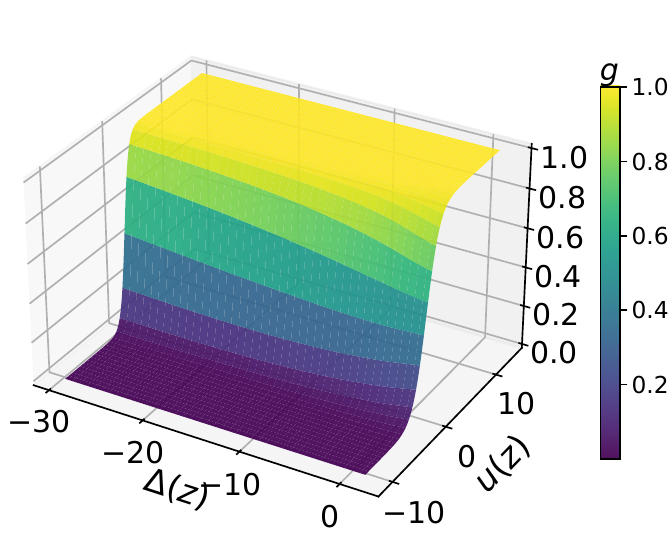} & \includegraphics[width=0.31\linewidth]{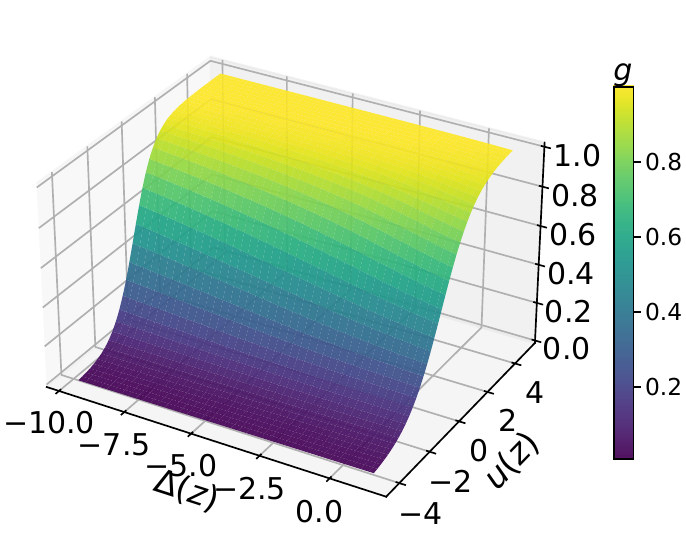} & \includegraphics[width=0.31\linewidth]{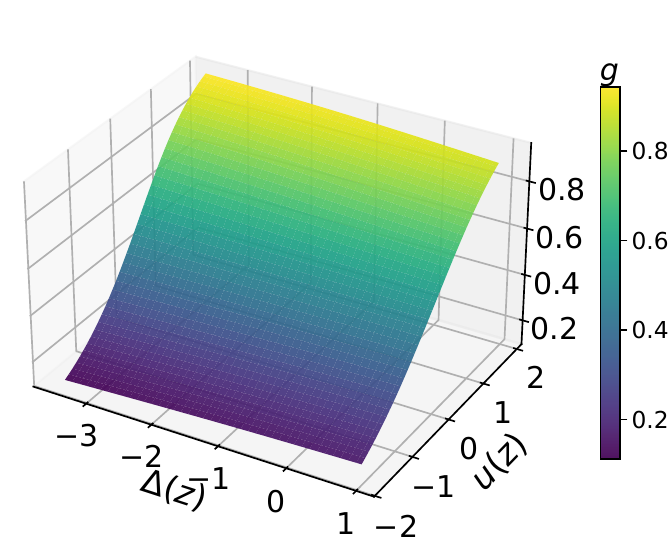} \\
\multicolumn{1}{c}{$g(z;\Theta)$, 20\%} & \multicolumn{1}{c}{$g(z;\Theta)$, 30\%} & \multicolumn{1}{c}{$g(z;\Theta)$, 40\%} \\
\addlinespace[0.35em]
\includegraphics[width=0.31\linewidth]{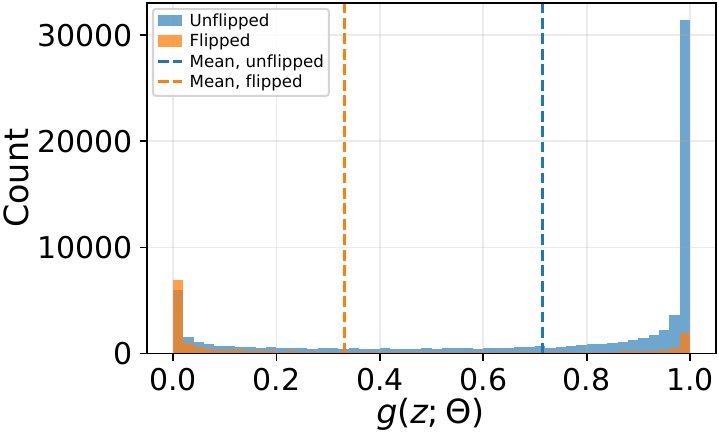} & \includegraphics[width=0.31\linewidth]{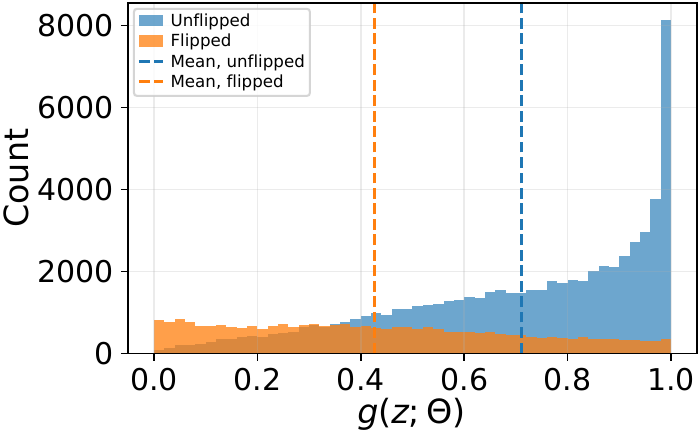} & \includegraphics[width=0.31\linewidth]{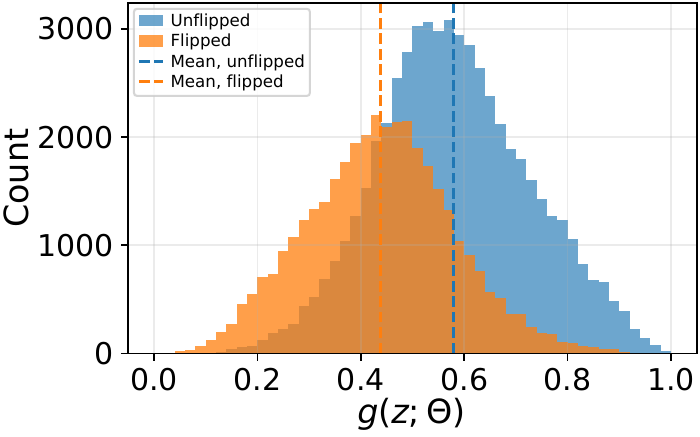} \\
\multicolumn{1}{c}{Weight distribution, 20\%} & \multicolumn{1}{c}{Weight distribution, 30\%} & \multicolumn{1}{c}{Weight distribution, 40\%} \\
\end{tabular}
\caption{VNet training dynamics on TL;DR at step 2000. Columns correspond to 20\%, 30\%, and 40\% random flips; rows correspond to $a(z;\Theta)$, $b(z;\Theta)$, $g(z;\Theta)$, and the learned-weight distributions of unflipped and flipped training pairs.}
\label{fig:app-tldr-step2000-vnet-surfaces}
\end{figure*}
\begin{figure*}[!p]
\centering
\small
\setlength{\tabcolsep}{2pt}
\begin{tabular}{ccc}
\includegraphics[width=0.31\linewidth]{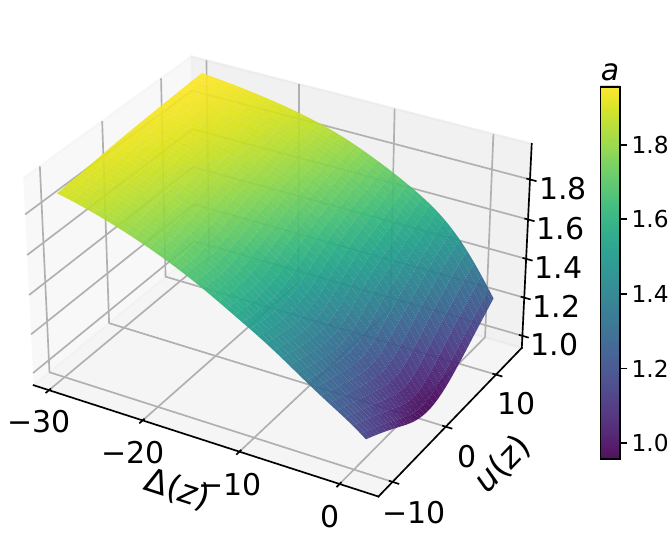} & \includegraphics[width=0.31\linewidth]{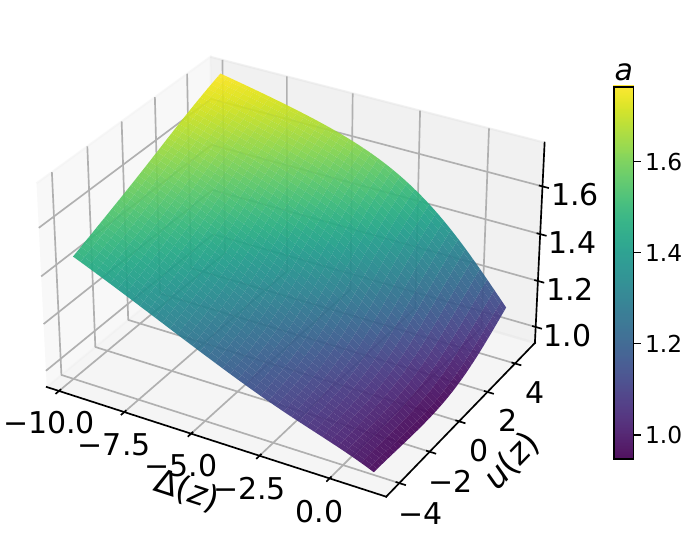} & \includegraphics[width=0.31\linewidth]{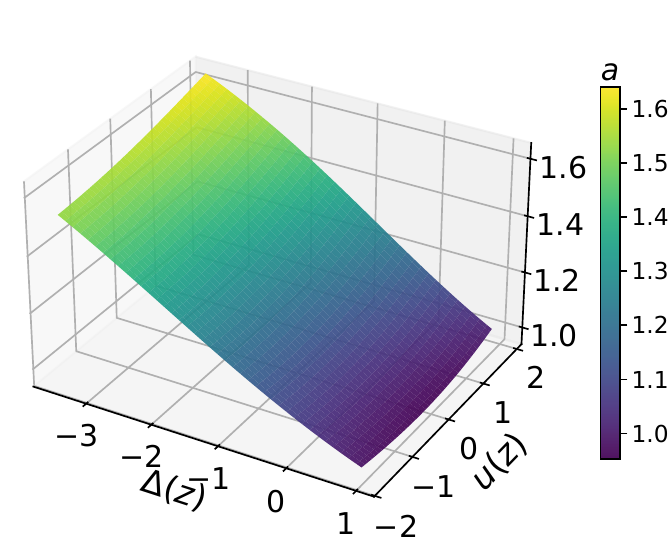} \\
\multicolumn{1}{c}{$a(z;\Theta)$, 20\%} & \multicolumn{1}{c}{$a(z;\Theta)$, 30\%} & \multicolumn{1}{c}{$a(z;\Theta)$, 40\%} \\
\addlinespace[0.35em]
\includegraphics[width=0.31\linewidth]{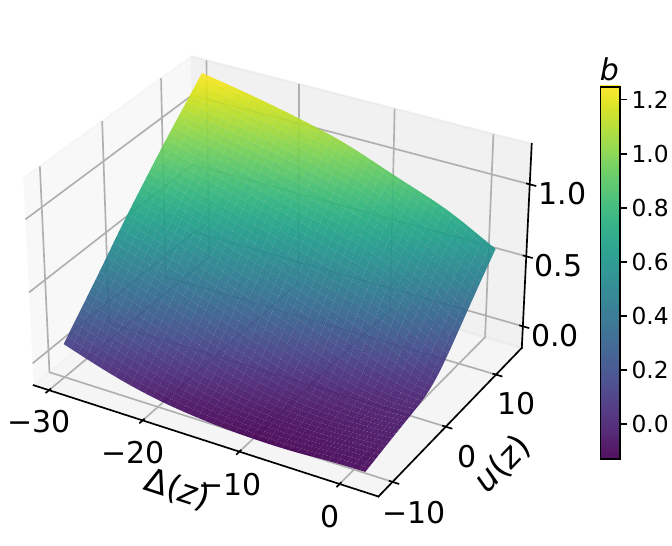} & \includegraphics[width=0.31\linewidth]{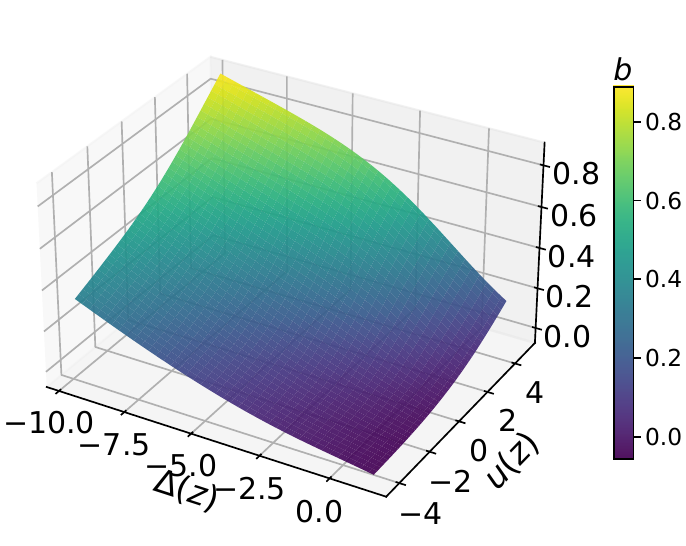} & \includegraphics[width=0.31\linewidth]{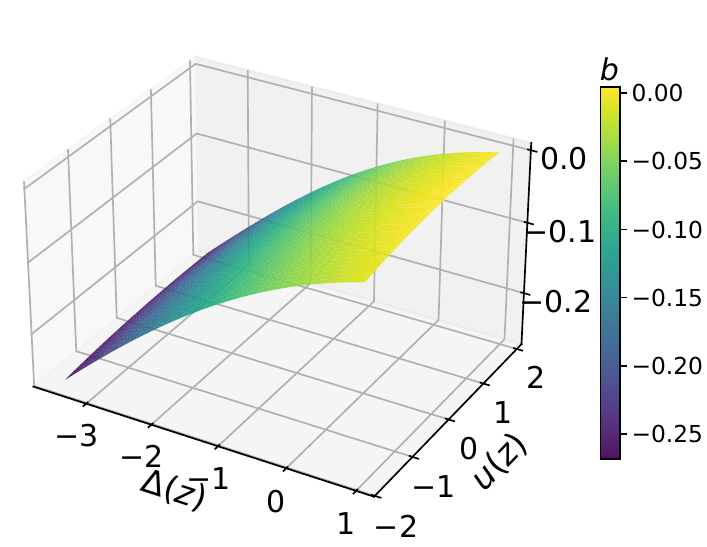} \\
\multicolumn{1}{c}{$b(z;\Theta)$, 20\%} & \multicolumn{1}{c}{$b(z;\Theta)$, 30\%} & \multicolumn{1}{c}{$b(z;\Theta)$, 40\%} \\
\addlinespace[0.35em]
\includegraphics[width=0.31\linewidth]{figures/vnet/tldr_rf20_step2901_function.pdf} & \includegraphics[width=0.31\linewidth]{figures/vnet/tldr_rf30_step2901_function.pdf} & \includegraphics[width=0.31\linewidth]{figures/vnet/tldr_rf40_step2901_function.pdf} \\
\multicolumn{1}{c}{$g(z;\Theta)$, 20\%} & \multicolumn{1}{c}{$g(z;\Theta)$, 30\%} & \multicolumn{1}{c}{$g(z;\Theta)$, 40\%} \\
\addlinespace[0.35em]
\includegraphics[width=0.31\linewidth]{figures/vnet/tldr_rf20_step2901_hist.pdf} & \includegraphics[width=0.31\linewidth]{figures/vnet/tldr_rf30_step2901_hist.pdf} & \includegraphics[width=0.31\linewidth]{figures/vnet/tldr_rf40_step2901_hist.pdf} \\
\multicolumn{1}{c}{Weight distribution, 20\%} & \multicolumn{1}{c}{Weight distribution, 30\%} & \multicolumn{1}{c}{Weight distribution, 40\%} \\
\end{tabular}
\caption{VNet training dynamics on TL;DR at step 2901. Columns correspond to 20\%, 30\%, and 40\% random flips; rows correspond to $a(z;\Theta)$, $b(z;\Theta)$, $g(z;\Theta)$, and the learned-weight distributions of unflipped and flipped training pairs.}
\label{fig:app-tldr-step2901-vnet-surfaces}
\end{figure*}
\begin{figure*}[!p]
\centering
\small
\setlength{\tabcolsep}{2pt}
\begin{tabular}{ccc}
\includegraphics[width=0.31\linewidth]{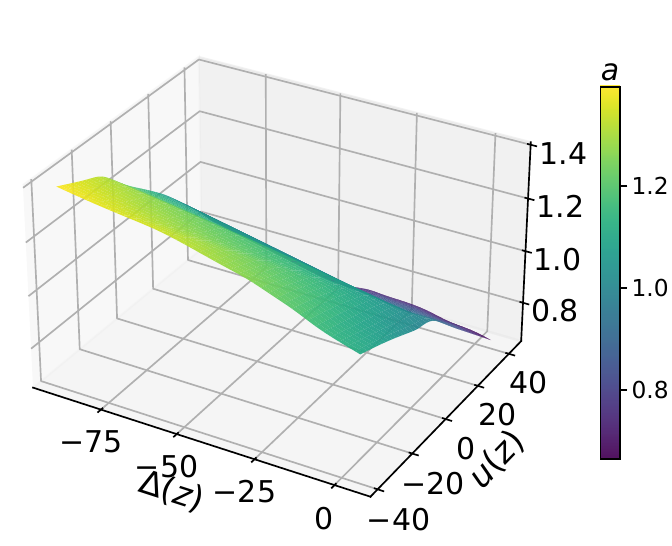} & \includegraphics[width=0.31\linewidth]{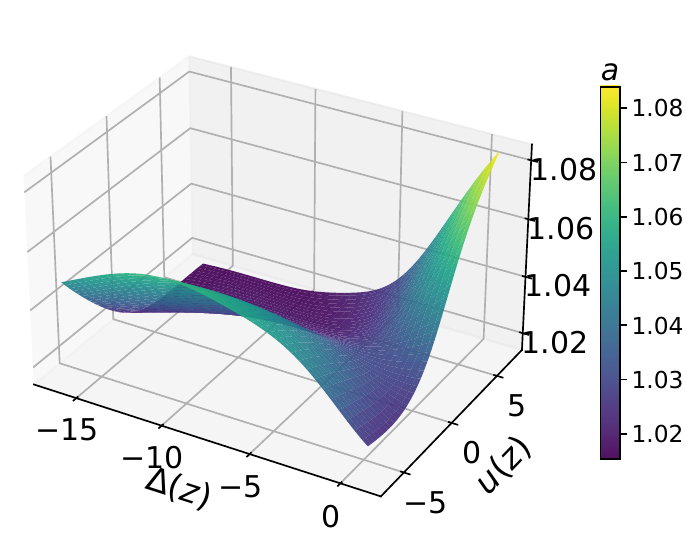} & \includegraphics[width=0.31\linewidth]{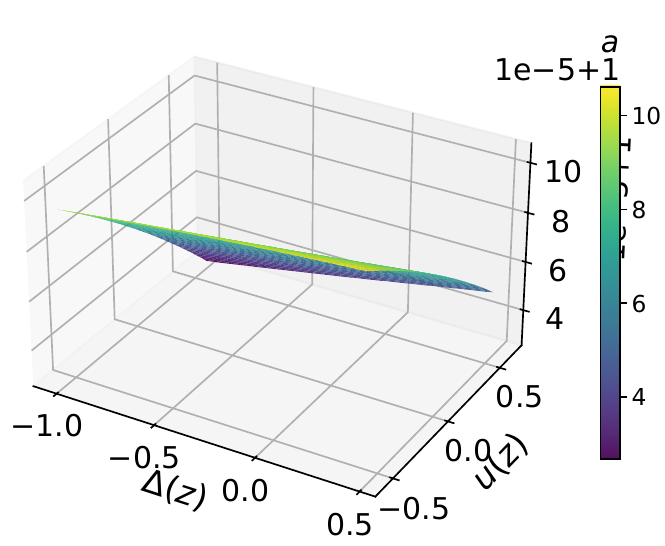} \\
\multicolumn{1}{c}{$a(z;\Theta)$, 20\%} & \multicolumn{1}{c}{$a(z;\Theta)$, 30\%} & \multicolumn{1}{c}{$a(z;\Theta)$, 40\%} \\
\addlinespace[0.35em]
\includegraphics[width=0.31\linewidth]{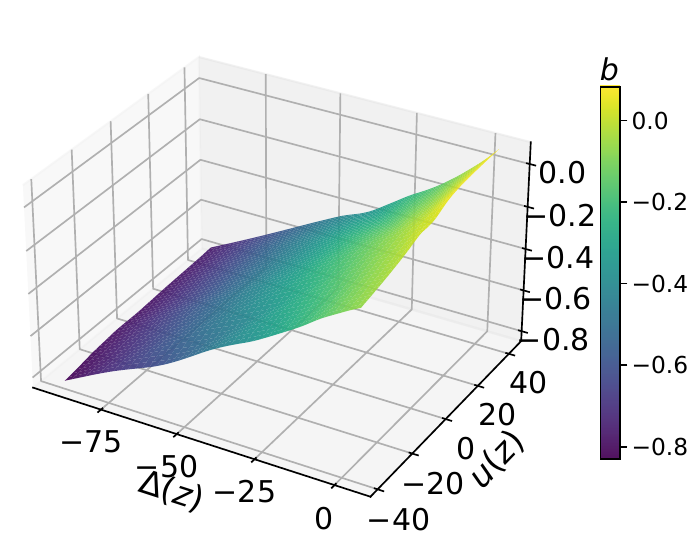} & \includegraphics[width=0.31\linewidth]{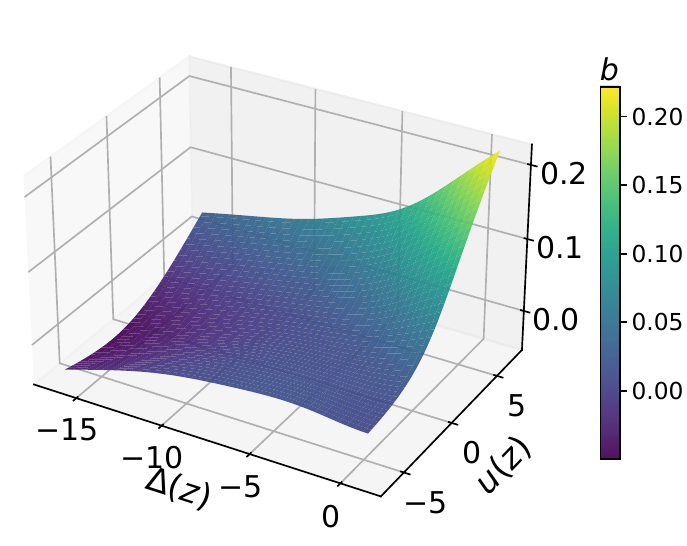} & \includegraphics[width=0.31\linewidth]{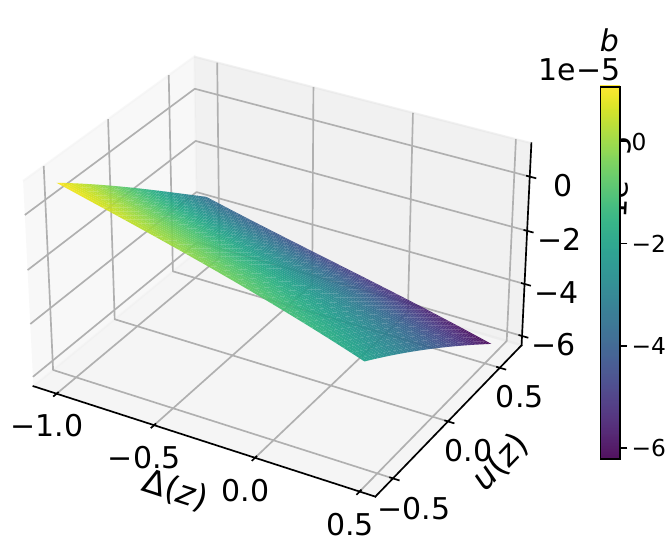} \\
\multicolumn{1}{c}{$b(z;\Theta)$, 20\%} & \multicolumn{1}{c}{$b(z;\Theta)$, 30\%} & \multicolumn{1}{c}{$b(z;\Theta)$, 40\%} \\
\addlinespace[0.35em]
\includegraphics[width=0.31\linewidth]{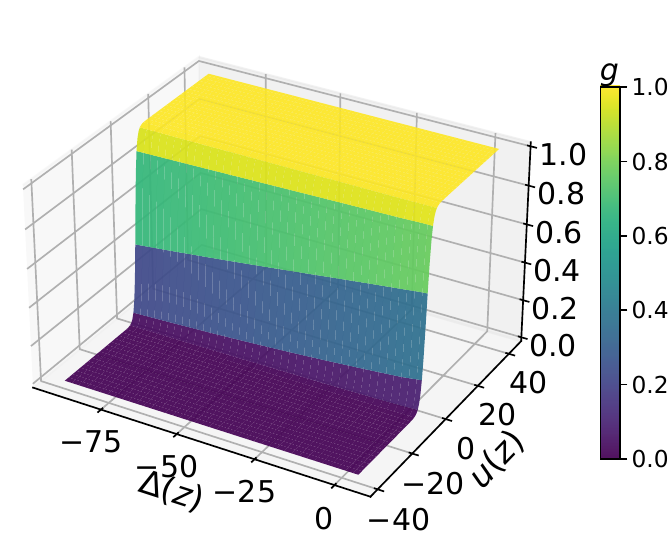} & \includegraphics[width=0.31\linewidth]{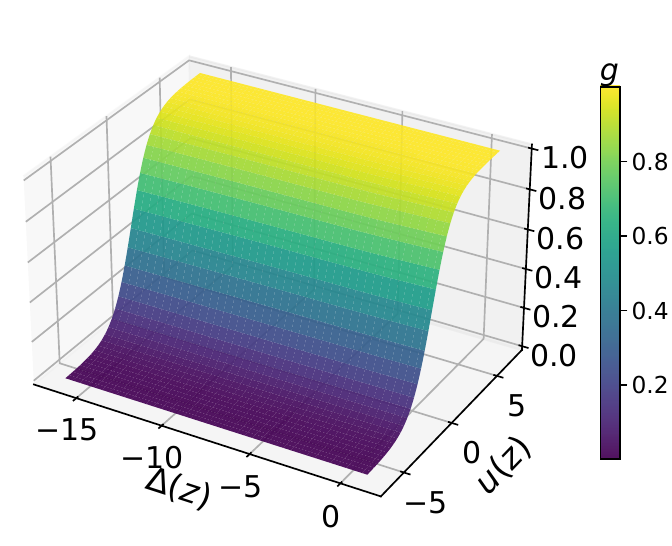} & \includegraphics[width=0.31\linewidth]{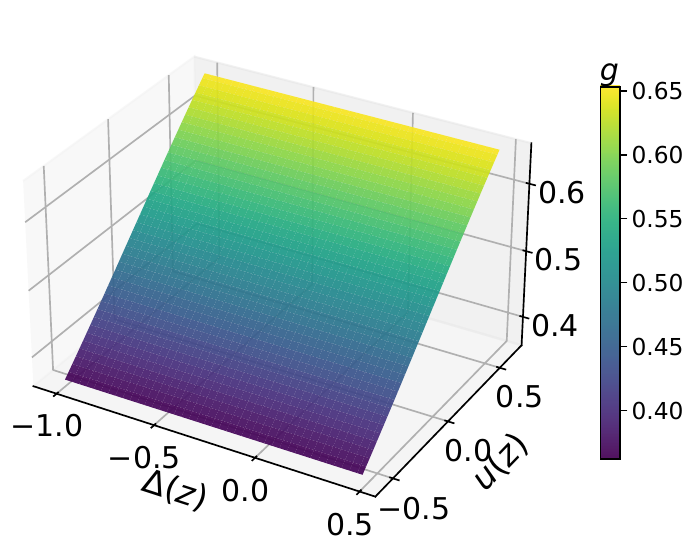} \\
\multicolumn{1}{c}{$g(z;\Theta)$, 20\%} & \multicolumn{1}{c}{$g(z;\Theta)$, 30\%} & \multicolumn{1}{c}{$g(z;\Theta)$, 40\%} \\
\addlinespace[0.35em]
\includegraphics[width=0.31\linewidth]{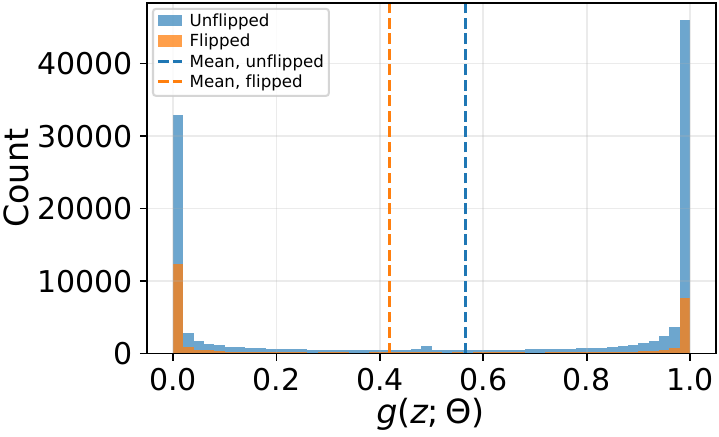} & \includegraphics[width=0.31\linewidth]{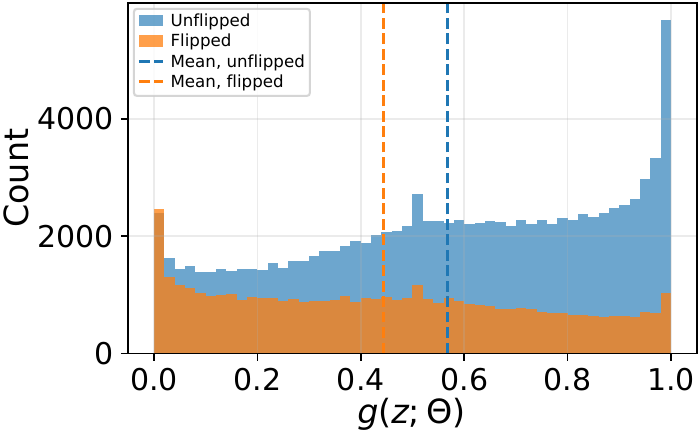} & \includegraphics[width=0.31\linewidth]{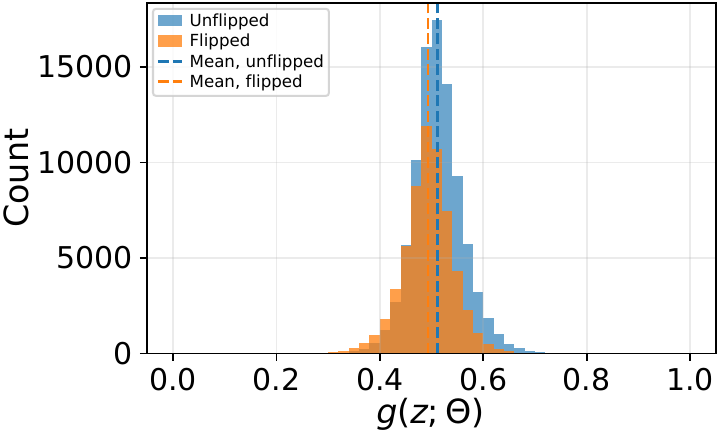} \\
\multicolumn{1}{c}{Weight distribution, 20\%} & \multicolumn{1}{c}{Weight distribution, 30\%} & \multicolumn{1}{c}{Weight distribution, 40\%} \\
\end{tabular}
\caption{VNet training dynamics on Anthropic HH at step 1000. Columns correspond to 20\%, 30\%, and 40\% random flips; rows correspond to $a(z;\Theta)$, $b(z;\Theta)$, $g(z;\Theta)$, and the learned-weight distributions of unflipped and flipped training pairs.}
\label{fig:app-hh-step1000-vnet-surfaces}
\end{figure*}
\begin{figure*}[!p]
\centering
\small
\setlength{\tabcolsep}{2pt}
\begin{tabular}{ccc}
\includegraphics[width=0.31\linewidth]{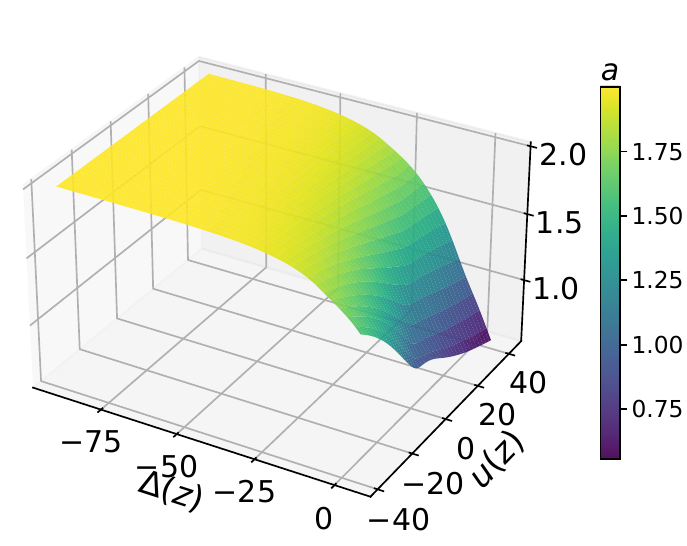} & \includegraphics[width=0.31\linewidth]{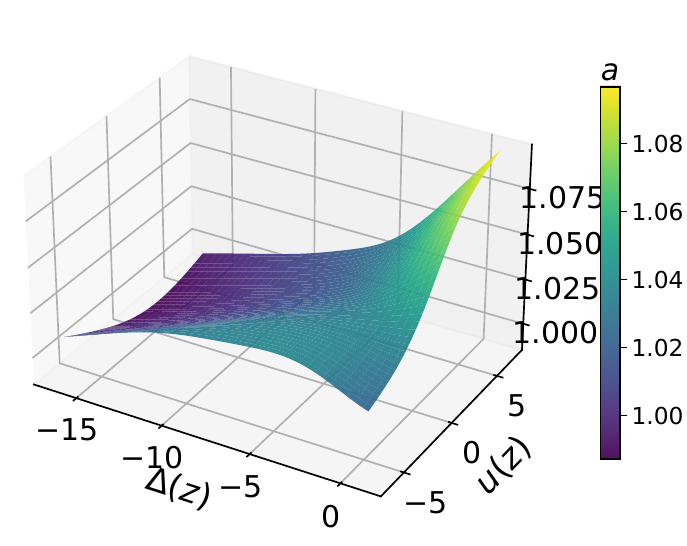} & \includegraphics[width=0.31\linewidth]{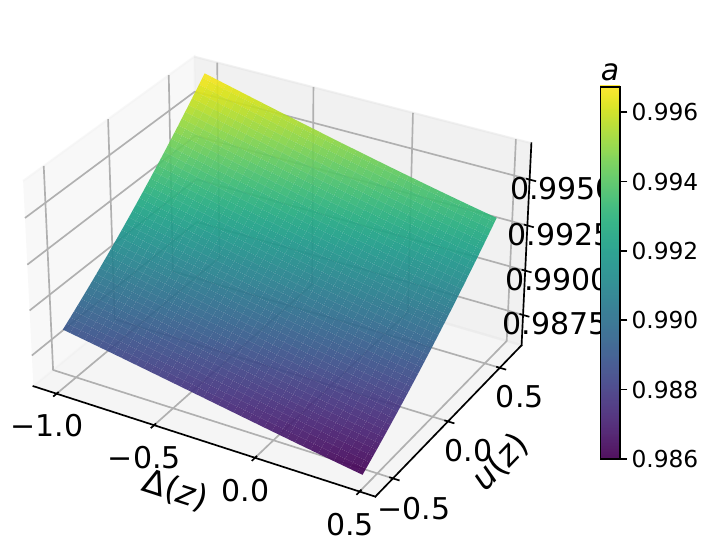} \\
\multicolumn{1}{c}{$a(z;\Theta)$, 20\%} & \multicolumn{1}{c}{$a(z;\Theta)$, 30\%} & \multicolumn{1}{c}{$a(z;\Theta)$, 40\%} \\
\addlinespace[0.35em]
\includegraphics[width=0.31\linewidth]{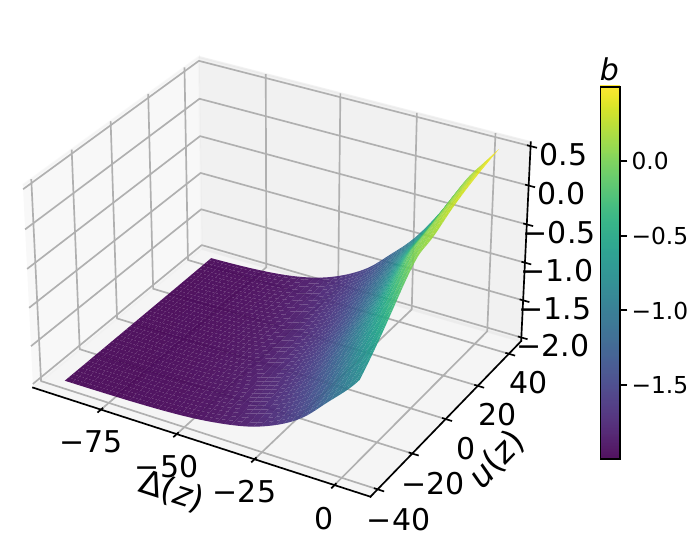} & \includegraphics[width=0.31\linewidth]{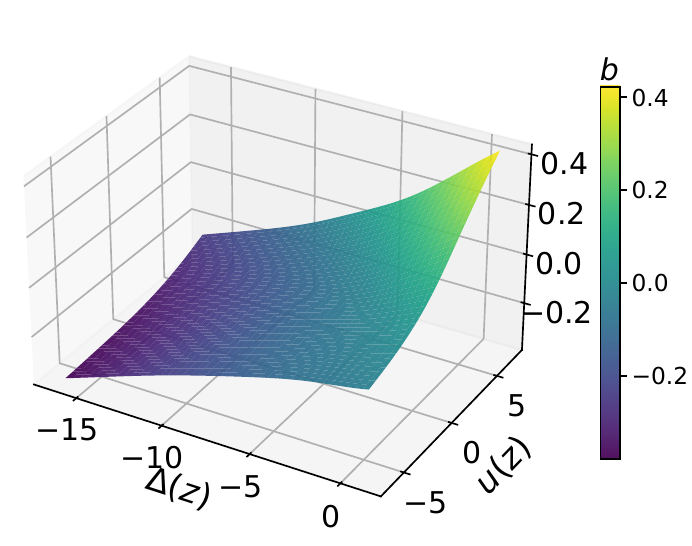} & \includegraphics[width=0.31\linewidth]{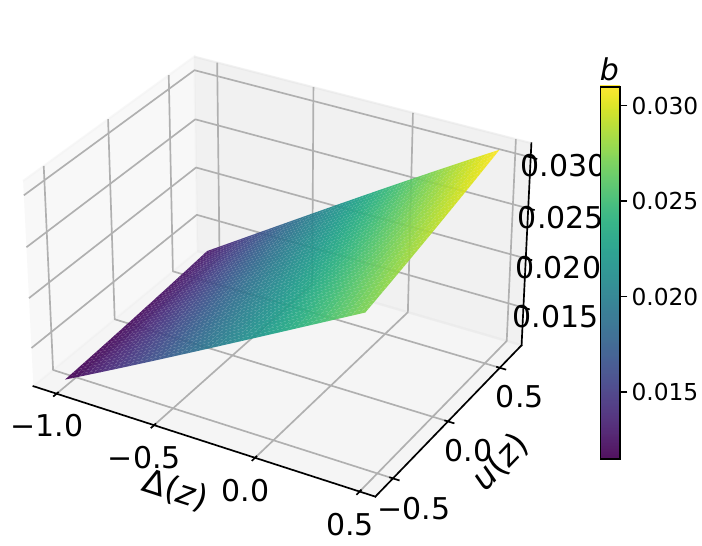} \\
\multicolumn{1}{c}{$b(z;\Theta)$, 20\%} & \multicolumn{1}{c}{$b(z;\Theta)$, 30\%} & \multicolumn{1}{c}{$b(z;\Theta)$, 40\%} \\
\addlinespace[0.35em]
\includegraphics[width=0.31\linewidth]{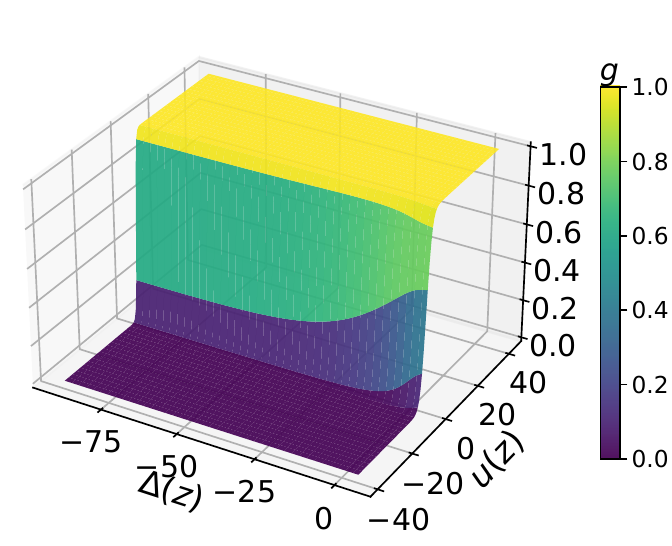} & \includegraphics[width=0.31\linewidth]{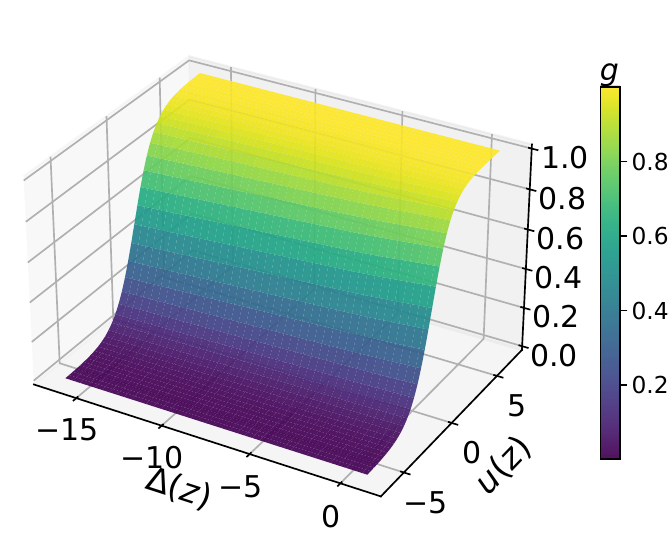} & \includegraphics[width=0.31\linewidth]{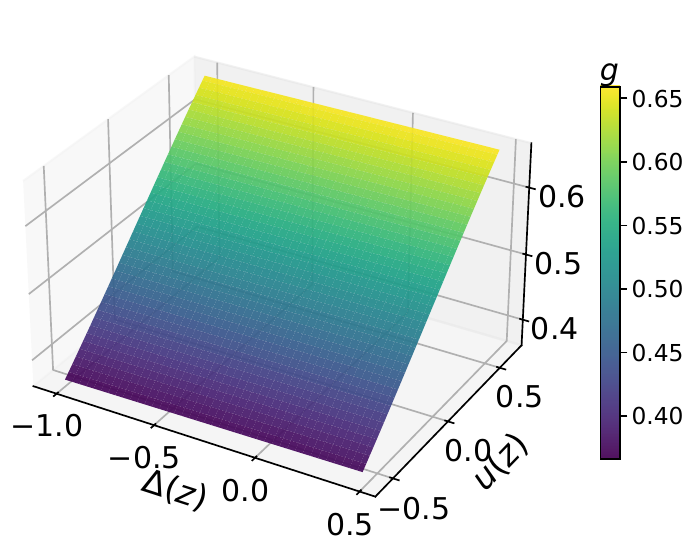} \\
\multicolumn{1}{c}{$g(z;\Theta)$, 20\%} & \multicolumn{1}{c}{$g(z;\Theta)$, 30\%} & \multicolumn{1}{c}{$g(z;\Theta)$, 40\%} \\
\addlinespace[0.35em]
\includegraphics[width=0.31\linewidth]{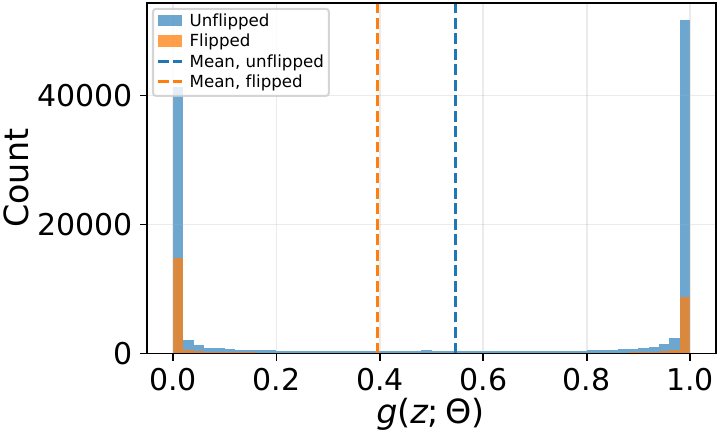} & \includegraphics[width=0.31\linewidth]{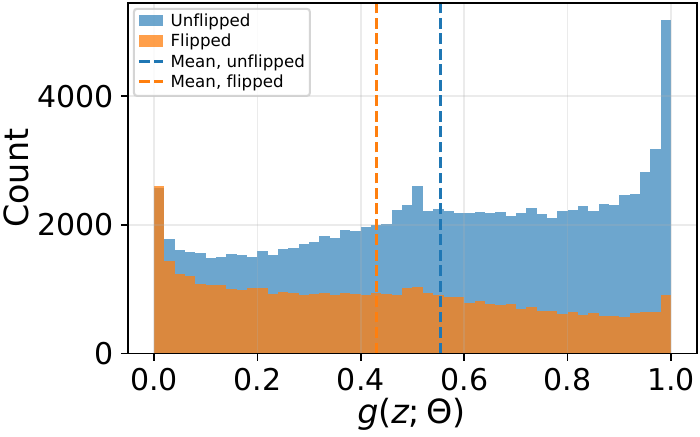} & \includegraphics[width=0.31\linewidth]{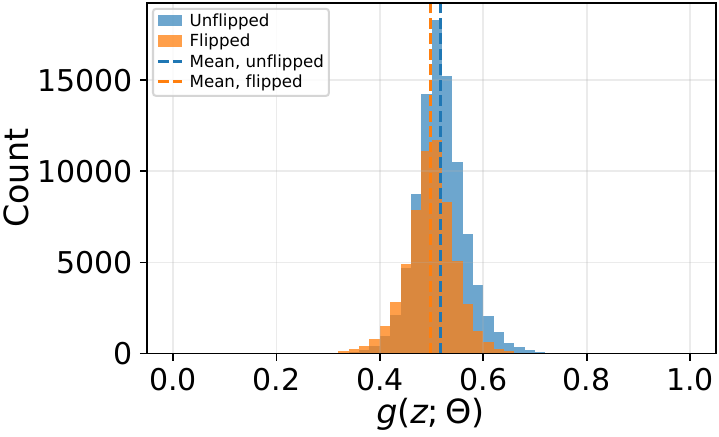} \\
\multicolumn{1}{c}{Weight distribution, 20\%} & \multicolumn{1}{c}{Weight distribution, 30\%} & \multicolumn{1}{c}{Weight distribution, 40\%} \\
\end{tabular}
\caption{VNet training dynamics on Anthropic HH at step 2000. Columns correspond to 20\%, 30\%, and 40\% random flips; rows correspond to $a(z;\Theta)$, $b(z;\Theta)$, $g(z;\Theta)$, and the learned-weight distributions of unflipped and flipped training pairs.}
\label{fig:app-hh-step2000-vnet-surfaces}
\end{figure*}
\begin{figure*}[!p]
\centering
\small
\setlength{\tabcolsep}{2pt}
\begin{tabular}{ccc}
\includegraphics[width=0.31\linewidth]{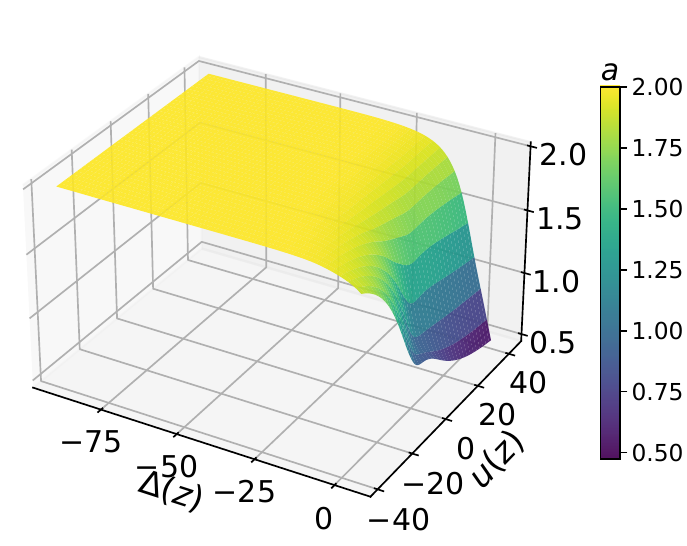} & \includegraphics[width=0.31\linewidth]{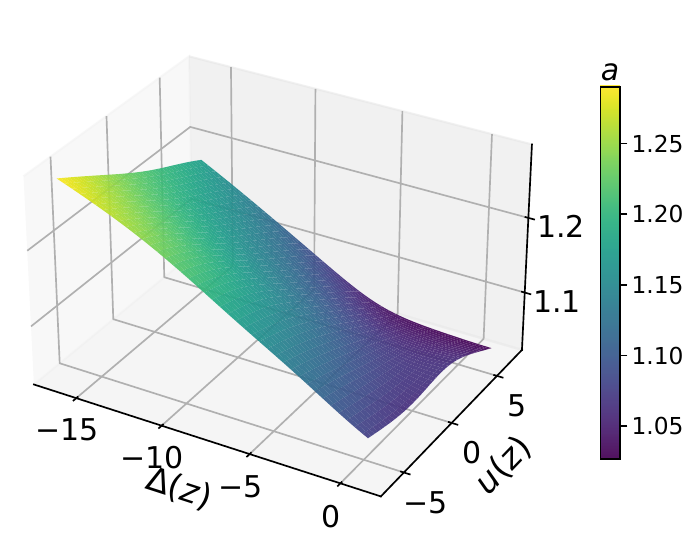} & \includegraphics[width=0.31\linewidth]{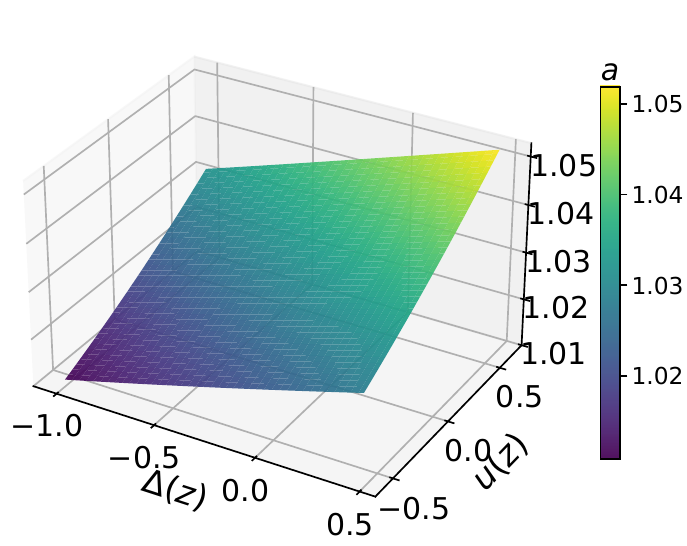} \\
\multicolumn{1}{c}{$a(z;\Theta)$, 20\%} & \multicolumn{1}{c}{$a(z;\Theta)$, 30\%} & \multicolumn{1}{c}{$a(z;\Theta)$, 40\%} \\
\addlinespace[0.35em]
\includegraphics[width=0.31\linewidth]{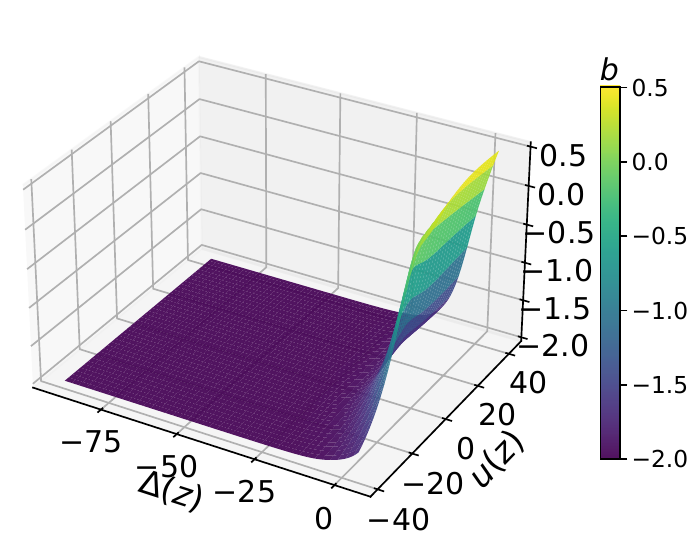} & \includegraphics[width=0.31\linewidth]{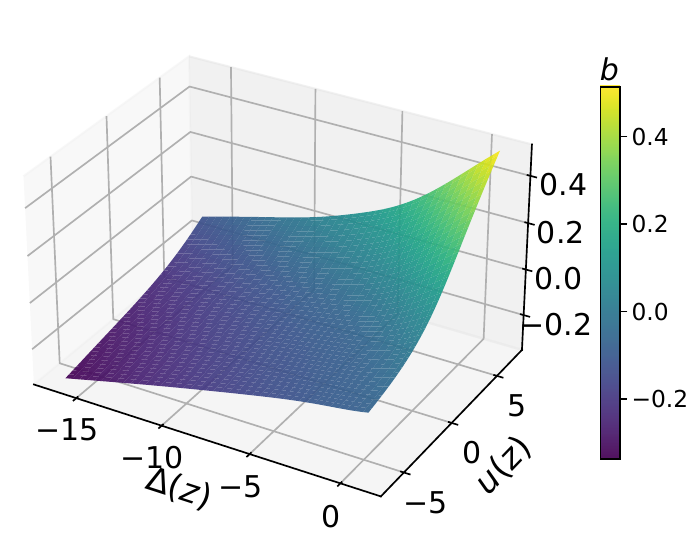} & \includegraphics[width=0.31\linewidth]{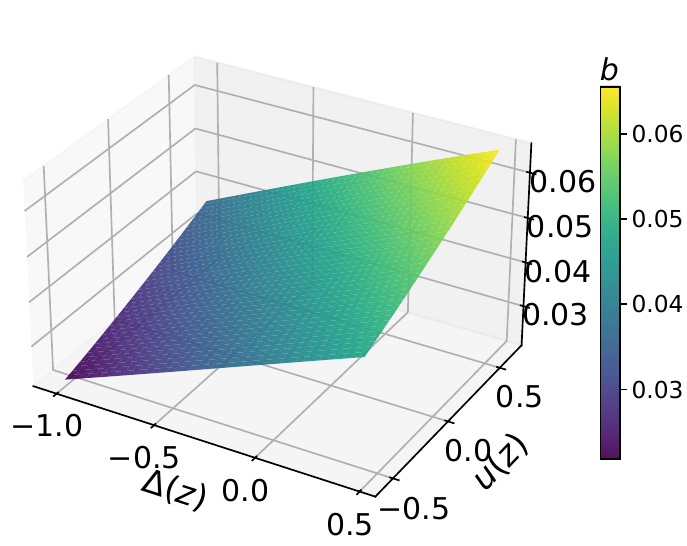} \\
\multicolumn{1}{c}{$b(z;\Theta)$, 20\%} & \multicolumn{1}{c}{$b(z;\Theta)$, 30\%} & \multicolumn{1}{c}{$b(z;\Theta)$, 40\%} \\
\addlinespace[0.35em]
\includegraphics[width=0.31\linewidth]{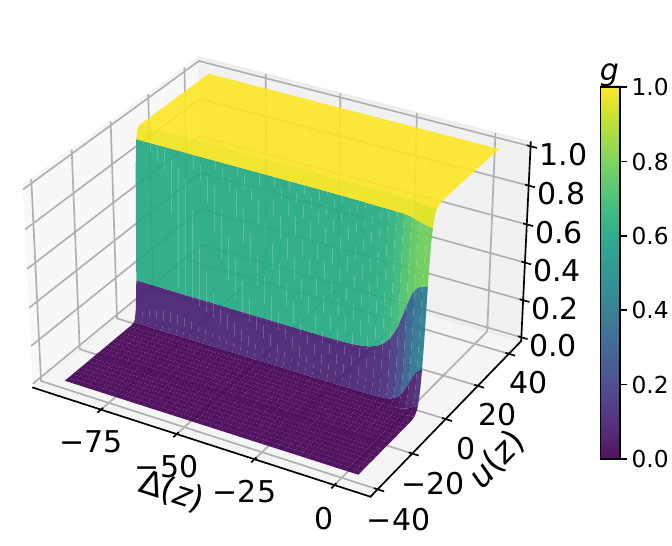} & \includegraphics[width=0.31\linewidth]{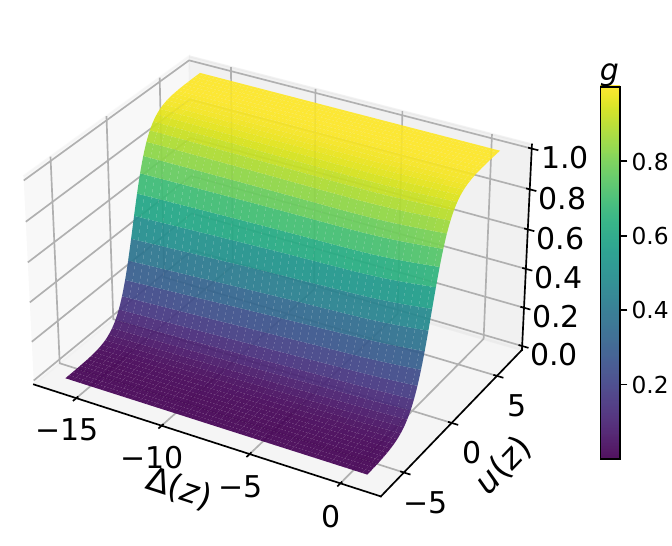} & \includegraphics[width=0.31\linewidth]{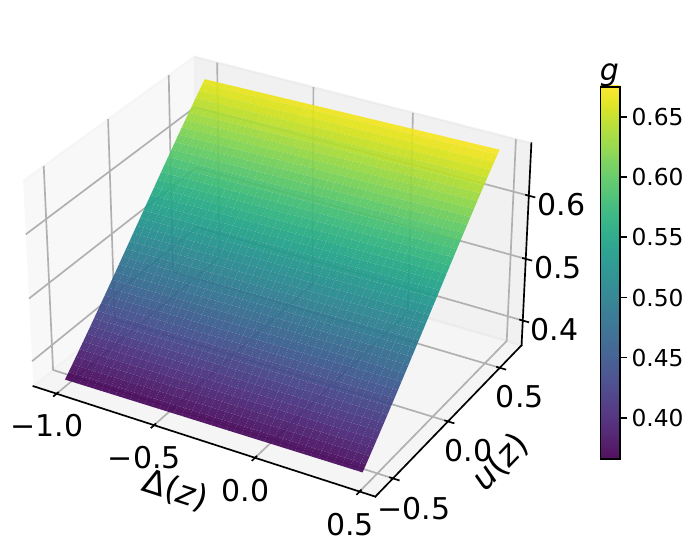} \\
\multicolumn{1}{c}{$g(z;\Theta)$, 20\%} & \multicolumn{1}{c}{$g(z;\Theta)$, 30\%} & \multicolumn{1}{c}{$g(z;\Theta)$, 40\%} \\
\addlinespace[0.35em]
\includegraphics[width=0.31\linewidth]{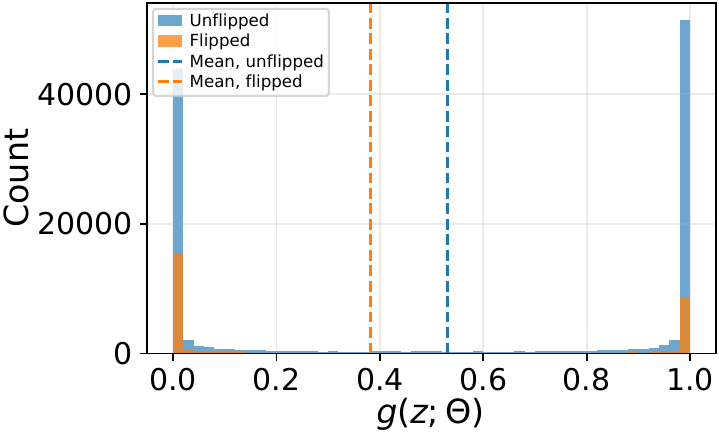} & \includegraphics[width=0.31\linewidth]{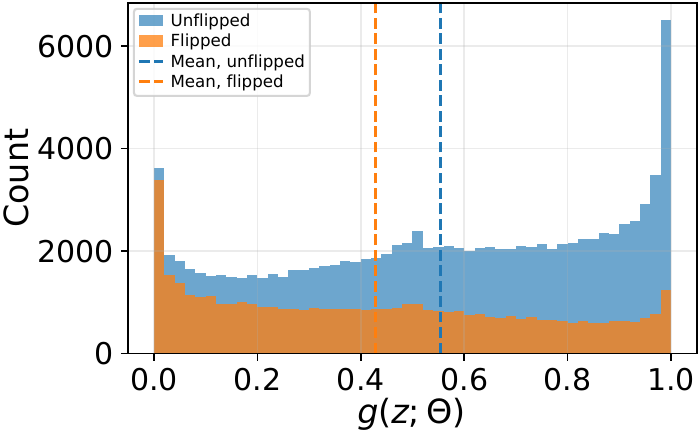} & \includegraphics[width=0.31\linewidth]{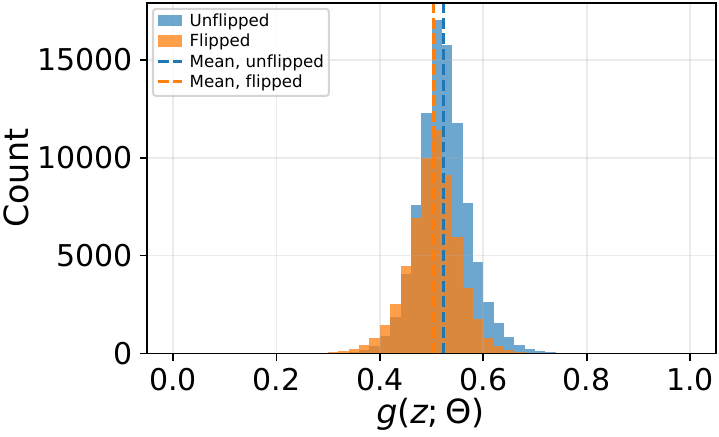} \\
\multicolumn{1}{c}{Weight distribution, 20\%} & \multicolumn{1}{c}{Weight distribution, 30\%} & \multicolumn{1}{c}{Weight distribution, 40\%} \\
\end{tabular}
\caption{VNet training dynamics on Anthropic HH at step 3000. Columns correspond to 20\%, 30\%, and 40\% random flips; rows correspond to $a(z;\Theta)$, $b(z;\Theta)$, $g(z;\Theta)$, and the learned-weight distributions of unflipped and flipped training pairs.}
\label{fig:app-hh-step3000-vnet-surfaces}
\end{figure*}
\begin{figure*}[!p]
\centering
\small
\setlength{\tabcolsep}{2pt}
\begin{tabular}{ccc}
\includegraphics[width=0.31\linewidth]{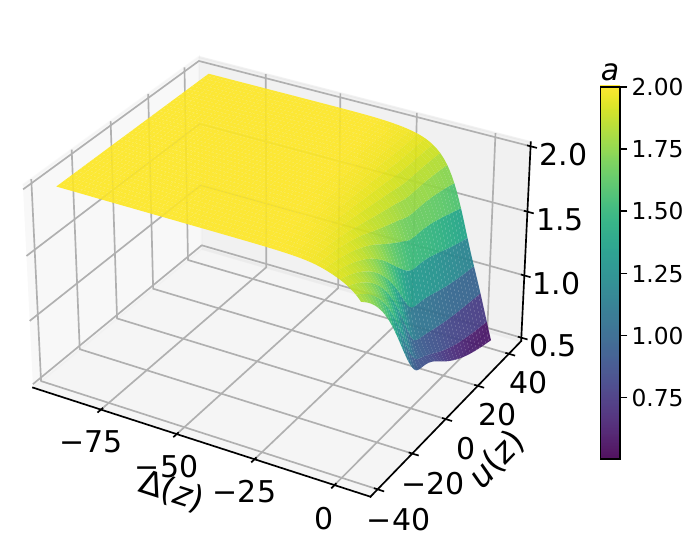} & \includegraphics[width=0.31\linewidth]{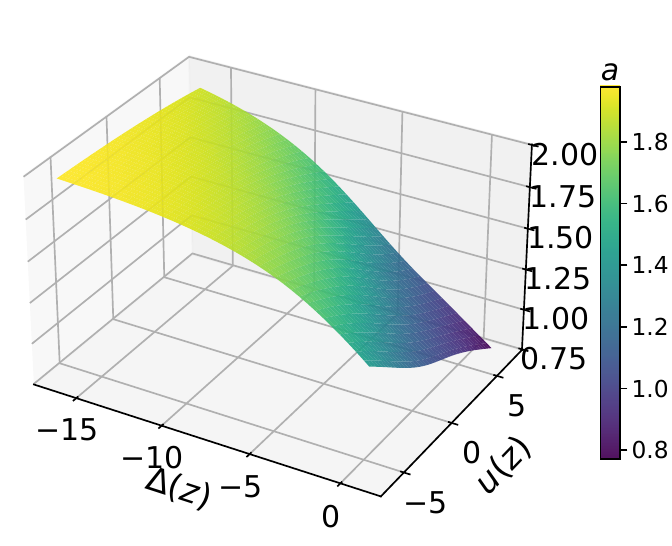} & \includegraphics[width=0.31\linewidth]{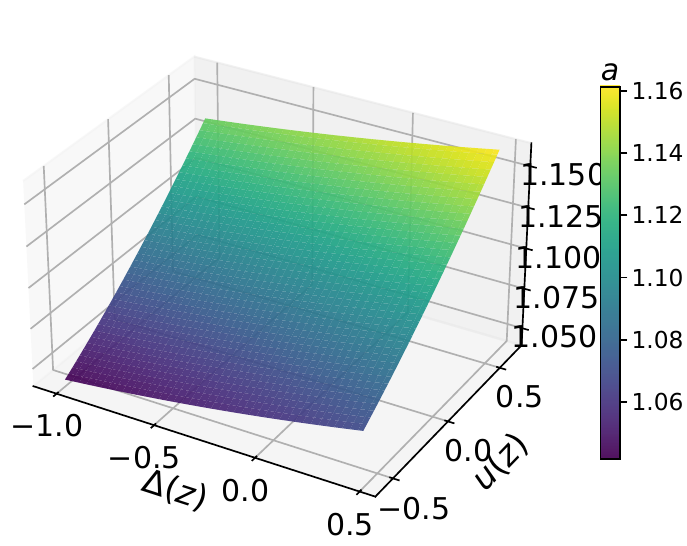} \\
\multicolumn{1}{c}{$a(z;\Theta)$, 20\%} & \multicolumn{1}{c}{$a(z;\Theta)$, 30\%} & \multicolumn{1}{c}{$a(z;\Theta)$, 40\%} \\
\addlinespace[0.35em]
\includegraphics[width=0.31\linewidth]{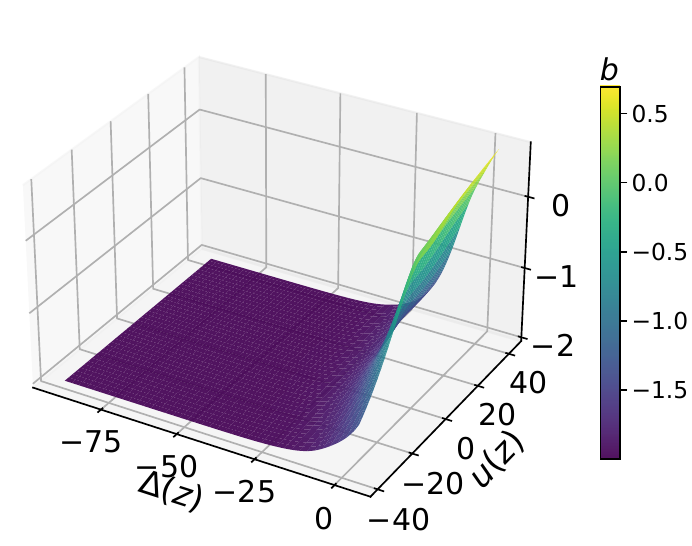} & \includegraphics[width=0.31\linewidth]{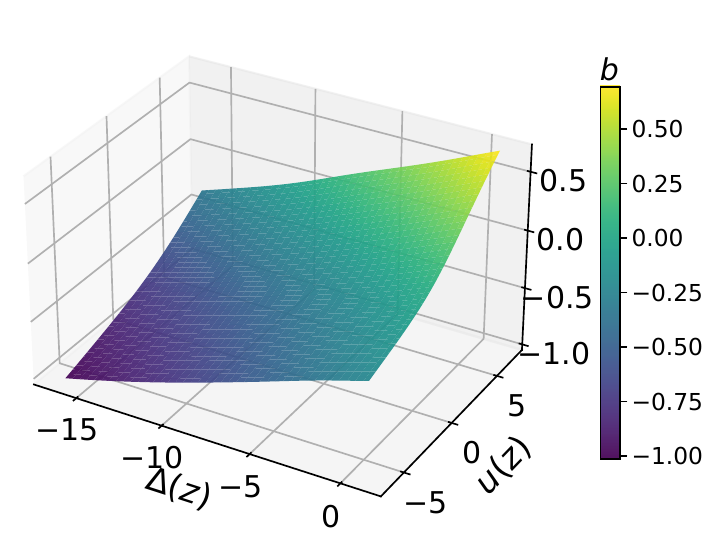} & \includegraphics[width=0.31\linewidth]{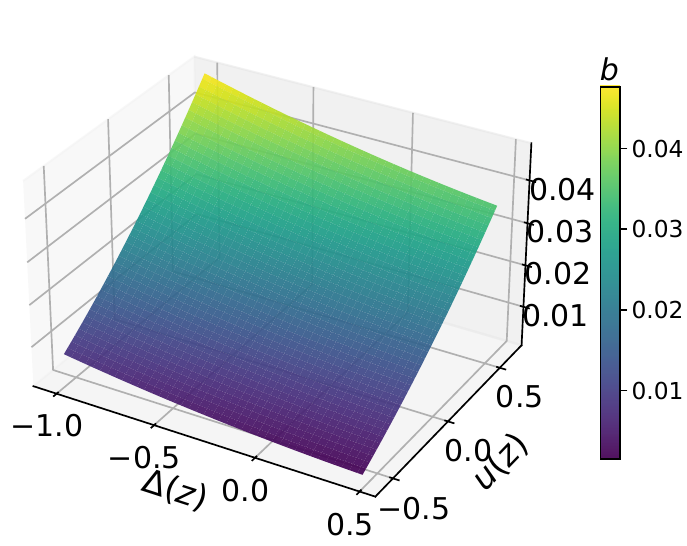} \\
\multicolumn{1}{c}{$b(z;\Theta)$, 20\%} & \multicolumn{1}{c}{$b(z;\Theta)$, 30\%} & \multicolumn{1}{c}{$b(z;\Theta)$, 40\%} \\
\addlinespace[0.35em]
\includegraphics[width=0.31\linewidth]{figures/vnet/hh_rf20_step4699_function.pdf} & \includegraphics[width=0.31\linewidth]{figures/vnet/hh_rf30_step4699_function.pdf} & \includegraphics[width=0.31\linewidth]{figures/vnet/hh_rf40_step4699_function.pdf} \\
\multicolumn{1}{c}{$g(z;\Theta)$, 20\%} & \multicolumn{1}{c}{$g(z;\Theta)$, 30\%} & \multicolumn{1}{c}{$g(z;\Theta)$, 40\%} \\
\addlinespace[0.35em]
\includegraphics[width=0.31\linewidth]{figures/vnet/hh_rf20_step4699_hist.pdf} & \includegraphics[width=0.31\linewidth]{figures/vnet/hh_rf30_step4699_hist.pdf} & \includegraphics[width=0.31\linewidth]{figures/vnet/hh_rf40_step4699_hist.pdf} \\
\multicolumn{1}{c}{Weight distribution, 20\%} & \multicolumn{1}{c}{Weight distribution, 30\%} & \multicolumn{1}{c}{Weight distribution, 40\%} \\
\end{tabular}
\caption{VNet training dynamics on Anthropic HH at step 4699. Columns correspond to 20\%, 30\%, and 40\% random flips; rows correspond to $a(z;\Theta)$, $b(z;\Theta)$, $g(z;\Theta)$, and the learned-weight distributions of unflipped and flipped training pairs.}
\label{fig:app-hh-step4699-vnet-surfaces}
\end{figure*}

\end{document}